\def\Secref#1{Section~\ref{#1}}
\def\eqref#1{equation~\ref{#1}}
\def\Eqref#1{Equation~\ref{#1}}
\def\1{\bm{1}}
\def\vv{{\bm{v}}}
\DeclareMathAlphabet{\mathsfit}{\encodingdefault}{\sfdefault}{m}{sl}
\SetMathAlphabet{\mathsfit}{bold}{\encodingdefault}{\sfdefault}{bx}{n}
\def\gF{{\mathcal{F}}}
\def\gG{{\mathcal{G}}}
\def\gH{{\mathcal{H}}}
\def\gU{{\mathcal{U}}}
\def\gX{{\mathcal{X}}}
\def\sR{{\mathbb{R}}}
\newcommand{\R}{\mathbb{R}}
\newtheorem{assumption}{Assumption}[section]
\newtheorem{theorem}{Theorem}[section]
\newtheorem{inftheorem}{Informal Theorem}[section]
\newtheorem{corollary}{Corollary}[section]
\newtheorem{lemma}[theorem]{Lemma}
\newtheorem{claim}[theorem]{Claim}
\def\R{\mathbb{R}}
\def\cF{\mathcal{F}}
\def\cG{\mathcal{G}}
\newcommand{\simplex}{\triangle}
\newcommand{\abs}[1]{\left|#1\right|}
\newcommand{\expect}{\mathbb{E}}
\newcommand{\indict}{\mathbb{I}}
\newcommand{\states}{\mathcal{S}}
\newcommand{\trans}{P}
\newcommand{\actions}{\mathcal{A}}
\newcommand{\ex}{\mathop{\mathbb{E}}\limits}
\newcommand{\ep}{\mathop{\mathbb{E}}}
\newcommand{\relu}[1]{\sigma\left(#1\right)}
\newcommand{\mdp}{\mathcal{M}}
\newcommand{\bx}{{\bf x}}
\newcommand{\bX}{{\bf X}}
\newcommand{\bs}{{\bf s}}
\newcommand{\bS}{{\bf S}}
\newcommand{\ba}{{\bf a}}
\newcommand{\bpi}{{\boldsymbol{\pi}}}
\newenvironment{itemize*}{\begin{itemize}[leftmargin=*,topsep=0pt]\setlength{\itemsep}{0pt}\setlength{\parskip}{0pt}}{\end{itemize}}
\newenvironment{enumerate*}{\begin{enumerate}[leftmargin=*,topsep=0pt]\setlength{\itemsep}{0pt}\setlength{\parskip}{0pt}}{\end{enumerate}} 
\begin{document}

\title{Provable Representation Learning for Imitation Learning via Bi-level Optimization}
\date{}
\author[1,2]{{\large Sanjeev Arora}}
\author[2]{\large Simon S. Du}
\author[3]{\large Sham Kakade}
\author[1]{\large Yuping Luo}
\author[1]{\large Nikunj Saunshi}

\affil[1]{\small Princeton University, Princeton, USA}
\affil[ ]{\texttt {\{arora, yupingl, nsaunshi\}@cs.princeton.edu}}
\affil[2]{\small Institute for Advanced Study, Princeton, USA}
\affil[ ]{\texttt {ssdu@ias.edu}}
\affil[3]{\small University of Washington, Seattle, USA}
\affil[ ]{\texttt {sham@cs.washington.edu}}
\maketitle

\begin{abstract}

A common strategy in modern learning systems is to learn a representation that is useful for many tasks, a.k.a. representation learning.
We study this strategy in the imitation learning setting for Markov decision processes (MDPs) where multiple experts' trajectories are available.
We formulate representation learning  as a bi-level optimization problem where the ``outer" optimization tries to learn the joint representation and the ``inner" optimization encodes the imitation learning setup and tries to learn task-specific parameters.
We instantiate this framework for the imitation learning settings of behavior cloning and observation-alone.
Theoretically, we show using our framework that representation learning can provide sample complexity benefits for imitation learning in both settings.
We also provide proof-of-concept experiments to verify our theory. \end{abstract}

\section{Introduction}
\label{sec:intro}

Humans can often learn from experts quickly and with a few demonstrations and we would like our artificial agents to do the same.
However, even for simple imitation learning tasks, the current state-of-the-art methods require thousand of demonstrations.
Humans do not learn new skills from scratch. 
We can summarize learned skills, distill them and build a common ground, a.k.a, representation that is useful for learning future skills.
Can we build an agent to do the same?

The current paper studies how to apply representation learning to imitation learning.
Specifically, we want our agent to be able to learn a representation from multiple experts' demonstrations, where the experts aim to solve different Markov decision processes (MDPs) that share the same state and action spaces but can differ in the transition and reward functions.
The agent can use this representation to reduce the number of demonstrations required for a new imitation learning task.
While several methods have been proposed~\citep{duan2017one,finn2017one,james2018task} to build agents that can adapt quickly to new tasks, none of them, to our knowledge, give provable guarantees showing the benefit of using past experience.
Furthermore, they do not focus on learning a representation.
See Section~\ref{sec:rel} for more discussions.

In this work, we propose a framework to formulate this problem and analyze the statistical gains of representation learning for imitation learning.
The main idea is to use bi-level optimization formulation where the ``outer" optimization tries to learn the joint representation and the ``inner" optimization encodes the imitation learning setup and tries to learn task-specific parameters.
In particular, the inner optimization is flexible enough to allow the agent to interact with the environment.
This framework allows us to do a rigorous analysis to show provable benefits of representation learning for imitation learning.
With this framework at hand, we make the following concrete contributions:
\begin{itemize*}
\item We first instantiate our framework in the setting where the agent can observe experts' actions and tries to find a policy that matches the expert's policy, a.k.a, behavior cloning.
This setting can be viewed as a straightforward extension of multi-task representation learning for supervised learning~\citep{maurer2016benefit}.
We show in this setting that with sufficient number of experts (possibly optimizing for different reward functions), the agent can learn a representation that provably reduces the sample complexity for a new target imitation learning task.
\item Next, we consider a more challenging setting where the agent \emph{cannot} observe experts' actions but only their states, a.k.a., the observation-alone setting.
We set the inner optimization as a min-max problem inspired by \cite{sun2019provably}.
Notably, this min-max problem requires the agent to interact with the environment to collect samples.
We again show that with sufficient number of experts, the agent can learn a representation that provably reduces the sample complexity for a target task where the agent cannot observe actions from source and target experts.
\item We conduct experiments in both settings to verify our theoretical insights by learning a representation from multiple tasks using our framework and testing it on a new task from the same setting.
Additionally, we use these learned representations to learn a policy in the RL setting by doing policy optimization.
We observe that by learning representations the agent can learn a good policy with fewer samples than needed to learn a policy from scratch.
\end{itemize*}
The key contribution of our work is to connect existing literature on multi-task representation learning that deals with supervised learning \citep{maurer2016benefit} to single task imitation learning methods with guarantees \citep{syed2010reduction,ross2011reduction,sun2019provably}.
To our knowledge, this is the first work showing such guarantees for general losses that are not necessarily convex.
\paragraph{Organization:} In Section~\ref{sec:rel}, we review and discuss related work.
Section~\ref{sec:pre} reviews necessary concepts and describes the basic representation learning setup.
In Section~\ref{sec:bi_opt}, we formulate representation learning for imitation learning as a bi-level optimization problem and give an overview of the kind of results we prove.
In Section~\ref{sec:meta_bc}, we show our theoretical guarantees for behavior cloning, i.e., the case when the agent can observe experts' actions.
In Section~\ref{sec:meta_oo}, we discuss our theoretical result for the observation alone setting.
In Section~\ref{sec:exp}, we present our experimental results showing the benefit of representation learning for imitation learning via our framework.
We conclude in Section~\ref{sec:con} and defer technical proofs to appendix.

\section{Related Work}
\label{sec:rel}
Representation learning has shown its great power in various domains; see \citet{bengio2013representation} for a survey.
Theoretically, \citet{maurer2016benefit} studied the benefit representation learning for sample complexity reduction in the multi-task supervised learning setting.
Recently, \citet{arora2019theoretical} analyzed the benefit of representation learning via contrastive learning.
While these papers all build representations for the agent / learner, researchers also try to build representations about the environment / physical world~\citep{wu2017Learning}.

Imitation learning can help with sample efficiency of many problems~\citep{ross2010efficient,sun2017complete,daume2009search,chang2015learning,pan2018agile}.
Most existing work consider the setting where the learner can observe expert's action.
A general strategy is use supervised learning to learn a policy that maps the state to action that matches expert's behaviors.
The most straightforward one is behavior cloning~\citep{pomerleau1991efficient}, which we also study in our paper.
More advanced approaches have also been proposed~\citep{ross2011reduction,ross2014reinforcement,sun2018truncated}.
These approaches, including behavior cloning, often enjoy sound theoretical guarantees in the single task case.
Our work extends the theoretical guarantees of behavior cloning to the multi-task representation learning setting.

This paper also considers a more challenging setting, imitation learning from observation alone.
Though some model-based methods have been proposed~\citep{torabi2018behavioral,edwards2018imitation}, these methods lack theoretical guarantees.
Another line of work learns a policy that minimizes the difference between the state distributions induced by it and the expert policy, under certain distributional metric~\citep{ho2016generative}.
\citet{sun2019provably} gave a theoretical analysis to characterize the sample complexity of this approach and our method for this setting is inspired by their approach.

A line of work uses meta-learning for imitation learning~\citep{duan2017one,finn2017one,james2018task}.
Our work is different from theirs as we want to explicitly learn a representation that is useful across all tasks whereas these work try to learn a meta-algorithm that can quickly adapt to a new task.
For example, \citet{finn2017one} used a gradient based method for adaptation.
Recently \citet{raghu2019rapid} argued that most of the power of MAML \citep{finn2017model} like approaches comes from learning a shared representation.

On the theoretical side of meta-learning and multi-task learning, \citet{baxter2000model} performed the first theoretical analysis and gave sample complexity bounds using covering numbers.
\citet{maurer2009transfer} analyzed linear representation learning, while \citet{bullins2019generalize,denevi2018incremental} provide efficient algorithms to learn linear representations that can reduce sample complexity of a new task.
Another recent line of work analyzes gradient based meta-learning methods, similar to MAML \citep{finn2017model}.
Existing work on the sample complexity and regret of these methods \citep{denevi2019learning,finn2019online,khodak2019adaptive} show guarantees for convex losses by leveraging tools from online convex optimization.
In contrast, our analysis works for arbitrary function classes and the bounds depend on the Gaussian averages of these classes.
Recent work \citep{rajeswaran2019meta} uses a bi-level optimization framework for meta-learning and improves computation aspects of meta-learning through implicit differentiation; our interest lies in the statistical aspects.
 
\section{Preliminaries}
\label{sec:pre}
\paragraph{Markov Decision Processes (MDPs):}
Let $\mdp=\left(\states, \actions, \trans, C,\nu\right)$ be an MDP, where $\states$ is the state space, $\actions$ is the finite action space with $\abs{\actions}=K$, $H\in\mathbb{Z}_+$ is the planning horizon, $\trans: \states \times \actions \rightarrow \simplex\left(\states\right)$ is the transition function, $C:\states\times\actions\rightarrow\mathbb{R}$ is the cost function and $\nu\in\simplex(S)$ is the initial state distribution.
We assume that cost is bounded by $1$, i.e. $C(s,a)\le 1,\forall s\in\states, a\in\actions$.
This is a standard regularity condition used in many theoretical reinforcement learning work.
A (stochastic) policy is defined as $\bpi=(\pi_1,\dots,\pi_H)$, where $\pi_h:\states\rightarrow\simplex(\actions)$ prescribes a distribution over action for each state at level $h\in[H]$.
For a stationary policy, we have $\pi_1=\dots=\pi_H=\pi$.
A policy $\bpi$ induces a random trajectory $s_1,a_1,s_2,a_2,\dots,s_H,a_H$ where $s_1\sim\nu, a_1\sim\pi_1(s), s_2\sim P_{s_1,a_1}$ etc.
Let $\nu_h^\bpi$ denote the distribution over $\states$ induced at level $h$ by policy $\bpi$.
The value function $V_h^\bpi:\states\rightarrow\mathbb{R}$ is defined as 
\begin{align*}
	V_h^\bpi(s_h)=\ex_{}\left[\sum_{i=h}^{H}C(s_i,a_i)\mid a_i\sim\pi_i(s_i),s_{i+1}\sim P_{s_i,a_i}\right]
\end{align*}
and the state-action function $Q_h^\bpi(s_h,a_h)$ is defined as $Q_h^\bpi(s_h,a_h)=\expect_{s_{h+1}\sim P_{s_h,a_h}}\left[V^\bpi_h(s_{h+1})\right]$.
The goal is to learn a policy $\bpi$ that minimizes the {\em expected cost} $J(\bpi)=\expect_{s_1\sim\nu}V^\bpi_1(s_1)$.
We define the Bellman operator at level $h$ for any policy $\bpi$ as $\Gamma_h^\bpi:\mathbb{R}^\states\rightarrow\mathbb{R}^\states$, where for $s\in\states$ and $g\in\mathbb{R}^\states$,
\begin{align}
	\label{eq:bellman_op}
	(\Gamma_h^{\bpi}g)(s)\coloneqq\expect_{a\sim\pi_h(s),s'\sim P_{s,a}}[g(s')]
\end{align}

\paragraph{Multi-task Imitation learning:}
We formally describe the problem we want to study.
We assume there are multiple tasks (MDPs) sampled i.i.d. from a distribution $\eta$. 
A task $\mu\sim\eta$ is an MDP $\mdp_\mu=(\states,\actions,H,P_\mu,C_\mu,\nu_\mu)$; all tasks share everything except the cost function, initial state distribution and transition function.
For simplicity of presentation, we will assume a common transition function $P$ for all tasks; proofs remain exactly the same even otherwise.
For every task $\mu$, $\bpi^*_\mu=(\pi^*_{1,\mu},\dots,\pi^*_{H,\mu})$ is an {\em expert policy} that the learner has access to in the form of trajectories induced by that policy.
The trajectories may or may not contain expert's actions.
These correspond to two settings that we discuss in more detail in Section~\ref{sec:meta_bc} and Section~\ref{sec:meta_oo}.
The distributions of states induced by this policy at different levels are denoted by $\{\nu^*_{1,\mu},\dots,\nu^*_{H,\mu}\}$ and the average state distribution as $\nu^*_\mu=\frac{1}{H}\sum\limits_{h=1}^{H}\nu^*_{h,\mu}$.
We define $V^*_{h,\mu}$ to be the value function of $\bpi^*_{\mu}$ and $J_\mu$ to be the expected cost function for task $\mu$.
We will drop the subscript $\mu$ whenever the task at hand is clear from context.
Of interest is also the special case where the expert policy $\bpi^*_\mu$ is stationary.

\paragraph{Representation learning:}
In this work, we wish to learn policies from a function class of the form $\Pi=\cF\circ\Phi$, where $\Phi\subseteq\{\phi: \states\rightarrow\R^d\mid\|\phi(s)\|_2\le R\}$ is a class of bounded norm {\em representation functions} mapping states to vectors and $\cF\subseteq\{f:\R^d\rightarrow\Delta(\actions)\}$ is a class of functions mapping state representations to distribution over actions.
We will be using linear functions, i.e. $\cF=\{x\rightarrow \verb|softmax|(Wx)\ |\ W\in\R^{K\times d}, \|W\|_F\le 1\}$. We denote a policy parametrized by $\phi\in\Phi$ and $f\in\cF$ by $\pi^{\phi,f}$, where $\pi^{\phi,f}(a|s)=f(\phi(s))_a$.
In some cases, we may also use the policy $\pi^{\phi,f}(a|s)=\indict\{a=\arg\max\limits_{a'\in A}f(\phi(s))_{a'}\}$\footnote{Break ties in any way}.
Denote $\Pi^{\phi}=\{\pi^{\phi,f}: f\in\cF\}$ to be the class of policies that use $\phi$ as the representation function.

Given demonstrations from expert policies for $T$ tasks sampled independently from $\eta$, we wish to first learn representation functions $(\hat\phi_1,\dots,\hat{\phi}_H)$ so that we can use a few demonstrations from an expert policy $\bpi^*$ for new task $\mu\sim\eta$ and learn a policy $\bpi=(\pi_1,\dots,\pi_H)$ that uses the learned representations, i.e. $\pi_h\in\Pi^{\hat\phi_h}$, such that has average cost of $\bpi$ is not too far away from $\bpi^*$.
In the case of stationary policies, we need to learn a single $\phi$ by using tasks and learn $\pi\in\Pi^{\phi}$ for a new task.
The hope is that data from multiple tasks can be used to learn a complicated function $\phi\in\Phi$ first, thus requiring only a few samples for a new task to learn a linear policy from the class $\Pi^\phi$.

\paragraph{Gaussian complexity:}
As in \cite{maurer2016benefit}, we measure the complexity of a function class $\gH\subseteq\{h:\gX\rightarrow\sR^d\}$ on a set $\bX=(X_1,\dots,X_n)\in\gX^n$ by using the following Gaussian average
\begin{align}\label{eq:gaussian_avg}
G(\gH(\bX))=\expect\left[\sup\limits_{h\in\gH}\sum\limits_{\substack{i=1\\j=1}}^{d,n}\gamma_{ij}h_i(X_j)\mid X_j\right]
\end{align}
where $\gamma_{ij}$ are independent standard normal variables.
\citet{bartlett2003rademacher} also used Gaussian averages to show some generalization bounds.
 
\section{Bi-level Optimization Framework}
\label{sec:bi_opt}
In this section we introduce our framework and give a high-level description of the conditions under which this framework gives us statistical guarantees.
Our main idea is to phrase learning representations for imitation learning as the following bi-level optimization
\begin{align}\label{eq:bi_opt}
\min\limits_{\phi\in\Phi}L(\phi)\coloneqq\ex_{\mu\sim\eta}\min\limits_{\pi\in\Pi^{\phi}}\ell^\mu(\pi)
\end{align}
Here $\ell^\mu$ is the {\em inner} loss function that penalizes $\pi$ being different from $\pi^*_\mu$ for the task $\mu$.
In general, one can use any loss $\ell^\mu$ that is used for single task imitation learning, e.g. for the behavioral cloning setting (cf. Section~\ref{sec:meta_bc}), $\ell^\mu$ is a classification like loss that penalizes the mismatch between predictions by $\pi^*$ and $\pi$, while for the observation-alone setting (cf. Section~\ref{sec:meta_oo}) it is some measure of distance between the state visitation distributions induced by $\pi$ and $\pi^*$.
The {\em outer} loss function is over the representation $\phi$.
The use of bi-level optimization framework naturally enforces policies in the inner optimization to share the same representation.

While Equation~\ref{eq:bi_opt} is formulated in terms of the distribution $\eta$, in practice we only have access to few samples for $T$ tasks; let $\bx^{(1)},\dots,\bx^{(T)}$ denote samples from tasks $\mu^{(1)},\dots,\mu^{(T)}$ sampled i.i.d. from $\eta$.
We thus learn the representation $\hat{\phi}$ by minimizing empirical version $\hat{L}$ of Equation~\ref{eq:bi_opt}.
\begin{align*}
\hat{L}(\phi)=\frac{1}{T}\sum\limits_{i=1}^T\min\limits_{\pi\in\Pi^{\phi}}\ell^{\bx^{(i)}}(\pi)=\frac{1}{T}\sum\limits_{i=1}^T\ell^{\bx^{(i)}}(\pi^{\phi,\bx^{(i)}})
\end{align*}
where $\ell^{\bx}$ is the empirical loss on samples $\bx$ and $\pi^{\phi,\bx}=\arg\min_{\pi\in\Pi^\phi}\ell^\bx(\pi)$ corresponds to a task specific policy that uses a fixed representation $\phi$.
Our goal then is to show that for a new task $\mu\sim\eta$, the learned representation can be used to learn a policy $\pi^{\hat{\phi},\bx}$ by using samples $\bx$ from the task $\mu$ that has low expected MDP cost $J_\mu$ (defined in \Secref{sec:pre})
\begin{inftheorem}
	\label{thm:informal}
With high probability over the sampling of train task data and with sufficient number of tasks and samples (expert demonstrations) per task, $\hat{\phi}=\arg\min_{\phi\in\Phi}\hat{L}(\phi)$ will satisfy
\begin{align*}
\ex_{\mu\sim\eta}\ex_{\bx}J_\mu(\pi^{\hat{\phi},\bx})-\ex_{\mu\sim\eta}J_\mu(\pi^*_\mu) \text{ is small }
\end{align*}
\end{inftheorem}

At a high level, in order to prove such a theorem for a particular choice of $\ell^\mu$, we would need to prove the following three properties about $\ell^\mu$ and $\ell^\bx$:
\begin{enumerate*}
\item $\ell^\bx(\pi)$ concentrates to $\ell^\mu(\pi)$ simultaneously for all $\pi\in\Pi^{\phi}$ (for a fixed $\phi$), with sample complexity depending on some complexity measure of $\Pi^{\phi}$ rather than being polynomial in $|\states|$;
\item if $\phi$ and $\phi'$ induce ``similar'' representations then $\min_{\pi\in\Pi^{\phi}}\ell^\mu(\pi)$ and $\min_{\pi\in\Pi^{\phi'}}\ell^\mu(\pi)$ are close;
\item a small value of $\ell^\mu(\pi)$ implies a small value for $J_\mu(\pi)-J_\mu(\pi^*_\mu)$.
\end{enumerate*} 
The first property ensures that learning a policy for a single task by fixing the representation is sample efficient, thus making representation learning a useful problem to solve.
The second property is specific to representation learning and requires $\ell^\mu$ to use representations in a smooth way.
This ensures that the empirical loss for $T$ tasks is a good estimate for the average loss on tasks sampled from $\eta$.
The third property ensures that matching the behavior of the expert as measured by the loss $\ell^\mu$ ensures low average cost i.e., $\ell^\mu$ is meaningful for the average cost; any standard imitation learning loss will satisfy this.
We {\em prove} these three properties for the cases where $\ell^\mu$ is the either behavioral cloning loss or observation-alone loss, with natural choices for the empirical loss $\ell^\bx$.
However the general proof recipe can be used for potentially many other settings and loss functions.

In the next section, we will describe representation learning for behavioral cloning as an instantiation of the above framework and describe the various components of the framework.
Furthermore we will describe the results and give a proof sketch to show how the aforementioned properties help us show our final guarantees.
The guarantees for this setting follow almost directly from results in \cite{maurer2016benefit} and \cite{ross2011reduction}.
Later in Section~\ref{sec:meta_oo} we describe the same for the observations alone setting which is more non-trivial. 
\section{Representation Learning for Behavioral Cloning}
\label{sec:meta_bc}
\paragraph{Choice of $\ell^\mu$:}
We first specify the inner loss function in the bi-level optimization framework.
In the single task setting, the goal of behavioral cloning \citep{syed2010reduction,ross2011reduction} is to use expert trajectories of the form $\tau=(s_1,a_1,\dots,s_H,a_H)$ to learn a stationary policy\footnote{We can easily extend the theory to non-stationary policies} that tries to mimic the decisions of the expert policy on the states visited by the expert.
For a task $\mu$, this reduces to a supervised classification problem that minimizes a surrogate to the following loss $\ell^\mu_{0-1}(\pi)=\expect_{s\sim\nu^*_\mu,a\sim\pi^*_\mu(s)}\indict\{\pi(s)\neq a\}$.
We abuse notation and denote this distribution over $(s,a)$ for task $\mu$ as $\mu$; so $(s,a)\sim\mu$ is the same as $s\sim\nu^*_\mu,a\sim\pi^*_\mu(s)$. 
Prior work \citep{syed2010reduction,ross2011reduction} have shown that a small value of $\ell^\mu_{0-1}(\pi)$ implies a small difference $J(\pi)-J(\pi^*)$.
Thus for our setting, we choose $\ell^\mu$ to be of the following form
\begin{align}\label{eq:ell_mu_bc}
\ell^\mu(\pi)=\ex_{s\sim\nu^*_\mu,a\sim\pi^*_\mu(s)}\ell(\pi(s),a)=\ex_{(s,a)\sim\mu}\ell(\pi(s),a)
\end{align}
where $\ell$ is any surrogate to 0-1 loss $\indict\{a\neq\arg\max\limits_{a'\in A}\pi(s)_{a'}\}$ that is  {\em Lipschitz} in $\phi(s)$.
In this work we consider the logistic loss $\ell(\pi(s),a)=-\log(\pi(s)_a)$.
\paragraph{Learning $\phi$ from samples:}
Given expert trajectories for $T$ tasks $\mu^{(1)},\dots,\mu^{(T)}$ we construct a dataset $\bX=\{\bx^{(1)},\dots,\bx^{(T)}\}$, where $\bx^{(t)}=\{(s^t_j,a^t_j)\}_{j=1}^{n}\sim(\mu^{(t)})^n$ is the dataset for task $t$.
Details of the dataset construction are provided in Section~\ref{subsec:dataset_bc}.
Let $\bS$ denote the set of states $\{s^t_j\}$.
Instantiating our framework, we learn a good representation by solving $\hat{\phi}=\arg\min\limits_{\phi\in\Phi}\hat{L}(\phi)$, where
\begin{align}\label{eq:L_hat_bc}
	\hat{L}(\phi)&\coloneqq\frac{1}{T}\sum\limits_{t=1}^{T}\min\limits_{\pi\in\Pi^{\phi}}\frac{1}{n}\sum\limits_{j=1}^n\ell(\pi(s^t_j),a^t_j)\nonumber\\
	&=\frac{1}{T}\sum\limits_{t=1}^{T}\min\limits_{\pi\in\Pi^{\phi}}\hat{\ell}^{\bx^{(t)}}(\pi)
\end{align}
where $\ell^\bx$ is loss on samples $\bx=\{(s_j,a_j)\}_{j=1}^n$ defined as $\ell^\bx(\pi)=\frac{1}{n}\sum_{j=1}^n\ell(\pi(s_j),a_j)$.
\paragraph{Evaluating representation $\hat{\phi}$:}
A learned representation $\hat{\phi}$ is tested on a new task $\mu\sim\eta$ as follows: draw samples $\bx\sim\mu^n$ using trajectories from $\pi^*_\mu$ and solve $\pi^{\hat{\phi},\bx}=\arg\min\limits_{\pi\in\Pi^{\hat{\phi}}}\hat{\ell}^\bx(\pi)$.
Does $\pi^{\hat{\phi},\bx}$ have expected cost $J_\mu(\pi^{\hat{\phi},\bx})$ not much larger than $J_\mu(\pi^*_\mu)$?
The following theorem answers this question.
We make the following two assumptions to prove the theorem.
\begin{assumption}
	\label{ass:deterministic_policy}
	The expert policy $\pi^*_\mu$ is deterministic for every $\mu\in\text{support}(\eta)$.
\end{assumption}
\begin{assumption}[Policy realizability]
	\label{ass:realizability_policy}
	There is a representation $\phi^*\in\Phi$ such that for every $\mu\in\text{support}(\eta)$, $\pi_\mu\in\Pi^{\phi^*}$ such that $\pi_\mu(s)_{\pi^*_\mu(s)}\footnote{We abuse notation and use $\pi^*_\mu(s)$ instead of $\arg\max\limits_{a\in\actions}\pi^*_\mu(s)_a$}\ge1-\gamma, \forall s\in\states$ for some $\gamma<1/2$.
\end{assumption}
The first assumption holds if $\pi^*_\mu$ is aiming to maximize some cost function.
 The second assumption is for representation learning to make sense:  we need to assume the existence of a common representation $\phi^*$ that can approximate all expert policies and $\gamma$ measures this expressiveness of $\Phi$.
Now we present our first main result about the performance of the learned representation on a new imitation learning task $\mu$, whose performance is measure by the average cost $J_\mu$.
\begin{theorem}
	\label{thm:meta_bc}
	Let $\hat{\phi}\in\arg\min\limits_{\phi\in\Phi}\hat{L}(\phi)$. Under Assumptions~\ref{ass:deterministic_policy},\ref{ass:realizability_policy}, with probability $1-\delta$ over the sampling of dataset $\bX$, we have
\begin{align*}
\ex_{\mu\sim\eta}\ex_{\bx\sim\mu^n}J_\mu(\pi^{\hat{\phi},\bx})-\ex_{\mu\sim\eta}J_\mu(\pi^*_\mu)\le H^2(2\gamma+\epsilon_{gen})
\end{align*}
where $\epsilon_{gen}=c\frac{G(\Phi(\bS))}{T\sqrt{n}}+c'\frac{R\sqrt{K}}{\sqrt{n}}+c''\sqrt{\frac{\ln(4/\delta)}{T}}$, for some small constants $c,c',c''$.
\end{theorem}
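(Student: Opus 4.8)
The plan is to chain together the three properties listed after Informal Theorem~\ref{thm:informal}: the behavioral-cloning reduction of \citet{ross2011reduction} supplies Property~3 (the $H^2$ factor), while the multi-task uniform-convergence machinery of \citet{maurer2016benefit} supplies Properties~1 and~2 (the two generalization layers). Concretely, I would first reduce the cost gap to the expected inner logistic loss of the learned task-specific policy, then peel off the new-task and representation-learning generalization errors, and finally invoke realizability to bound the resulting population bi-level objective.

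First (Property~3), fix a task $\mu$ and samples $\bx$. Since $\pi^*_\mu$ is deterministic (Assumption~\ref{ass:deterministic_policy}), for any $\pi$ the per-state total-variation mismatch against the expert equals $1-\pi(s)_{\pi^*_\mu(s)}$, which by $1-x\le-\log x$ is dominated pointwise by the logistic loss $\ell(\pi(s),\pi^*_\mu(s))$. The standard behavioral-cloning performance-difference argument, paying one factor of $H$ to sum the per-step mismatch over the horizon and a second because errors compound off the expert's state distribution, then gives
\begin{align*}
J_\mu(\pi^{\hat\phi,\bx})-J_\mu(\pi^*_\mu)\le H^2\,\ell^\mu(\pi^{\hat\phi,\bx}).
\end{align*}
Taking $\ex_{\mu}\ex_{\bx}$ reduces the theorem to bounding $\ex_{\mu}\ex_{\bx}\,\ell^\mu(\pi^{\hat\phi,\bx})$ by $2\gamma+\epsilon_{gen}$.

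Next come the two generalization layers. For the fresh task (Property~1), $\pi^{\hat\phi,\bx}$ minimizes the empirical loss $\hat\ell^\bx$ over the linear class $\Pi^{\hat\phi}$ with the representation held fixed; uniform convergence over this single class, whose only free part is a head $W$ with $\|W\|_F\le1$ acting on features of norm $\le R$ over $K$ actions, costs order $R\sqrt{K}/\sqrt n$, so in expectation $\ex_{\mu}\ex_{\bx}\,\ell^\mu(\pi^{\hat\phi,\bx})\le L(\hat\phi)+c'R\sqrt K/\sqrt n$. For the representation (Properties~1--2), since $\hat\phi$ minimizes $\hat L$ the optimizer sandwich gives $L(\hat\phi)\le L(\phi^*)+2\sup_{\phi\in\Phi}|L(\phi)-\hat L(\phi)|$, and I would split the supremum into (a) replacing each inner population loss $\min_{\pi\in\Pi^\phi}\ell^\mu(\pi)$ by its empirical counterpart uniformly over $\phi\in\Phi$, and (b) replacing $\ex_{\mu\sim\eta}$ by the average over the $T$ sampled tasks. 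Part~(b) is a McDiarmid concentration over $T$ i.i.d.\ tasks contributing $c''\sqrt{\ln(4/\delta)/T}$; part~(a), via the Lipschitzness of $\ell\circ\softmax$ in $\phi(s)$ and the Gaussian-average chain rule of \citet{maurer2016benefit}, amortizes the shared representation's complexity across tasks and contributes $c\,G(\Phi(\bS))/(T\sqrt n)$. Together with the single-task term these are exactly $\epsilon_{gen}$.

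Finally (realizability), Assumption~\ref{ass:realizability_policy} provides for every $\mu$ some $\pi_\mu\in\Pi^{\phi^*}$ with $\pi_\mu(s)_{\pi^*_\mu(s)}\ge1-\gamma$, whence $\min_{\pi\in\Pi^{\phi^*}}\ell^\mu(\pi)\le\ex_{(s,a)\sim\mu}[-\log\pi_\mu(s)_a]\le-\log(1-\gamma)\le2\gamma$ using $\gamma<1/2$; averaging over $\mu$ gives $L(\phi^*)\le2\gamma$, and combining the three bounds yields $\ex_{\mu}\ex_{\bx}\ell^\mu(\pi^{\hat\phi,\bx})\le2\gamma+\epsilon_{gen}$, which the reduction multiplies by $H^2$. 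The main obstacle is part~(a) above: the minimizing head differs between the empirical and population inner problems, so one cannot naively interchange $\min$ with $\ex$ and $\sup_\phi$. The resolution is the Maurer chain rule, which decomposes the Gaussian average of $\cF\circ\Phi$ so that $\Phi$'s complexity appears with the favorable $1/T$ amortization while each task pays only the cheap linear-head cost; making the Lipschitz constants of the composition $\ell\circ\softmax\circ\phi$ match the claimed constants $c,c',c''$ is where the remaining care is needed.
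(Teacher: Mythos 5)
Your proposal is correct and follows essentially the same route as the paper: the same Ross--Bagnell deterministic-expert reduction with the $1-x\le-\log x$ trick supplying the $H^2$ factor (the paper's Lemma~\ref{lem:j_mu_bc}), the same realizability bound $L(\phi^*)\le-\log(1-\gamma)\le2\gamma$, and the same generalization analysis---the only cosmetic difference is that you unpack the optimizer-sandwich/chain-rule/McDiarmid decomposition by hand, whereas the paper invokes it wholesale as a modified Theorem~2(i) of \citet{maurer2016benefit} (Lemma~\ref{lem:gen_bc}) and separately computes the head complexity $Q'\le\sqrt{K}$ (Lemma~\ref{lem:bound_Q}), which is exactly the $R\sqrt{K}/\sqrt{n}$ scaling you asserted.
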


To gain intuition for what the above result means, we give a PAC-style guarantee for the special case where the class of representation functions $\Phi$ is finite.
This follows directly from the above theorem and the use of Massart's lemma.
\begin{corollary}
In the same setting as Theorem~\ref{thm:meta_bc}, suppose $\Phi$ is finite. If number of tasks satisfies $T\ge c_1\max\left\{\frac{H^4R^2\log(|\Phi|)}{\epsilon^2}, \frac{H^4\ln(4/\delta)}{\epsilon^2}\right\}$, and number of samples (expert trajectories) per task satisfies $n\ge c_2\frac{H^4R^2K}{\epsilon^2}$ for small constants $c_1,c_2$, then with probability $1-\delta$,
\begin{align*}
\ex_{\mu\sim\eta}\ex_{\bx\sim\mu^n}J_\mu(\pi^{\hat{\phi},\bx})-\ex_{\mu\sim\eta}J_\mu(\pi^*_\mu)\le H^2\gamma + \epsilon
\end{align*}
\end{corollary}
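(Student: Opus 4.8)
The plan is to derive this corollary directly from Theorem~\ref{thm:meta_bc} by specializing the generalization term $\epsilon_{gen}$ to a finite class $\Phi$. The only quantity in $\epsilon_{gen}$ that depends on the structure of $\Phi$ is the Gaussian average $G(\Phi(\bS))$, so the whole task reduces to upper bounding this quantity when $|\Phi|<\infty$ and then choosing $T$ and $n$ large enough to drive each resulting term below roughly $\epsilon/H^2$.

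First I would bound $G(\Phi(\bS))$ using the standard maximal inequality for finitely many Gaussians (Massart's lemma). Writing $N=Tn$ for the total number of sampled states in $\bS$, each $\phi\in\Phi$ is identified with the vector $(\phi_i(X_j))_{i\in[d],j\in[N]}\in\R^{dN}$, whose Euclidean norm is $\big(\sum_{j}\|\phi(X_j)\|_2^2\big)^{1/2}\le R\sqrt{N}=R\sqrt{Tn}$ by the bounded-norm assumption $\|\phi(s)\|_2\le R$. Since the objects in the Gaussian average of Equation~\ref{eq:gaussian_avg} are inner products of these vectors with a standard Gaussian, the one-sided maximal inequality over $|\Phi|$ of them gives
\begin{align*}
G(\Phi(\bS))\le R\sqrt{Tn}\,\sqrt{2\log|\Phi|}.
\end{align*}
Dividing by $T\sqrt{n}$, the leading term of $\epsilon_{gen}$ collapses to $c\,R\sqrt{2\log|\Phi|}/\sqrt{T}$; crucially the $\sqrt{n}$ factors cancel, so this term depends only on $T$.

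Substituting back, $\epsilon_{gen}\le c\,R\sqrt{2\log|\Phi|}/\sqrt{T}+c'R\sqrt{K}/\sqrt{n}+c''\sqrt{\ln(4/\delta)/T}$. To obtain $H^2\epsilon_{gen}\le\epsilon$ I would force each of the three summands to be at most $\epsilon/(3H^2)$. The two $T^{-1/2}$ terms require $T=\Omega(H^4R^2\log|\Phi|/\epsilon^2)$ and $T=\Omega(H^4\ln(4/\delta)/\epsilon^2)$ respectively, which together give the stated $\max\{\cdot,\cdot\}$ condition on $T$; the $n^{-1/2}$ term requires $n=\Omega(H^4R^2K/\epsilon^2)$, matching the stated condition. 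Plugging $H^2\epsilon_{gen}\le\epsilon$ into the bound of Theorem~\ref{thm:meta_bc} then yields the claim.

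There is essentially no hard obstacle here, since the result is a direct specialization of Theorem~\ref{thm:meta_bc}; the one point requiring care is the norm computation that feeds Massart's lemma, namely recognizing that $\Phi$ acts as a finite subset of $\R^{dN}$ with $N=Tn$ and that the per-state bound $\|\phi(s)\|_2\le R$ yields the $R\sqrt{Tn}$ norm, so that the $T\sqrt{n}$ normalization in $\epsilon_{gen}$ leaves a clean $\sqrt{\log|\Phi|/T}$ dependence rather than reintroducing $n$.
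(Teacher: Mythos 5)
Your proposal is correct and is exactly the paper's intended argument: the paper proves this corollary by noting it ``follows directly from the above theorem and the use of Massart's lemma,'' i.e., by bounding $G(\Phi(\bS))\le R\sqrt{Tn}\sqrt{2\log|\Phi|}$ (viewing each $\phi\in\Phi$ as a vector in $\R^{d\,Tn}$ of norm at most $R\sqrt{Tn}$, so the $\sqrt{n}$ cancels against the $T\sqrt{n}$ normalization) and then sizing $T$ and $n$ so that $H^2\epsilon_{gen}\le\epsilon$, precisely as you do. The one cosmetic remark is that substituting into the bound $H^2(2\gamma+\epsilon_{gen})$ of Theorem~\ref{thm:meta_bc} yields $2H^2\gamma+\epsilon$ rather than $H^2\gamma+\epsilon$, but this factor-of-two mismatch is in the paper's own statement of the corollary and is not a defect of your derivation.
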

\paragraph{Discussion:}
The above bound says that as long as we have enough tasks to learn a representation from $\Phi$ and sufficient samples per task to learn a linear policy, the learned policy will have small average cost on a new task from $\eta$.
The first term $H^2\gamma$ is small if the representation class $\Phi$ is expressive enough to approximate the expert policies (see Assumption~\ref{ass:realizability_policy}).
The results says that if we have access to data from $T=O\left(\frac{H^4R^2\log(|\Phi|)}{\epsilon^2}\right)$ tasks sampled from $\eta$, we can use them to learn a representation such that for a new task we only need $n=O\left(\frac{H^4R^2K}{\epsilon^2}\right)$ samples (expert demonstrations) to learn a linear policy with good performance.
In contrast, without access to tasks, we would need $n=O\left(\max\left\{\frac{H^4R^2\log(|\Phi|)}{\epsilon^2}, \frac{H^4R^2K}{\epsilon^2}\right\}\right)$ samples from the task to learn a good policy $\pi\in\Pi$ from scratch.
Thus if the complexity of the representation function class $\Phi$ is much more than number of actions ($\log(|\Phi|) \gg K$ in this case), then multi-task representation learning might be much more sample efficient\footnote{These statements are qualitative since we are comparing upper bounds.}.
Note that the dependence of sample complexity on $H$ comes from the error propagation when going from $\ell^\mu$ to $J_\mu$; this is also observed in single task imitation learning \citep{ross2011reduction,sun2019provably}.

We give a proof sketch for Theorem~\ref{thm:meta_bc} below, while the full proof is deferred to \Secref{apdx:bc_proofs}.

\subsection{Proof sketch}\label{subsubsec:proof_sketch_bc}
The proof has two main steps.
In the first step we bound the error due to use of samples.
The policy $\pi^{\phi,\bx}$ that is learned on samples $\bx\sim\mu^n$ is evaluated on the distribution $\mu$ and the average loss incurred by representation $\phi$ across tasks is $\bar{L}(\phi)=\ex_{\mu\sim\eta}\ex_{\bx\sim\mu^n}\ell^\mu(\pi^{\phi,\bx})$.

On the other hand, if the learner had complete access to the distribution $\eta$ and distributions $\mu$ for every task, then the loss minimizer would be $\phi^*=\arg\min_{\phi\in\Phi}L(\phi)$, where $L(\phi)\coloneqq\ex_{\mu\sim\eta}\min\limits_{\pi\in\Pi^{\phi}}\ell^\mu(\pi)$.
Using results from \cite{maurer2016benefit}, we can prove the following about $\hat{\phi}$
\begin{lemma}
	\label{lem:gen_bc}
	With probability $1-\delta$ over the choice of $\bX$, $\hat{\phi}\in\arg\min\limits_{\phi\in\Phi}\hat{L}(\phi)$ satisfies
\begin{align*}
\bar{L}(\hat{\phi})\le \min\limits_{\phi\in\Phi}L(\phi)+c\frac{G(\Phi(\bS))}{T\sqrt{n}}+c'\frac{R\sqrt{K}}{\sqrt{n}}+c''\sqrt{\frac{\ln(1/\delta)}{T}}
\end{align*}
\end{lemma}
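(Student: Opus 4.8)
The plan is to follow the multi-task generalization argument of \cite{maurer2016benefit}, recast in our policy-class notation, by inserting an intermediate ``expected empirical'' quantity between the transfer risk $\bar{L}$ and the training objective $\hat{L}$. Define $\tilde{L}(\phi)\coloneqq\ex_{\mu\sim\eta}\ex_{\bx\sim\mu^n}\min_{\pi\in\Pi^\phi}\hat\ell^\bx(\pi)$, which is exactly the expectation of $\hat{L}(\phi)$ over the draw of the $T$ tasks and their samples, and note that since $\pi^{\phi,\bx}=\arg\min_{\pi\in\Pi^\phi}\hat\ell^\bx(\pi)$ we have $\tilde{L}(\phi)=\ex_\mu\ex_\bx\hat\ell^\bx(\pi^{\phi,\bx})$. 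Let $\phi^*\in\arg\min_{\phi\in\Phi}L(\phi)$. The whole proof rests on the chain
\begin{align*}
\bar{L}(\hat\phi)&\le\tilde{L}(\hat\phi)+\varepsilon_1\le\hat{L}(\hat\phi)+\varepsilon_1+\varepsilon_2\le\hat{L}(\phi^*)+\varepsilon_1+\varepsilon_2\\
&\le\tilde{L}(\phi^*)+\varepsilon_1+2\varepsilon_2\le L(\phi^*)+\varepsilon_1+2\varepsilon_2,
\end{align*}
where the middle inequality is the optimality of $\hat\phi$ for $\hat{L}$, and $\varepsilon_1,\varepsilon_2$ are the two complexity terms controlled below.

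First I would control the single-task (predictor) gap $\varepsilon_1$. For any fixed $\phi$ and task $\mu$, $\ell^\mu(\pi^{\phi,\bx})-\hat\ell^\bx(\pi^{\phi,\bx})\le\sup_{\pi\in\Pi^\phi}\left(\ell^\mu(\pi)-\hat\ell^\bx(\pi)\right)$; taking $\ex_\mu\ex_\bx$ and symmetrizing bounds $\bar{L}(\phi)-\tilde{L}(\phi)$ by the Gaussian average of the loss class $\{s\mapsto\ell(\pi(s),\cdot):\pi\in\Pi^\phi\}$ on the $n$ states. Because $\ell$ is Lipschitz in $\phi(s)$ and $\cF$ is the softmax-linear class with $\|W\|_F\le 1$ acting on representations of norm at most $R$, a vector-contraction step reduces this to the Gaussian complexity of $\{W\phi(s):\|W\|_F\le1\}$, which is $O(R\sqrt{Kn})$; dividing by $n$ gives $\varepsilon_1=O\!\left(R\sqrt{K}/\sqrt{n}\right)$. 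Crucially this bound is independent of $\phi$, hence holds simultaneously for all $\phi$ (in particular the data-dependent $\hat\phi$) as a deterministic expectation-level inequality.

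The main obstacle is $\varepsilon_2=\sup_{\phi\in\Phi}|\hat{L}(\phi)-\tilde{L}(\phi)|$, the uniform convergence of the training objective over the entire representation class, and this is where the shared structure across tasks pays off and where \cite{maurer2016benefit} do the heavy lifting. Here $\hat{L}(\phi)$ averages over $T$ tasks the functional $\min_{\pi\in\Pi^\phi}\hat\ell^{\bx^{(t)}}(\pi)$, obtained by composing the Lipschitz loss, the softmax-linear head, and a minimization over $\pi$. I would invoke Maurer's Gaussian-average chain rule: the minimization over the bounded-norm linear head and the Lipschitz loss are all contractions for the Gaussian complexity, so the complexity of this composed per-task functional is governed by $G(\Phi(\bS))$, the Gaussian average of $\Phi$ over the pooled set $\bS$ of all $Tn$ states. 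This produces the representation term $c\,G(\Phi(\bS))/(T\sqrt{n})$, while a bounded-differences (McDiarmid) argument over the $T$ independent tasks supplies the $c''\sqrt{\ln(1/\delta)/T}$ concentration term, yielding $\varepsilon_2$ with probability $1-\delta$. Verifying the Lipschitz constant of the logistic loss composed with softmax (so the contraction is legitimate) and tracking the normalization through the pooling of the $Tn$ states are the delicate bookkeeping points.

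Finally, the two remaining links are soft. For $\tilde{L}(\phi^*)\le L(\phi^*)$: replacing the empirical minimizer by the population-optimal predictor $\pi^*_{\phi^*,\mu}\coloneqq\arg\min_{\pi\in\Pi^{\phi^*}}\ell^\mu(\pi)$ only increases the inner objective, and $\ex_\bx\hat\ell^\bx(\pi^*_{\phi^*,\mu})=\ell^\mu(\pi^*_{\phi^*,\mu})$ by unbiasedness, so $\tilde{L}(\phi^*)\le\ex_\mu\min_{\pi\in\Pi^{\phi^*}}\ell^\mu(\pi)=L(\phi^*)$. Assembling the chain (the single event $\sup_\phi|\hat{L}(\phi)-\tilde{L}(\phi)|\le\varepsilon_2$ covers both the second and fourth inequalities) gives $\bar{L}(\hat\phi)\le L(\phi^*)+\varepsilon_1+2\varepsilon_2$, which after absorbing the factor into constants is exactly the claimed bound, with $\ln(1/\delta)$ entering through $\varepsilon_2$.
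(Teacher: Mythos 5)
Your proposal is correct, and at its mathematical core it is the same argument as the paper's; the difference is one of packaging. The paper's proof moves the softmax into the loss (so the head class becomes the bounded-Frobenius linear class $\cF'$), verifies that the resulting losses are $2$-Lipschitz in the right arguments, and then invokes Theorem 2(i) of \citet{maurer2016benefit} as a black box, noting only the two modifications needed (vector-valued heads, and Lipschitzness of the composition $\ell'(f'(\cdot),a)$ rather than of $f'$ itself); its sole substantive computation is Lemma~\ref{lem:bound_Q}, showing $Q'\le\sqrt{K}$, which turns Maurer's generic $Q'R/\sqrt{n}$ term into $R\sqrt{K}/\sqrt{n}$. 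You instead re-derive that cited theorem: your three-term decomposition through the expected empirical minimum $\tilde{L}$, with the second and fourth links covered by uniform convergence of $\hat{L}$ over $\Phi$ and the first by a $\phi$-independent bound on the new-task estimation error, is precisely the internal structure of Maurer's proof, and your $\varepsilon_1$ computation (vector contraction down to the Gaussian complexity $O(R\sqrt{Kn})$ of $\{W\phi(s):\|W\|_F\le 1\}$) is exactly the content of Lemma~\ref{lem:bound_Q}. Two remarks on what each route buys. The citation is shorter but obliges one to check Maurer's hypotheses transfer; your inlined version is self-contained and, notably, is the template the paper itself is forced to execute by hand in the observation-alone setting, where the inner min-max prevents citing Maurer and the uniform-convergence-over-$\Phi$ step is carried out via Slepian's lemma (Lemma~\ref{lem:gen3}) just as you sketch. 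Also, you could simplify slightly: the fourth link of your chain involves only the fixed $\phi^*$, so Hoeffding suffices there and the two-sided supremum $\varepsilon_2$ is only needed once (the paper does exactly this in the observation-alone proof). The only bookkeeping point you should make explicit is boundedness of the per-task functional for the McDiarmid step — with $\|W\|_F\le 1$ and $\|\phi(s)\|\le R$ the logistic loss is bounded by $O(R+\log K)$ — which you correctly flag as delicate and which the paper absorbs into its unspecified constants as well.
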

The proof of this lemma is provided in \Secref{apdx:bc_proofs}.

The second step of the proof is connecting the loss $\bar{L}(\phi)$ and the average cost $J_\mu$ of the policies induced by $\phi$ for tasks $\mu\sim\eta$.
This can obtained by using the connection between the surrogate 0-1 loss $\ell^\mu$ and the cost $J_\mu$ that has been established in prior work \citep{ross2011reduction,syed2010reduction}.
The following lemma uses the result for deterministic expert policies from \citet{ross2011reduction}.
\begin{lemma}
	\label{lem:j_mu_bc}
	Given a representation $\phi$ with $\bar{L}(\phi)\le\epsilon$. Let $\bx\sim\mu^n$ be samples for a new task $\mu\sim\eta$. Let $\pi^{\phi,\bx}$ be the policy learned by behavioral cloning on the samples, then under Assumption~\ref{ass:deterministic_policy}\begin{align*}
\ex_{\mu\sim\eta}\ex_{\bx\sim\mu^n}J_\mu(\pi^{\phi,\bx})-\ex_{\mu\sim\eta}J_\mu(\pi^*_\mu)\le H^2\epsilon
\end{align*}
\end{lemma}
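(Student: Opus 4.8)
The plan is to prove the bound pointwise for a fixed task $\mu$ and a fixed draw of samples $\bx$, and only take the expectation $\ex_{\mu\sim\eta}\ex_{\bx\sim\mu^n}$ at the very end. Write $\pi:=\pi^{\phi,\bx}$ for the (deterministic, $\arg\max$) policy learned by behavioral cloning. The argument factors into two reductions composed in series: from the logistic surrogate $\ell^\mu$ down to the $0$-$1$ imitation loss $\ell^\mu_{0-1}$, and from $\ell^\mu_{0-1}$ up to the excess cost $J_\mu(\pi)-J_\mu(\pi^*_\mu)$. The second reduction is the one supplied by \citet{ross2011reduction}, and it is where the factor $H^2$ is created.

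First I would relate the surrogate to the $0$-$1$ loss. By Assumption~\ref{ass:deterministic_policy} the expert is deterministic, so $\ell^\mu_{0-1}(\pi)=\ex_{s\sim\nu^*_\mu}\indict\{\pi(s)\neq\pi^*_\mu(s)\}$. At any state $s$ where $\pi$ errs, the correct action $\pi^*_\mu(s)$ is by definition not the $\arg\max$ of the predicted distribution $\pi(s)$, so its predicted probability is at most $1/2$; hence the logistic loss $-\log\pi(s)_{\pi^*_\mu(s)}$ is at least a positive absolute constant and the $0$-$1$ indicator is dominated pointwise by $\ell(\pi(s),\pi^*_\mu(s))$. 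Integrating over $s\sim\nu^*_\mu$ yields $\ell^\mu_{0-1}(\pi)\le\ell^\mu(\pi)$.

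The crux is the second reduction: for a deterministic expert, $J_\mu(\pi)-J_\mu(\pi^*_\mu)\le H^2\,\ell^\mu_{0-1}(\pi)$. I would prove this by a first-deviation coupling. Roll out $\pi$ and $\pi^*_\mu$ under shared randomness; as long as $\pi$ has agreed with the expert on every previous step the two trajectories coincide, so conditioned on not having deviated before step $h$ the state $s_h$ is distributed as $\nu^*_{h,\mu}$ and the probability of \emph{first} deviating at step $h$ is at most $\epsilon_h:=\ex_{s\sim\nu^*_{h,\mu}}\indict\{\pi(s)\neq\pi^*_\mu(s)\}$. On the event that the first deviation occurs at step $h$, the coupled trajectories incur identical cost up to step $h$, while their remaining costs differ by at most $H-h+1\le H$ because per-step costs lie in $[0,1]$. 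Summing the contributions gives
\begin{align*}
J_\mu(\pi)-J_\mu(\pi^*_\mu)\le\sum_{h=1}^{H}\epsilon_h\,(H-h+1)\le H\sum_{h=1}^{H}\epsilon_h = H^2\,\ell^\mu_{0-1}(\pi),
\end{align*}
using $\ell^\mu_{0-1}(\pi)=\tfrac1H\sum_h\epsilon_h$, which holds since $\nu^*_\mu=\tfrac1H\sum_h\nu^*_{h,\mu}$.

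Composing the two steps gives $J_\mu(\pi^{\phi,\bx})-J_\mu(\pi^*_\mu)\le H^2\ell^\mu(\pi^{\phi,\bx})$ for every $\mu$ and $\bx$. Taking $\ex_{\mu\sim\eta}\ex_{\bx\sim\mu^n}$ of both sides turns the right-hand side into $H^2\bar{L}(\phi)\le H^2\epsilon$, by the definition of $\bar{L}$ and the hypothesis, which is exactly the claimed bound. I expect the main obstacle to be this second reduction: the $0$-$1$ error is measured under the \emph{expert's} state distribution whereas cost is accrued under the \emph{learned} policy's (shifted) distribution, and controlling this compounding mismatch — rather than the routine surrogate comparison of the first step — is what forces the quadratic-in-$H$ blow-up and is the content borrowed from \citet{ross2011reduction}.
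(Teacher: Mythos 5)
Your second reduction is correct---it is, in effect, a self-contained re-derivation of Theorem 2.1 of \citet{ross2011reduction}, which the paper's proof simply cites---but your first reduction fails, and it fails exactly at the constant the lemma asserts. You evaluate the deterministic $\arg\max$ policy and claim the pointwise domination $\indict\{\pi(s)\neq\pi^*_\mu(s)\}\le\ell(\pi(s),\pi^*_\mu(s))$ on the grounds that at an erring state $\pi(s)_{\pi^*_\mu(s)}\le 1/2$. That hypothesis only gives $-\log\pi(s)_{\pi^*_\mu(s)}\ge\log 2\approx 0.69$, which does \emph{not} dominate the indicator value $1$: take $K=2$ with $\pi(s)_{\pi^*_\mu(s)}=0.49$ at every state, so that $\ell^\mu_{0\text{-}1}(\pi)=1$ while $\ell^\mu(\pi)=-\log(0.49)<1$. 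Your argument therefore only yields $\ell^\mu_{0\text{-}1}(\pi)\le\frac{1}{\log 2}\,\ell^\mu(\pi)$, and composing with your (correct) coupling step gives the weaker bound $\frac{H^2}{\log 2}\epsilon$ rather than the stated $H^2\epsilon$.

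The gap is an artifact of which policy you chose to evaluate. In the paper $\pi^{\phi,\bx}$ is the stochastic softmax policy ($\pi^{\phi,f}(a|s)=f(\phi(s))_a$), and the paper's proof bounds the probability of a \emph{sampled} mistake:
\begin{align*}
\ex_{s\sim\nu^*_\mu}\ex_{a\sim\pi^{\phi,\bx}(s)}\indict\{a\neq\pi^*_\mu(s)\}
=\ex_{s\sim\nu^*_\mu}\bigl[1-\pi^{\phi,\bx}(s)_{\pi^*_\mu(s)}\bigr]
\le\ex_{s\sim\nu^*_\mu}\bigl[-\log\pi^{\phi,\bx}(s)_{\pi^*_\mu(s)}\bigr]
=\ell^\mu(\pi^{\phi,\bx}),
\end{align*}
using $x\le-\log(1-x)$ for $x<1$, which holds pointwise with constant exactly $1$. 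The Ross--Bagnell reduction (and equally your first-deviation coupling, since the expert is deterministic by Assumption~\ref{ass:deterministic_policy}) applies verbatim with this per-state mistake probability in place of the $\arg\max$ indicator. So your overall architecture---surrogate to $0$-$1$, then $0$-$1$ to cost with the $H^2$ from distribution shift, expectations taken last---matches the paper's; the single repair needed is to evaluate the stochastic policy (or, if you insist on the $\arg\max$ policy, to accept the extra factor $1/\log 2$ in the final bound).
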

This suggests that representations with small $\bar{L}$ do well on the imitation learning tasks.
A simple implication of Assumption~\ref{ass:realizability_policy} that $\min_{\phi\in\Phi}L(\phi)\le L(\phi^*)\le\gamma$, along with the above two lemmas completes the proof.

\section{Representation Learning for Observations Alone Setting}
\label{sec:meta_oo}
Now we consider the setting where we cannot observe experts' actions but only their states.
As in \citet{sun2019provably}, we also solve a problem at each level; consider a level $h\in[H]$.
\paragraph{Choice of $\ell^\mu_h$:}
Let $\bpi^*_\mu=\{\pi^*_{1,\mu},\dots,\pi^*_{H,\mu}\}$ be the sequence of expert policies (possibly stochastic) at different levels for the task $\mu$.
Let $\nu^*_{h,\mu}$ be the distribution induced on the states at level $h$ by the expert policy $\bpi^*_\mu$.
The goal in imitation learning with observations alone \citep{sun2019provably} is to learn a policy $\bpi=(\pi_1,\dots,\pi_H)$ that matches the distributions $\nu^\pi_{h}$ with $\nu^*_{h}$ for every $h$, w.r.t. a discriminator class $\cG$\footnote{If $\cG$ contains all bounded functions, then it reduces to minimizing TV between $\nu^\pi_h$ and $\nu^*_h$.} that contains the true value functions $V^*_1,\dots,V^*_H$ and is {\em approximately} closed under the Bellman operator of $\bpi^*$.
Instead, in this work we learn $\bpi$ that matches the distributions $\pi_h\cdot\nu^*_{h}$\footnote{The sampling $s\sim\pi_h\cdot\nu^*_{h}$ is defined as sampling $s'\sim\nu^*_{h}, a\sim\pi_h(s'), s\sim P_{s',a}$.} and $\nu^*_{h+1}$ for every $h$ w.r.t. to a class $\cG\subseteq\{g:\states\rightarrow\mathbb{R},|g|_\infty\le1\}$ that contains the value functions and has a stronger Bellman operator closure property.
For every task $\mu$, $\ell^\mu_h$ is defined as
\begin{align}\label{eq:ell_mu_oo}
\ell^\mu_h(\pi)&=\max\limits_{g\in\cG}[\ex_{s\sim\nu^*_{h,\mu}}\ex_{\substack{a\sim\pi(s)\\\tilde{s}\sim P_{s,a}}}g(\tilde{s})-\ex_{\bar{s}\sim\nu^*_{h+1,\mu}} g(\bar{s})]\\\nonumber
&=\max\limits_{g\in\cG}[\ex_{s\sim\nu^*_{h,\mu}}\ex_{\substack{a\sim\mathcal{U}(\actions)\\\tilde{s}\sim P_{s,a}}}K\pi(a|s)g(\tilde{s})-\ex_{\bar{s}\sim\nu^*_{h+1,\mu}} g(\bar{s})]
\end{align}
where we rewrite $\ell^\mu_h$ by importance sampling in the second equation; this will be useful to get an empirical estimate.
While our definition of $\ell^\mu_h$ differs slightly from the one used in \citet{sun2019provably}, using similar techniques, we will show that small values for $\ell^\mu_h(\pi_h)$ for every $h\in[H]$ will ensure that the policy $\bpi=(\pi_1,\dots,\pi_H)$ will have expected cost $J_\mu(\bpi)$ close to $J_\mu(\bpi^*_\mu)$.
We abuse notation, and for a task $\mu$ we denote $\mu=(\mu_1,\dots,\mu_H)$ where $\mu_h$ is the distribution of $(s,a,\tilde{s},\bar{s})$ used in $\ell^\mu_h$; thus $(s,a,\tilde{s},\bar{s})\sim\mu_h$ is equivalent to $s\sim\nu^*_{h,\mu}, a\sim\gU(\actions), \tilde{s}\sim P_{s,a}, \bar{s}\sim\nu^*_{h+1,\mu}$.

\paragraph{Learning $\phi_h$ from samples:}
We assume, 1)  access to $2n$ expert trajectories for $T$ independent {\em train} tasks, 2) ability to reset the environment at any state $s$ and sample from the transition $P(\cdot|s,a)$ for any $a\in \actions$.
The second condition is satisfied in many problems equipped with simulators.
Using the sampled trajectories for the $T$ tasks $\{\mu^{(1)},\dots,\mu^{(T)}\}$ and doing some interaction with environment, we get the following dataset $\bX=\{\bX_1,\dots,\bX_H\}$ where $\bX_h$ is the dataset for level $h$.
Specifically, $\bX_h=\{\bx^{(1)}_h,\dots,\bx^{(T)}_H\}$ where $\bx^{(i)}_h=\{(s^i_j,a^i_j,\tilde{s}^i_j,\bar{s}^i_j)\}_{j=1}^{n}\sim(\mu^{(i)})^n$ is the dataset for task $i$ at level $h$.
Additionally we denote $\bS_h=\{s_j^i\}_{i=1,j=1}^{T,n}$ to be all the $s$-states in $\bX_h$, $\tilde{\bS}_h$ and $\bar{\bS}_h$ are similarly defined as the collections of all the $\tilde{s}$-states and $\bar{s}$-states respectively.
Details about how this dataset is constructed from expert trajectories and interactions with environment is provided in Section~\ref{subsec:dataset_oo}.
We learn the representation $\hat\phi_h=\arg\min\limits_{\phi\in\Phi} \hat L_h(\phi)$, where
\begin{align}\label{eq:oa_loss}
	\hat L_h(\phi) &= \frac{1}{T}\sum\limits_{i=1}^{T} \min\limits_{\pi\in\Pi^{\phi}}\max\limits_{g\in\cG} \frac{1}{n}\sum\limits_{j=1}^{n} [K\pi(a^i_j|s^i_j) g(\tilde{s}^i_j) - g(\bar{s}^i_j)]\nonumber\\
	&= \frac{1}{T}\sum\limits_{i=1}^{T} \min\limits_{\pi\in\Pi^{\phi}}\hat{\ell}_h^{\bx^{(i)}}(\pi)
\end{align}
where for dataset $\bx=\{(s_j,a_j,\tilde{s}_j,\bar{s}_j)\}_{j=1}^n$, $\hat{\ell}^\bx_h(\pi)\coloneqq\max\limits_{g\in\cG}\frac{1}{n}\sum\limits_{j=1}^{n} [K\pi(a_j|s_j) g(\tilde{s}_j) - g(\bar{s}_j)]$.
Note that because of the $\max$ operator over the class $\cG$, $\hat{\ell}^\bx_h$ is not an unbiased estimator of $\ell^\mu_h$ when $\bx\sim\mu_h^n$.
However we can still show generalization bounds.

\paragraph{Evaluating representations $\hat{\phi}_1,\dots,\hat{\phi}_H$:}
Learned representations are tested on a new task $\mu\sim\eta$ as follows: get samples $\bx=(\bx_1,\dots,\bx_H)$\footnote{Note that we do not need the datasets $\bx_h$ at different levels to be independent of each other} for all levels using trajectories from $\bpi^*_\mu$, where $\bx_h\sim\mu_h^n$.
For each level $h$, learn $\pi^{\hat{\phi}_h,\bx_h}=\arg\min_{\pi\in\Pi^{\hat{\phi}}}\hat{\ell}_h^{\bx_h}(\pi)$ and consider the policy $\bpi^{\hat{\phi},\bx}=(\pi^{\hat{\phi}_1,\bx_1}, \dots, \pi^{\hat{\phi}_H,\bx_H})$.
Before presenting the guarantee for $\pi^{\hat{\phi},\bx}$, we introduce a notion of {\em Bellman error} that will show up in our results.
For a policy $\bpi=(\pi_1,\dots,\pi_H)$ and an expert policy $\bpi^*=(\pi^*_1,\dots,\pi^*_H)$, we define the inherent Bellman error 
\begin{align}
	\label{eq:bellman_error}
	\epsilon^{\bpi}_{be}\coloneqq\max\limits_{h\in[H]}\max\limits_{g\in\cG}\min\limits_{g'\in\cG}\ex_{s\sim(\nu_h^*+\nu_h^{\bpi})/2}[|g'(s)-(\Gamma^{\bpi}_h g)(s)|]
\end{align}
We make the following two assumptions for the subsequent theorem.
These are standard assumptions in theoretical reinforcement learning literature.
\begin{assumption}[Value function realizability]
	\label{ass:realizability_V_oo}
	$V^*_{h,\mu}\in\gG$ for every $h\in[H]$, $\mu\in\text{support}(\eta)$.
\end{assumption}
\begin{assumption}[Policy realizability]
	\label{ass:realizibility_policy_oo}
	There are representations $\phi^*_1,\dots,\phi^*_H\in\Phi$ such that $\pi^*_{h,\mu}\in\Pi^{\phi^*_h}$  for every $h\in[H]$, $\mu\in\text{support}(\eta)$.
\end{assumption}
Now we present our main theorem for the observation-alone setting.
\begin{theorem}
	\label{thm:meta_oo}
	Let $\hat{\phi}_h\in\arg\min\limits_{\phi\in\Phi}\hat{L}_h(\phi)$. Under Assumptions~\ref{ass:realizability_V_oo},\ref{ass:realizibility_policy_oo}, with probability $1-\delta$ over sampling of $\bX=(\bX_1,\dots,\bX_H)$, we have
\begin{align*}
	\ex_{\mu\sim\eta}\ex_{\bx}&J(\bpi^{\hat{\phi},\bx})-\ex_{\mu\sim\eta}J(\bpi^*_\mu)\le\sum\limits_{h=1}^H(2H-2h+1)\epsilon_{gen,h} + O(H^2)\epsilon^{\hat{\phi}}_{be}
\end{align*}
where $\epsilon^{\hat{\phi}}_{be}=\ex_{\mu\sim\eta}\ex_{\bx}[\epsilon_{be}^{\bpi^{\hat{\phi},\bx}}]$ is the average inherent Bellman error and 
\begin{align*}
	\epsilon_{gen,h}=&c_1\frac{KG(\Phi(\bS_h))}{T\sqrt{n}}+c_2\frac{RK\sqrt{K}}{\sqrt{n}}+c_3\sqrt{\frac{\ln(H/\delta)}{T}} + c_4\ex_{\mu\sim\eta}\ex_{\bx\sim\mu^n}\left[\frac{KG(\cG(\tilde{\bs}_h))}{n}+\frac{G(\cG(\bar{\bs}_h))}{n}\right]
\end{align*}
\end{theorem}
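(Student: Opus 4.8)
The plan is to mirror the two-step structure of the behavioral cloning proof (Lemmas~\ref{lem:gen_bc} and~\ref{lem:j_mu_bc}), adapting it to the min-max inner loss and the per-level decomposition. Define the population quantities $L_h(\phi)\coloneqq\ex_{\mu\sim\eta}\min_{\pi\in\Pi^\phi}\ell^\mu_h(\pi)$ and $\bar L_h(\phi)\coloneqq\ex_{\mu\sim\eta}\ex_{\bx_h\sim\mu_h^n}\ell^\mu_h(\pi^{\phi,\bx_h})$, where $\pi^{\phi,\bx_h}=\arg\min_{\pi\in\Pi^\phi}\hat\ell^{\bx_h}_h(\pi)$ is the policy learned on samples. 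I would establish two ingredients: (i) a \emph{generalization} bound showing $\bar L_h(\hat\phi_h)\le\min_\phi L_h(\phi)+\epsilon_{gen,h}$ for every $h$ simultaneously with probability $1-\delta$; and (ii) a \emph{loss-to-cost} bound showing that for any policy $\bpi=(\pi_1,\dots,\pi_H)$, under value-function realizability, $J_\mu(\bpi)-J_\mu(\bpi^*_\mu)\le\sum_h(2H-2h+1)\ell^\mu_h(\pi_h)+O(H^2)\epsilon^{\bpi}_{be}$.

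First I would prove the generalization bound. The essential complication relative to the BC case is that $\hat\ell^{\bx}_h$ contains a maximization over the discriminator class $\cG$, so it is a \emph{biased} estimate of $\ell^\mu_h$; I would therefore split the deviation into an across-task term and a within-task term. For the across-task term I would apply the Gaussian-average chain rule of \citet{maurer2016benefit} to the composite class $\{(s,a,\tilde s,\bar s)\mapsto K\pi(a|s)g(\tilde s)-g(\bar s):\pi\in\Pi^\phi,\,g\in\cG\}$, peeling off the softmax of $\cF$ (Lipschitz in $\phi(s)$) and then the representation $\phi$, which yields the $KG(\Phi(\bS_h))/(T\sqrt n)$ and $RK\sqrt K/\sqrt n$ terms; the importance-sampling reweighting contributes the extra factors of $K$ since $K\pi(a|s)\le K$ under the uniform sampling distribution $\gU(\actions)$. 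The within-task bias from the inner $\max_{g\in\cG}$ is controlled by a one-sided uniform-deviation bound over $\cG$, giving the $KG(\cG(\tilde{\bs}_h))/n$ and $G(\cG(\bar{\bs}_h))/n$ terms. A union bound over the $H$ levels turns $\ln(1/\delta)$ into $\ln(H/\delta)$, completing $\epsilon_{gen,h}$.

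Second I would prove the loss-to-cost bound by adapting the single-task analysis of \citet{sun2019provably}. Because $V^*_{h,\mu}\in\cG$ (Assumption~\ref{ass:realizability_V_oo}), the maximization defining $\ell^\mu_h$ dominates the witness $g=V^*_{h+1}$, so $\ell^\mu_h(\pi_h)$ upper-bounds the one-step value discrepancy $\ex_{s\sim\nu^*_h}\ex_{a\sim\pi_h(s),\tilde s}V^*_{h+1}(\tilde s)-\ex_{\bar s\sim\nu^*_{h+1}}V^*_{h+1}(\bar s)$. I would then telescope the value difference across levels using the Bellman operator $\Gamma^{\bpi}_h$ of Equation~\ref{eq:bellman_op}, replacing $V^*$ by its nearest element in $\cG$ at each backup; each replacement costs at most $\epsilon^{\bpi}_{be}$, and the accumulation of one-step discrepancies along the horizon produces the coefficients $(2H-2h+1)$ and the $O(H^2)$ multiple of the inherent Bellman error.

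Finally I would combine the pieces. Policy realizability (Assumption~\ref{ass:realizibility_policy_oo}) gives $\ell^\mu_h(\pi^*_{h,\mu})=0$---since following $\pi^*_{h,\mu}$ from $\nu^*_{h,\mu}$ reproduces exactly $\nu^*_{h+1,\mu}$, making the bracket vanish for every $g$---so $\min_\phi L_h(\phi)\le L_h(\phi^*_h)=0$ and the generalization bound reads $\bar L_h(\hat\phi_h)\le\epsilon_{gen,h}$. Applying the loss-to-cost bound to $\bpi^{\hat\phi,\bx}$ and taking $\ex_{\mu}\ex_{\bx}$ converts $\ex\,\ell^\mu_h(\pi^{\hat\phi_h,\bx_h})$ into $\bar L_h(\hat\phi_h)$ and $\ex\,\epsilon^{\bpi^{\hat\phi,\bx}}_{be}$ into $\epsilon^{\hat\phi}_{be}$, giving the stated bound. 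I expect the generalization step to be the main obstacle: reconciling the biased $\max_{g}$ estimator with the Maurer-style chain rule, and correctly routing the factor-$K$ importance weights through the Gaussian-average contraction, is where the delicate work lies, whereas the loss-to-cost telescoping is a fairly direct adaptation of existing single-task arguments.
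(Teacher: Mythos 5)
Your proposal is correct and follows essentially the same route as the paper: the paper proves your ingredient (i) as Theorem~\ref{thm:gen_oo} (via Lemmas~\ref{lem:gen1}--\ref{lem:gen3}, splitting the biased min-max objective into a within-task uniform deviation and an across-task Slepian/Gaussian-average bound over $\Phi$, with a union bound over the $H$ levels) and your ingredient (ii) as Theorem~\ref{thm:obs_cost} (the \citet{sun2019provably}-style performance-difference telescoping with the $\Delta_h$ recursion producing the $(2H-2h+1)$ coefficients and the $O(H^2)\epsilon^{\bpi}_{be}$ term), then combines them exactly as you do using $\min_{\phi\in\Phi}L_h(\phi)=0$ under Assumption~\ref{ass:realizibility_policy_oo}. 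Two bookkeeping slips in your sketch, neither fatal: the within-task bias bound must be uniform over $\cF\times\cG$ jointly, not over $\cG$ alone, since the learned $\hat{f}^{\phi}_\bx$ is data-dependent (this is the paper's Lemma~\ref{lem:gen2}, and it is where the $RK\sqrt{K}/\sqrt{n}$ term actually arises, rather than from the across-task step); and in the telescoping it is the Bellman backups $\Gamma^{\bpi}_h g$ of the maximizing discriminators that must be approximated within $\cG$ at cost $\epsilon_{be}$ (Lemma~\ref{lem:delta}), not $V^*_{h+1}$ itself, which lies in $\cG$ exactly by Assumption~\ref{ass:realizability_V_oo}.
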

We again give a PAC-style guarantee for the special case where the class of representation functions $\Phi$ and value function class $\cG$ are finite.
It follows from the above theorem and Massart's lemma.
\begin{corollary}
In the setting of Theorem~\ref{thm:meta_oo}, suppose $\Phi,\cG$ are finite. If number of tasks satisfies $T\ge c_1\max\left\{\frac{H^4R^2K^2\log(|\Phi|)}{\epsilon^2}, \frac{H^4\ln(H/\delta)}{\epsilon^2}\right\}$, and number of samples (trajectories) per task satisfies $n\ge c_2\max\left\{\frac{H^4K^2\log(|\cG|)}{\epsilon^2}, \frac{H^4R^2K^3}{\epsilon^2}\right\}$ for small constants $c_1,c_2$, then with probability $1-\delta$,\[
\ex_{\mu\sim\eta}\ex_{\bx}J(\bpi^{\hat{\phi},\bx})-\ex_{\mu\sim\eta}J(\bpi^*_\mu) \le 
O(H^2)\epsilon^{\hat{\phi}}_{be} + \epsilon.
\]
\end{corollary}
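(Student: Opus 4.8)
The plan is to derive this corollary directly from Theorem~\ref{thm:meta_oo} by specializing the Gaussian averages $G(\Phi(\bS_h))$ and $G(\cG(\cdot))$ to finite classes via Massart's finite maximal inequality, and then choosing $T$ and $n$ large enough to drive each generalization term below a constant fraction of $\epsilon$. Since Theorem~\ref{thm:meta_oo} already supplies the high-probability bound, no new probabilistic argument is needed; I only need to replace the complexity measures by their finite-class values.

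First I would bound the two Gaussian complexities appearing in $\epsilon_{gen,h}$. Because $\Phi$ is a finite class of vector-valued maps with $\|\phi(s)\|_2\le R$, the process $Z_\phi=\sum_{i,j}\gamma_{ij}\phi_i(s_j)$ indexed by $\phi\in\Phi$ and ranging over the $Tn$ states in $\bS_h$ is a finite collection of centered Gaussians with variance $\sum_j\|\phi(s_j)\|_2^2\le TnR^2$, so Massart's lemma gives $G(\Phi(\bS_h))\le R\sqrt{2Tn\ln|\Phi|}$. Similarly, since every $g\in\cG$ satisfies $|g|_\infty\le 1$ and the collections $\tilde{\bs}_h,\bar{\bs}_h$ each contain $n$ states, the corresponding scalar Gaussian processes have variance at most $n$, whence $G(\cG(\tilde{\bs}_h))\le\sqrt{2n\ln|\cG|}$ and likewise for $\bar{\bs}_h$. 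These last bounds are deterministic in the states, so they survive the outer expectation $\ex_{\mu\sim\eta}\ex_{\bx\sim\mu^n}$ in $\epsilon_{gen,h}$ unchanged.

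Next I would substitute these into $\epsilon_{gen,h}$. Dividing the first bound by $T\sqrt{n}$ turns the leading term into $O(KR\sqrt{\ln|\Phi|}/\sqrt{T})$, and dividing the $\cG$-bounds by $n$ turns the last term into $O(K\sqrt{\ln|\cG|}/\sqrt{n})$, while the two terms $c_2 RK\sqrt{K}/\sqrt{n}$ and $c_3\sqrt{\ln(H/\delta)/T}$ are already explicit. Crucially none of the resulting bounds depends on $h$, so I can pull $\max_h\epsilon_{gen,h}$ out of the weighted sum and use the identity $\sum_{h=1}^H(2H-2h+1)=\sum_{k=0}^{H-1}(2k+1)=H^2$ to get $\sum_{h=1}^H(2H-2h+1)\epsilon_{gen,h}\le H^2\max_h\epsilon_{gen,h}$.

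Finally I would choose the sample sizes so that each of the four $H^2$-weighted terms is at most $\epsilon/4$. Forcing $H^2 KR\sqrt{\ln|\Phi|}/\sqrt{T}\lesssim\epsilon$ and $H^2\sqrt{\ln(H/\delta)/T}\lesssim\epsilon$ yields the stated condition $T\ge c_1\max\{H^4R^2K^2\log|\Phi|,\,H^4\ln(H/\delta)\}/\epsilon^2$, while forcing $H^2 RK\sqrt{K}/\sqrt{n}\lesssim\epsilon$ and $H^2 K\sqrt{\ln|\cG|}/\sqrt{n}\lesssim\epsilon$ yields $n\ge c_2\max\{H^4K^2\log|\cG|,\,H^4R^2K^3\}/\epsilon^2$; the $O(H^2)\epsilon^{\hat{\phi}}_{be}$ term is simply carried over from Theorem~\ref{thm:meta_oo}. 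There is no genuine obstacle here, as everything reduces to Theorem~\ref{thm:meta_oo}; the only points requiring care are tracking that $\bS_h$ contains $Tn$ states (which is what produces the $1/\sqrt{T}$ rate after the division by $T\sqrt{n}$) whereas $\tilde{\bs}_h,\bar{\bs}_h$ contain only $n$ states, and checking that the combinatorial weights telescope exactly to $H^2$ so that the $H^4$ dependence in both sample complexities comes out correctly.
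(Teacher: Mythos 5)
Your proposal is correct and follows exactly the route the paper intends: the paper derives this corollary from Theorem~\ref{thm:meta_oo} together with Massart's lemma, which is precisely your argument, with the bounds $G(\Phi(\bS_h))\le R\sqrt{2Tn\ln|\Phi|}$ (over $Tn$ states) and $G(\cG(\tilde{\bs}_h)),G(\cG(\bar{\bs}_h))\le\sqrt{2n\ln|\cG|}$ (over $n$ states) plugged into $\epsilon_{gen,h}$, the weights summed via $\sum_{h=1}^H(2H-2h+1)=H^2$, and $T,n$ chosen to make each term $O(\epsilon)$. You have in fact supplied more detail than the paper, which leaves this computation to the reader; the bookkeeping of where the $1/\sqrt{T}$ versus $1/\sqrt{n}$ rates arise and the resulting $H^4$ dependence are all handled correctly.
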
	
\paragraph{Discussion:} As in the previous section, the number of samples required for a new task after learning a representation is independent of the class $\Phi$ but depends only on the value function class $\cG$ and number of actions.
Thus representation learning is very useful when the class $\Phi$ is much more complicated than $\cG$, i.e. $R^2\log(|\Phi|)\gg \max\{\log(|\cG|),R^2K\}$.
In the above bounds, $\epsilon_{be}^{\hat{\phi}}$ is a Bellman error term.
This type of error terms occur commonly in the analysis of policy iteration type algorithms \citep{munos2005error,munos2008finite}.
We remark that unlike in \citet{sun2019provably}, our Bellman error is based on the Bellman operator of the learned policy rather than the optimal policy.
\citet{le2019batch} used a similar notion that they call {\em inherent Bellman evaluation error}.

The proof of Theorem~\ref{thm:meta_oo} follows a similar outline to that of behavioral cloning.
However we cannot use results from \cite{maurer2016benefit} directly since we are solving a min-max game for each task.
We provide the proof in \Secref{apdx:oo_proofs}.

\section{Experiments}
\label{sec:exp}

\begin{figure*}[t]
	\centering
	\includegraphics[width = .22 \textwidth]{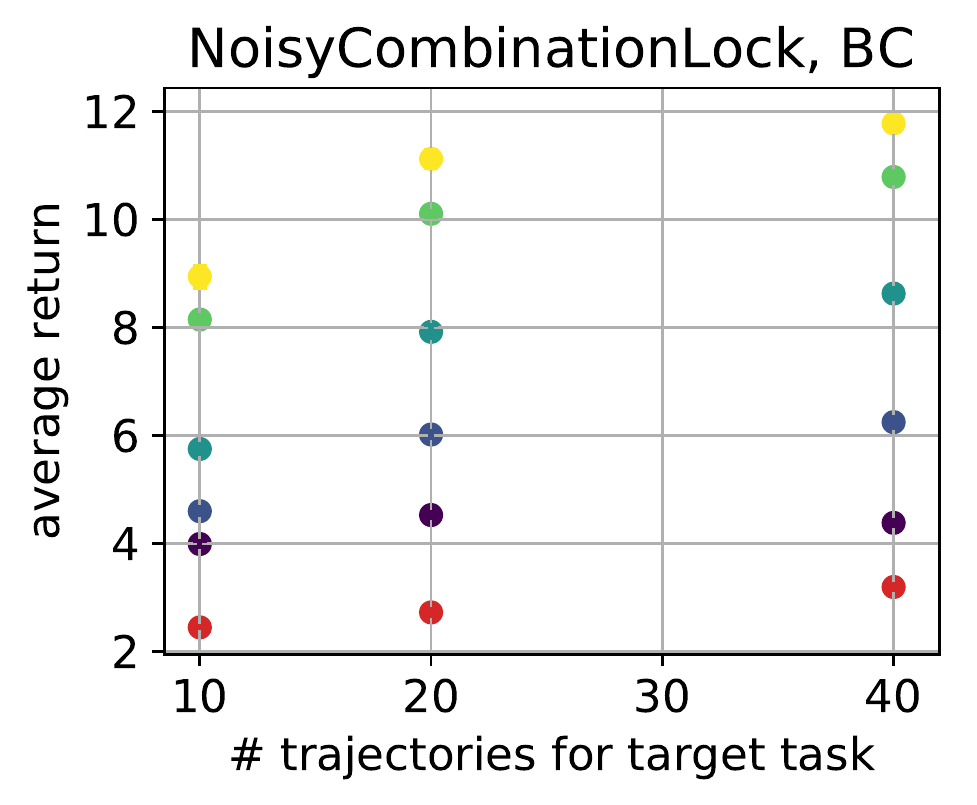}
	\quad
		\includegraphics[width = .22 \textwidth]{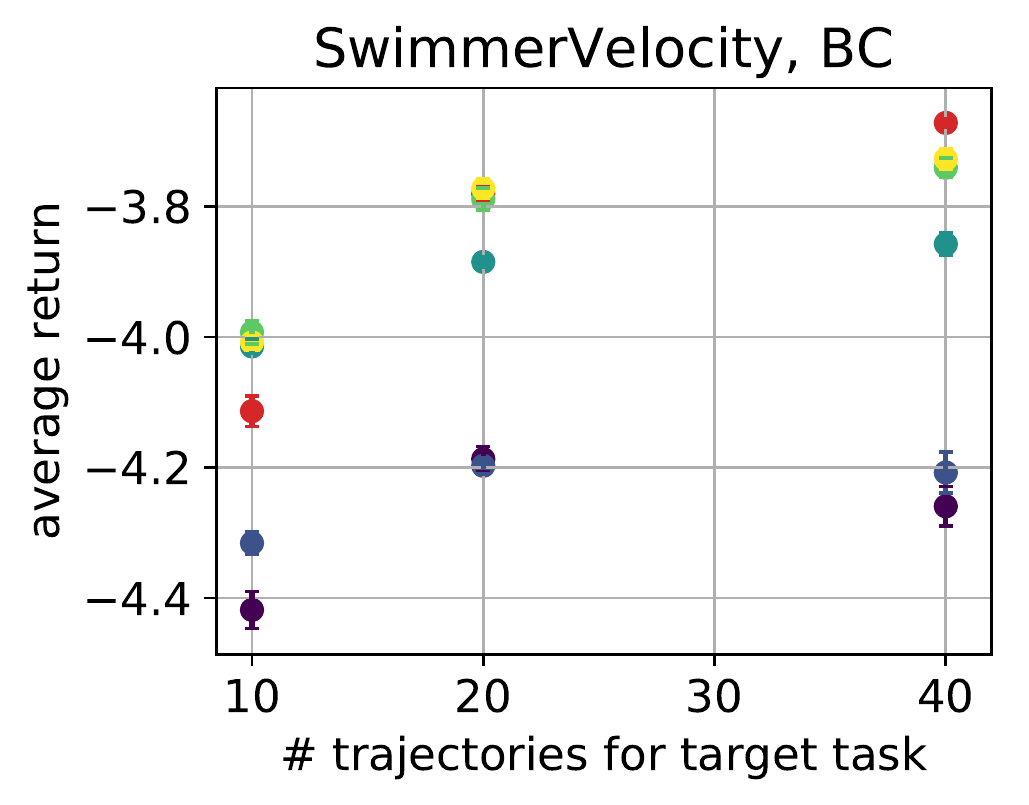}
		\quad
	\includegraphics[width = .22 \textwidth]{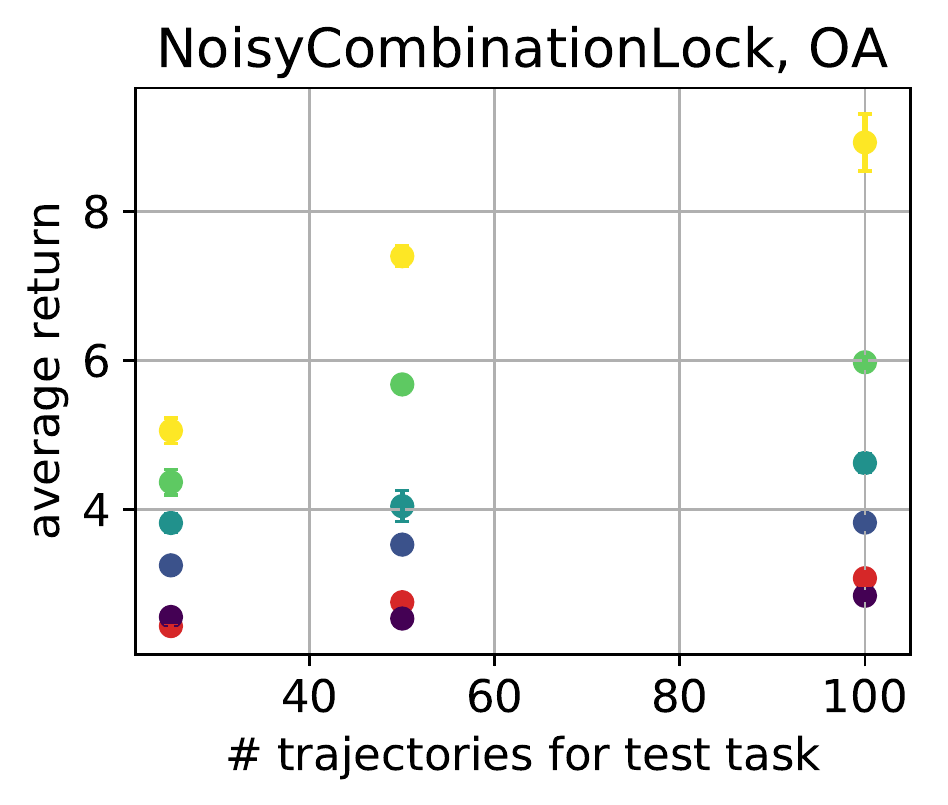}
	\quad
	\includegraphics[width = .22 \textwidth]{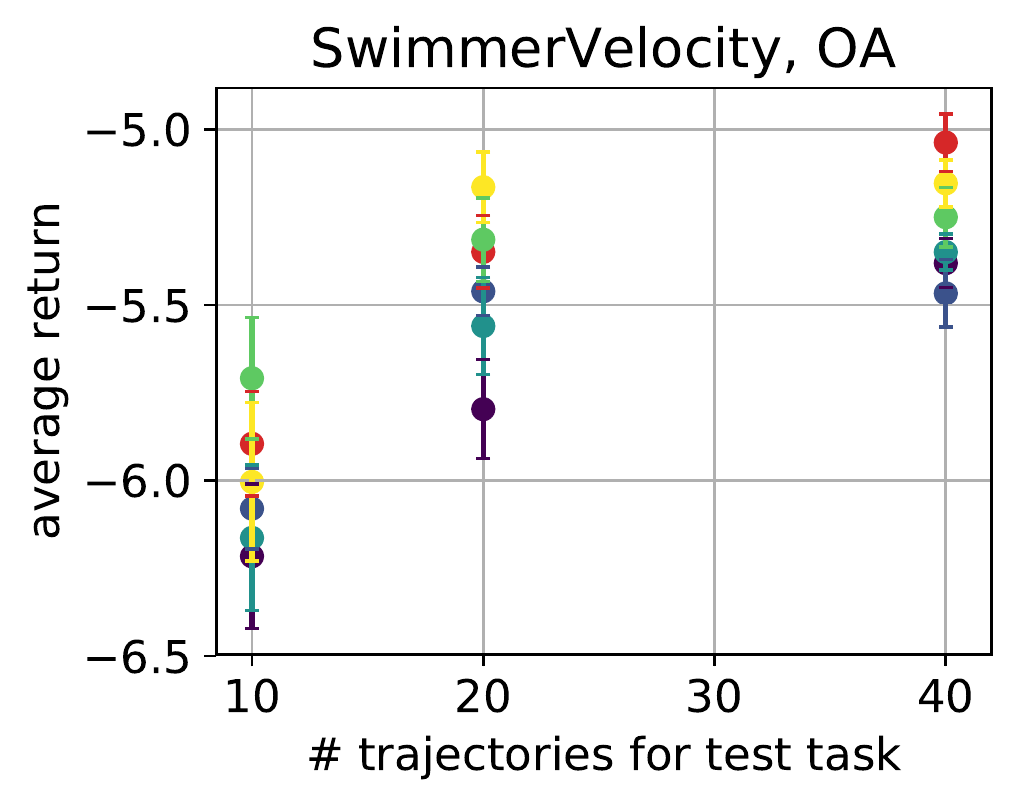}\\
	\includegraphics[width = 0.9 \textwidth]{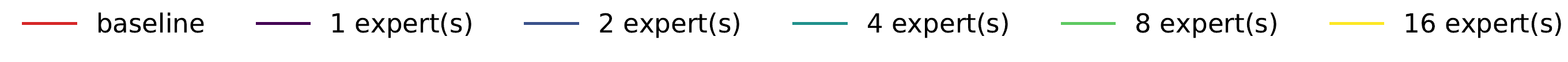}
	\caption{Experiments for verifying our theory. From left to right: Representation learning for behavioral cloning on NoisyCombinationLock,  representation learning for behavioral cloning on SwimmerVelocity,  representation learning for observations alone setting on NoisyCombinationLock,  representation learning for observations alone setting on SwimmerVelocity,
		We compare imitation learning based on learned representation using 1 - 16 experts to the baseline method (without representation learning).
The error bars are calculated using 5 seeds and indicate one standard deviation.
}
	\label{fig:theory}
\end{figure*}

\begin{figure*}[t]
	\centering
		\includegraphics[width = .23 \textwidth]{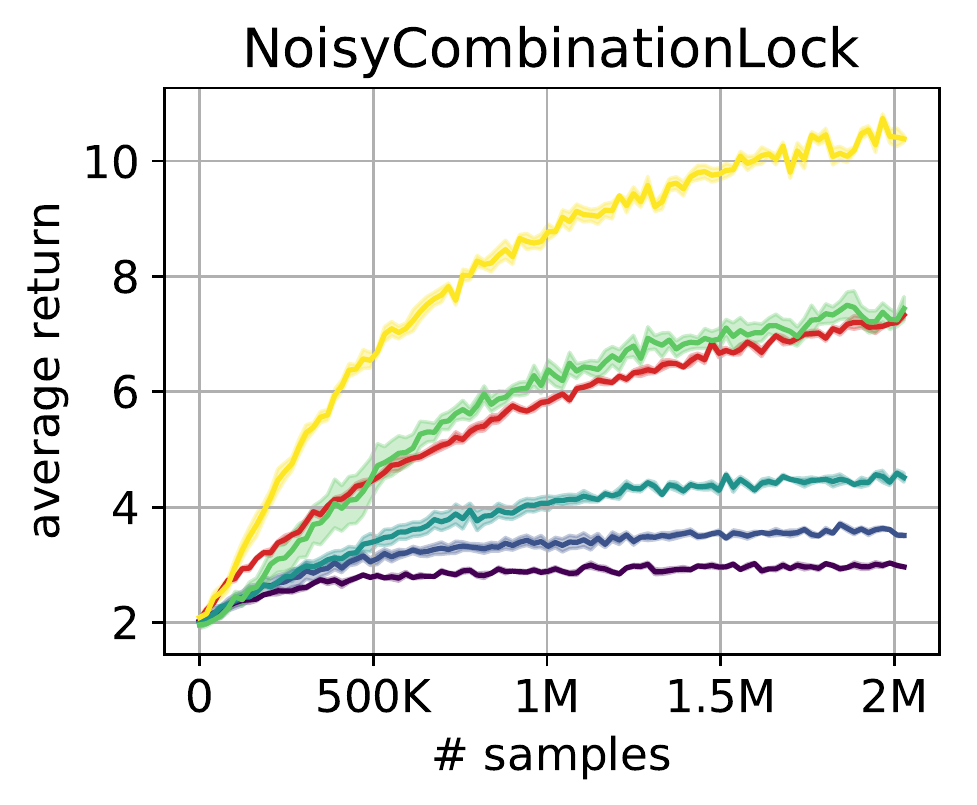}
			\quad
	\includegraphics[width = .23 \textwidth]{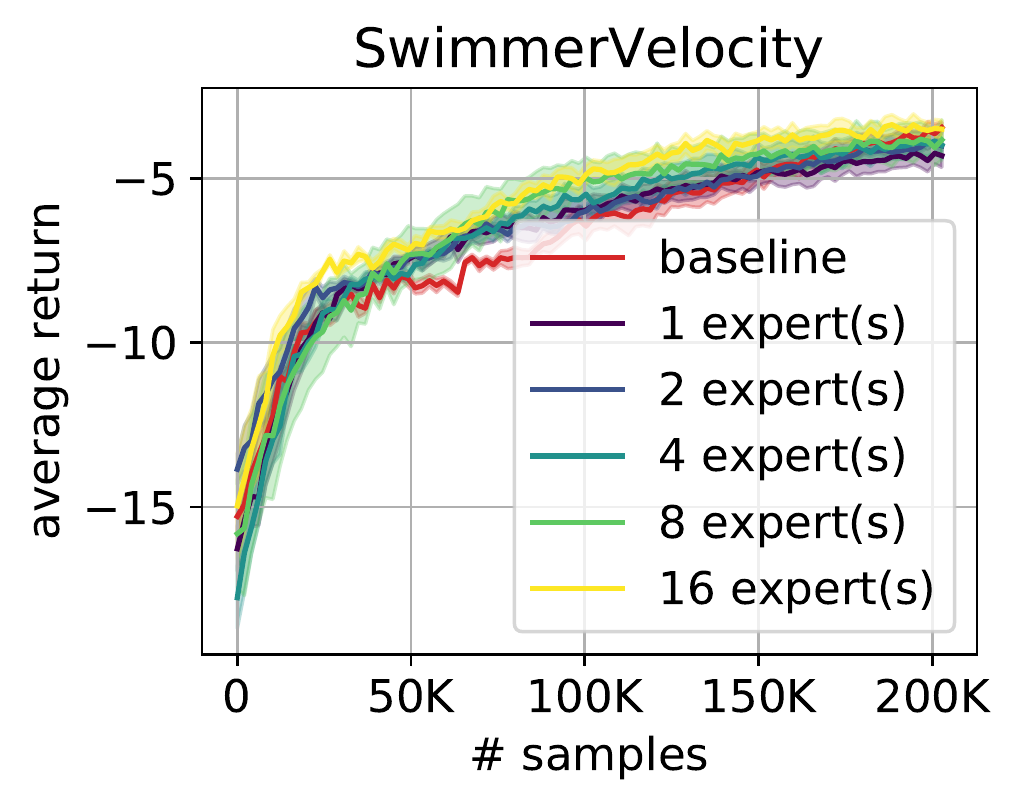}\\
	\includegraphics[width = 0.9 \textwidth]{figures/legend.pdf}
	\caption{Experiments on Policy Optimization with representation trained by imitation learning.
		Left: Results on NoisyCombinationLock. Right: Results on SwimmerVelocity.
We compare policy optimization based on learned representation using 1 - 16 experts to the baseline method (without representation learning).
	}
	\label{fig:rl}
\end{figure*}
In this section we present our experimental results.
These experiments have two aims: \begin{enumerate*}
\item  Verify our theory that representation learning can reduce the sample complexity in the new imitation learning task.
\item Test the power of representations learned via our framework in a broader context.
We wish to see if the learned representation is useful beyond imitation learning and can be used to learn a policy in the RL setting.
\end{enumerate*}
Since our goal of the experiment is to demonstrate the advantage of representation learning, we only consider the standard baseline where for a task we learn a policy $\pi$ from the class $\Pi$ from scratch (without learning a representation first using other tasks).

We conduct our experiments on two environments.
NoisyCombinationLock is a variant of the standard CombinationLock environment \citep{kakade2003sample}, we add additional noisy features to confuse the agent.
Different tasks involve different combinations for the lock.
SwimmerVelocity is a modifed environment the Swimmer environment from OpenAI gym~\citep{Gym}  with Mujoco simulator \citep{mujoco}, and this environment is similar to the one used in \citep{finn2017model}.
The goal in SwimmerVelocity is to move at a target velocity (speed and direction) and the various tasks differ in target velocities.
See Section~\ref{sec:exp_details} for more details about these two environments.

\subsection{Verification of Theory}
We first present our experimental results to verify our theory.
\paragraph{Representation learning for Behavioral Cloning}
We first test our theory on representation learning for behavioral cloning.
We learn the representation using Equation~\ref{eq:L_hat_bc} on the first $T$ tasks.
The specification of policy class and other experiment details are in Section~\ref{sec:exp_details}.

The first plot in Figure~\ref{fig:theory} shows results on the NoisyCombinationLock environment.
We observe that in NoisyCombinationLock, even one expert can help and more experts will always improve the average return.

The second plot in Figure~\ref{fig:theory} shows results on the SwimmerVelocity environment.
Again, more experts always help.
Furthermore, we observe an interesting phenomenon.
When the number of experts is small (2 or 4), the baseline method can outperform policies trained using representation learning, though the baseline method requires more samples to achieve this.
This behavior is actually expected according to our theory.
When the number of experts is small, we may learn a sub-optimal representation and because we fix this representation for training the policy, more samples for the test task cannot make this policy better, whereas more samples always make the baseline method better.

\paragraph{Representation Learning for Observations Alone Setting}
We next verify our theory for the observations alone setting.
We learn the representation using Equation~\ref{eq:oa_loss} on the first $T$ tasks.
Again, the specification of policy class and other experiment details are in Section~\ref{sec:exp_details}.

The results for NoisyCombinationLock and SwimmerVelocity are shown in the third and the fourth plots in Figure~\ref{fig:theory}, respectively.
We observe similar phenomenon as the first and the second plot.
Increasing the number of experts always help and baseline method can outperform policies trained using representation learning when the number of trajectories for the test task is large.

We remark that comparing with the behavioral cloning setting, the observations alone setting often has smaller return.
We suspect the reason is that Equation~\ref{eq:oa_loss} considers the worst case $g$ in $\cG$, thus it prefers pessimistic policies.
Also this setting does not have access to the experts actions as opposed to the behavioral cloning setting.

\subsection{Policy optimization with representations trained by imitation learning}
We test whether the learned representation via imitation learning is useful for the target \emph{reinforcement learning} problem.
We use Equation~\ref{eq:L_hat_bc} to learn representations and we use a proximal policy optimization method~\citep{PPO} to learn a linear policy over the learned representation.
See Section~\ref{sec:exp_details} for details.

The results are reported in Figure~\ref{fig:rl} and are very encouraging.
For both NoisyCombinationLock and SwimmerVelocity environments, we observe that when the number of experts to learn the representation is small, the baseline method enjoys better performance than the policies trained using representation learning.
On the other hand, as the number of experts increases, the policy trained using representation learning can outperform the baseline, sometime significantly.
This experiment suggests that representations trained via imitation learning can be useful \emph{beyond imitation learning}, especially when the target task has few samples.

\section{Conclusion}
\label{sec:con}
The current paper proposes a bi-level optimization framework to formulate and analyze representation learning for imitation learning using multiple demonstrators.
Theoretical guarantees are provided to justify the statistical benefit of representation learning.
Some preliminary experiments verify the effectiveness of the proposed framework.
In particular, in experiments, we find the representation learned via imitation learning is also useful for policy optimization in the reinforcement learning setting.
We believe it is an interesting theoretical question to explain this phenomenon.
Additionally, extending this bi-level optimization framework to incorporate methods beyond imitation learning is an interesting future direction.
Finally, while we fix the learned representation for a new task, once could instead also fine-tune the representation given samples for a new task and a theoretical analysis of this would be of interest.

\bibliography{simonduref}
\bibliographystyle{plainnat}

\newpage
\appendix
\onecolumn
\section{Proofs for Behavioral Cloning}\label{apdx:bc_proofs}
We prove Theorem~\ref{thm:meta_bc} in this section by proving Lemma~\ref{lem:gen_bc},\ref{lem:j_mu_bc}.
In this section, we abuse notation and define $\ell^\mu(\phi,f)\coloneqq\ell^\mu(\pi^{\phi,f})$, where $\ell^\mu$ is defined in Equation~\ref{eq:ell_mu_bc}.
We rewrite it here for convenience.
\begin{align*}
	\ell^\mu(\pi)=\ex_{(s,a)\sim\mu}\ell(\pi(s),a) = \ex_{(s,a)\sim\mu}-\log(\pi(s)_a)
\end{align*}
Let $\hat{f}^{\phi}_\bx=\arg\min\limits_{f\in\gF}\ell^\bx(\phi,f)$ be the optimal task specific parameter for task $\mu$ by fixing representation $\phi$.
Thus by our definitions in Section~\ref{sec:meta_bc}, we get $\pi^{\phi,\bx}=\pi^{\phi,\hat{f}^\phi_\bx}$.
We assume w.l.o.g. that $\actions=[K]$.
Remember that $\ell:\simplex(\actions)\times\actions\rightarrow\mathbb{R}$ is defined as $\ell(\vv,a)=-\log(\vv_a)$ for some $\vv\in\mathbb{R}^K$ and $\vv_a$ is the coordinate corresponding to action $a\in\actions=[K]$.
We define a new function class and loss function that will be useful for our proofs
\begin{align}
	\label{eq:F_prime}
	\gF'=\{x\rightarrow Wx\mid W\in\mathbb{R}^{K\times d},\|W\|_F\le1\}
\end{align}
\begin{align}
	\label{eq:ell_prime}
	\ell'(\vv,a)=-\log(\texttt{softmax}(\vv)_a),\vv\in\mathbb{R}^K,a\in\actions
\end{align}
We basically offloaded the burden of computing $\verb|softmax|$ from the class $\gF$ to the loss $\ell'$.
We can convert any function $f'\in\gF'$ to one in $\gF$ by transforming it to $\verb|softmax|(f')$.
We now proceed to proving the lemmas
\begin{proof}[\bf Proof of Lemma~\ref{lem:gen_bc}]
We can then rewrite the various loss functions from Section~\ref{sec:meta_bc} as follows
\begin{align*}
	\hat{L}(\phi) &= \frac{1}{T}\sum\limits_{i=1}^T\min\limits_{f'\in\gF'}\frac{1}{n}\sum\limits_{j=1}^{n}\ell'(f'(\phi(s)),a)\\
	L(\phi) &= \ex_{\mu\sim\eta}\min\limits_{f'\in\gF'}\ex_{(s,a)\sim\mu}\ell'(f'(\phi(s)),a)\\
	\bar{L}(\phi) &= \ex_{\mu\sim\eta}\ex_{\bx\sim\mu^n}\ex_{(s,a)\sim\mu}\ell'(\hat{f'}^\phi_\bx(\phi(s)),a)
\end{align*}
where $\hat{f'}^\phi_\mu\in\arg\min_{f'\in\gF'}\ell^\bx(\phi,\verb|softmax|(f'))$.
It is easy to show that both $\ell'(\cdot,a)$ $\ell'(f'(\cdot),\cdot)$ are 2-lipschitz in their arguments for every $a\in\actions$ and $f'\in\gF'$.
Using a slightly modified version of Theorem 2(i) from \citet{maurer2016benefit}, we get that for $\hat{\phi}\in\arg\min_{\phi\in\Phi}\hat{L}(\phi)$, with probability at least $1-\delta$ over the choice of $\bX$
\begin{align*}
	\bar{L}(\hat{\phi})-\min\limits_{\phi\in\Phi}L(\phi)&\le \frac{2\sqrt{2\pi}G(\Phi(\bS))}{T\sqrt{n}}+\sqrt{2\pi}Q'\sup_{\phi\in\Phi}\sqrt{\frac{\ex_{\mu\sim\eta,(s,a)\sim\mu}\|\phi(s)\|^2}{n}}+\sqrt{\frac{8\log(4/\delta)}{T}}
\end{align*}
\begin{align}
	\label{eq:maurer_bc}
	\bar{L}(\hat{\phi})-\min\limits_{\phi\in\Phi}L(\phi)\le c\frac{G(\Phi(\bS))}{T\sqrt{n}}+c'\frac{Q'R}{\sqrt{n}}+c''\sqrt{\frac{\log(4/\delta)}{T}}
\end{align}
where $Q'=\sup\limits_{y\in\mathbb{R}^{dn}\setminus\{0\}}\frac{1}{\|y\|}\expect_{}\sup\limits_{f\in\gF'}\sum\limits_{i=1,j=1}^{n,K}\gamma_{ij}f'(y_i)_j$.
First we discuss why we need a modified version of their theorem.
Our setting differs from the setting for Theorem 2 from \citet{maurer2016benefit} in the following ways
\begin{itemize*}
\item $\gF'$ is a class of vector valued function in our case, whereas in \citet{maurer2016benefit} it is assumed to contain scalar valued.
The only place in the proof of the theorem where this shows up is in the definition of $Q'$, which we have updated accordingly.
\item \citet{maurer2016benefit} assumes that $\ell'(\cdot,a)$ is 1-lipschitz for every $a\in\actions$ and that $f'(\cdot)$ is $L$ lipschitz for every $f'\in\gF'$. However the only properties that are used in the proof of Theorem 16 are that $\ell'(\cdot,a)$ is 1-lipschitz and that $\ell'(f'(\cdot),a)$ is $L$-lipschitz for every $a\in\actions$, which is exactly the property that we have. Hence their proof follows through for our setting as well.
\end{itemize*}
\begin{lemma}
	\label{lem:bound_Q}
	$Q'\coloneqq\sup\limits_{y\in\mathbb{R}^{dn}\setminus\{0\}}\frac{1}{\|y\|}\expect_{}\sup\limits_{f\in\gF'}\sum\limits_{i=1,j=1}^{n,K}\gamma_{ij}f'(y_i)_j\le\sqrt{K}$
\end{lemma}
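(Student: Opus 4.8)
The plan is to exploit the linearity of the class $\gF'$ to reduce the Gaussian average to an explicit second-moment computation. Partition the argument as $y=(y_1,\dots,y_n)$ into blocks $y_i\in\R^d$, and write $W_j\in\R^d$ for the $j$-th row of $W$, so that every $f'\in\gF'$ satisfies $f'(y_i)_j=(Wy_i)_j=\langle W_j,y_i\rangle$. After swapping the order of summation, the quantity inside the supremum becomes
\begin{align*}
\sum_{i=1,j=1}^{n,K}\gamma_{ij}f'(y_i)_j=\sum_{j=1}^{K}\Big\langle W_j,\ \sum_{i=1}^{n}\gamma_{ij}y_i\Big\rangle=\sum_{j=1}^{K}\langle W_j,v_j\rangle,
\end{align*}
where I set $v_j\coloneqq\sum_{i=1}^{n}\gamma_{ij}y_i\in\R^d$. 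This collapses the dependence on $W$ into a single Frobenius inner product $\langle W,V\rangle$ between $W$ and the matrix $V$ whose rows are the $v_j$.

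Next I would evaluate the supremum over $\{W:\|W\|_F\le1\}$ in closed form. Since $\|W\|_F^2=\sum_j\|W_j\|^2$ and $\sum_j\langle W_j,v_j\rangle=\langle W,V\rangle$, Cauchy--Schwarz for the Frobenius inner product gives
\begin{align*}
\sup_{f'\in\gF'}\sum_{i=1,j=1}^{n,K}\gamma_{ij}f'(y_i)_j=\sup_{\|W\|_F\le1}\langle W,V\rangle=\|V\|_F=\sqrt{\sum_{j=1}^{K}\|v_j\|^2},
\end{align*}
the optimum being attained at $W=V/\|V\|_F$.

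It then remains to bound the expectation of this square root over the Gaussians $\gamma_{ij}$. Because the square root is concave, Jensen's inequality pulls the expectation inside:
\begin{align*}
\expect\sqrt{\sum_{j=1}^{K}\|v_j\|^2}\le\sqrt{\expect\sum_{j=1}^{K}\|v_j\|^2}.
\end{align*}
For each fixed $j$, independence and $\expect[\gamma_{ij}\gamma_{i'j}]=\indict\{i=i'\}$ give $\expect\|v_j\|^2=\sum_{i}\|y_i\|^2$, so the total is $\sum_{j=1}^{K}\sum_i\|y_i\|^2=K\|y\|^2$. Hence the expectation is at most $\sqrt{K}\,\|y\|$; dividing by $\|y\|$ and taking the supremum over $y\in\R^{dn}\setminus\{0\}$ yields $Q'\le\sqrt{K}$.

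I do not anticipate a serious obstacle here: the argument is entirely elementary once the summation is reorganized to expose the Frobenius structure. The only points requiring care are applying Jensen in the correct (concave) direction and correctly evaluating the diagonal Gaussian second moment $\expect\|v_j\|^2=\sum_i\|y_i\|^2$; the Cauchy--Schwarz step and the verification that the optimizer $W=V/\|V\|_F$ is feasible are routine.
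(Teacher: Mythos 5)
Your proposal is correct and follows essentially the same route as the paper's own proof: reorganize the double sum into Frobenius form $\sum_j\langle W_j, \sum_i\gamma_{ij}y_i\rangle$, evaluate the supremum over $\|W\|_F\le 1$ exactly via Cauchy--Schwarz to get $\sqrt{\sum_j\|v_j\|^2}$, apply Jensen's inequality to move the expectation inside the square root, and compute the Gaussian second moments to obtain $K\|y\|^2$. All steps, including the direction of Jensen and the diagonal moment calculation, match the paper's argument.
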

\begin{proof}
\begin{align*}
	Q'&\coloneqq\sup\limits_{y\in\mathbb{R}^{dn}\setminus\{0\}}\frac{1}{\|y\|}\expect_{}\sup\limits_{f\in\gF'}\sum\limits_{i=1,j=1}^{n,K}\gamma_{ij}f'(y_i)_j\\
	&=\sup\limits_{y\in\mathbb{R}^{dn}\setminus\{0\}}\frac{1}{\|y\|}\expect_{}\sup\limits_{\|W\|_F\le1}\sum\limits_{i=1,j=1}^{n,K}\gamma_{ij}\langle W_j,y_i \rangle\\
	&=\sup\limits_{y\in\mathbb{R}^{dn}\setminus\{0\}}\frac{1}{\|y\|}\expect_{}\sup\limits_{\|W\|_F\le1}\sum\limits_{j=1}^{K}\langle W_j,\sum\limits_{i=1}^{n}\gamma_{ij}y_i \rangle\\
	&=^{(a)}\sup\limits_{y\in\mathbb{R}^{dn}\setminus\{0\}}\frac{1}{\|y\|}\expect_{}\sqrt{\sum\limits_{j=1}^{K}\left\|\sum\limits_{i=1}^{n}\gamma_{ij}y_i\right\|^2}\\
	&\le^{(b)}\sup\limits_{y\in\mathbb{R}^{dn}\setminus\{0\}}\frac{1}{\|y\|}\sqrt{\sum\limits_{j=1}^{K}\expect_{}\left\|\sum\limits_{i=1}^{n}\gamma_{ij}y_i\right\|^2}
	= \sup\limits_{y\in\mathbb{R}^{dn}\setminus\{0\}}\frac{1}{\|y\|}\sqrt{\sum\limits_{j=1}^{K}\expect_{}\left[\sum\limits_{i=1}^n\sum\limits_{i'=1}^{n}\gamma_{ij}\gamma_{i'j}\langle y_i, y_{i'}\rangle\right]}\\
	&=^{(c)}\sup\limits_{y\in\mathbb{R}^{dn}\setminus\{0\}}\frac{1}{\|y\|}\sqrt{\sum\limits_{j=1}^{K}\sum\limits_{i=1}^{n}\|y_i\|^2}=\frac{1}{\|y\|}\sqrt{K\|y\|^2}=\sqrt{K}
\end{align*}
where we use Jensen's inequality and linearity of expectation for $(b)$ and properties of standard normal gaussian variables for $(c)$.
For $(a)$ we observe that $\sup_{\|W\|_F\le1}\sum_{j=1}^{K}\langle W_j,A_j\rangle = \sup_{\|W\|_F\le1}\langle W,A\rangle = \|A\|_F = \sum_{j=1}^K \|A_j\|^2$.
\end{proof}
Plugging in Lemma~\ref{lem:bound_Q} into Equation~\ref{eq:maurer_bc} completes the proof.
\end{proof}

We now proceed to prove the next lemma.
\begin{proof}[\bf Proof of Lemma~\ref{lem:j_mu_bc}]
Suppose $\bar{L}(\phi)=\ex_{\mu\sim\eta}\ex_{\bx\sim\mu^n}\ell^\mu(\pi^{\phi,\bx})\le\epsilon$.
Consider a task $\mu\sim\eta$ and samples $\bx\sim\mu^n$ and let $\epsilon_\mu(\bx)=\ell^\mu(\pi^{\phi,\bx})$ so that $\bar{L}(\phi)=\ex_{\mu\sim\eta}\ex_{\bx\sim\mu^n}\epsilon_\mu(\bx)$.
Since $\pi^*_\mu$ is deterministic, we get
\begin{align*}
	\ex_{s\sim\nu^*_\mu}\ex_{a\sim\pi^{\phi,\bx}}\indict\{a\neq\pi^*_\mu(s)\}&=\ex_{s\sim\nu^*_\mu}[1-\pi^{\phi,\bx}(s)_{\pi^*_\mu(s)}]\\
	&\le\ex_{s\sim\nu^*_\mu}[-\log(1-(1-\pi^{\phi,\bx}(s)_{\pi^*_\mu(s)}))]\\
	&=\ex_{s\sim\nu^*_\mu}[-\log(\pi^{\phi,\bx}(s)_{\pi^*_\mu(s)})]=\epsilon_\mu(\bx)
\end{align*}
where we use the fact that $x\le-\log(1-x)$ for $x<1$. for the first inequality.
Thus by using Theorem 2.1 from \citet{ross2011reduction}, we get that $J_\mu(\pi^{\phi,\bx})-J_\mu(\pi^*)\le H^2\epsilon_\mu(\bx)$.
Taking expectation w.r.t. $\mu\sim\eta$ and $\bx\sim\mu^n$ completes the proof.
\end{proof}

\begin{proof}[\bf Proof of Theorem~\ref{thm:meta_bc}]
By using Assumption~\ref{ass:realizability_policy}, we are guaranteed the existence of $\pi_\mu\in\Pi^{\phi^*}$ such that $\pi_\mu(s)_{\phi^*_\mu(s)}\ge1-\gamma$ for every $s\in\states$.
Thus we can get an upper bound on $L(\phi^)$
\begin{align*}
	L(\phi^*)&=\ex_{\mu\sim\eta}\min\limits_{\pi\in\Pi^{\phi^*}}\ex_{s\sim\nu^*_\mu}-\log(\pi(s)_{\pi^*_\mu(s)})\\
	&\le\ex_{\mu\sim\eta}\ex_{s\sim\nu^*_\mu}-\log(\pi_\mu(s)_{\pi^*_\mu(s)})\\
	&\le\ex_{\mu\sim\eta}\ex_{s\sim\nu^*_\mu}-\log(1-\gamma)\le2\gamma
\end{align*}
where in the last step we used $-\log(1-x)\le 2x$ for $x<1/2$.
Hence from Lemma~\ref{lem:gen_bc} we get $\bar{L}(\hat{\phi})\le 2\gamma+\epsilon_{gen,h}$, which combining with Lemma~\ref{lem:j_mu_bc} gives the desired result.
\end{proof}

\section{Proofs for Observation-Alone}\label{apdx:oo_proofs}
Before proving Theorem~\ref{thm:meta_oo}, we introduce the following loss functions, as we did in the proof sketch for the behavioral cloning setting.
We again abuse notation and define $\ell^\mu(\phi,f)\coloneqq\ell^\mu(\pi^{\phi,f})$, where $\ell^\mu$ is defined in Equation~\ref{eq:ell_mu_oo}.
Let $\hat{f}^{\phi}_\bx=\arg\min\limits_{f\in\gF}\ell^\bx(\phi,f)$ be the optimal task specific parameter for task $\mu$ by fixing representation $\phi$.
As before, we define the following
\begin{align*}
\bar{L}_h(\phi_h)=\ex_{\mu\sim\eta}\ex_{\bx\sim\mu_h^n}\ell^{\mu}_h(\phi,\hat{f}_\bx^{\phi_h})
\end{align*}
We first show a guarantee on the performance of representations $(\hat{\phi}_1,\dots,\hat{\phi}_H)$ as measured by the functions $\bar{L}_1,\dots,\bar{L}_H$.
\begin{theorem}\label{thm:gen_oo}
With probability at least $1-\delta$ in the draw of $\bX=(\bX^{(1)},\dots,\bX^{(H)})$, $\forall h\in[H]$
\begin{align*}
\bar{L}_h(\hat{\phi}_h)\le \min\limits_{\phi\in\Phi}L_h(\phi)+c\epsilon_{gen,h}(\Phi)+c'\epsilon_{gen,h}(\cF,\cG)+ c''\sqrt{\frac{\ln(H/\delta)}{T}}
\end{align*}
where $\epsilon_{gen,h}(\Phi)=\frac{KG(\Phi(\bS_h))}{T\sqrt{n}}$ and $\epsilon_{gen,h}(\cF,\cG)=\ex_{\mu\sim\eta}\ex_{\bx\sim\mu^n}\left[\frac{KG(\cG(\tilde{\bs}_h))}{n}+\frac{G(\cG(\bar{\bs}_h))}{n}\right]+\frac{RK\sqrt{K}}{\sqrt{n}}$
\end{theorem}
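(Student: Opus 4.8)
The plan is to reprove the multi-task generalization bound behind Lemma~\ref{lem:gen_bc} (Theorem~2 of \citet{maurer2016benefit}) while carrying along the adversarial $\max_{g\in\cG}$ that sits inside each task's inner loss. Fix a level $h$ and let $\phi^\star\in\arg\min_{\phi\in\Phi}L_h(\phi)$. I would write
\[
\bar L_h(\hat\phi_h)-\min_\phi L_h(\phi)=\underbrace{[\bar L_h(\hat\phi_h)-\hat L_h(\hat\phi_h)]}_{(\mathrm{I})}+\underbrace{[\hat L_h(\hat\phi_h)-\hat L_h(\phi^\star)]}_{(\mathrm{II})}+\underbrace{[\hat L_h(\phi^\star)-L_h(\phi^\star)]}_{(\mathrm{III})},
\]
where $(\mathrm{II})\le0$ by optimality of $\hat\phi_h$ for $\hat L_h$. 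Both $(\mathrm{I})$ and $(\mathrm{III})$ split further into a \emph{within-task} gap (replacing $\hat\ell^\bx_h$ by the population inner loss $\ell^\mu_h$ per task) and an \emph{across-task} gap (replacing the $T$-task average by $\ex_{\mu\sim\eta}$). A union bound over the $H$ levels at the very end converts $\ln(1/\delta)$ into $\ln(H/\delta)$, explaining the last summand.

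The central new obstacle, and the reason \citet{maurer2016benefit} cannot be cited verbatim, is that $\hat\ell^\bx_h$ is a \emph{biased} estimate of $\ell^\mu_h$: since $\max$ is convex, Jensen gives $\ex_\bx\hat\ell^\bx_h(\pi)\ge\ell^\mu_h(\pi)$. I would tame this with the elementary fact that $\max_g F(g)-\max_g\hat F(g)\le\sup_{g\in\cG}[F(g)-\hat F(g)]$, which for each fixed $\pi$ bounds the within-task gap by
\[
\ell^\mu_h(\pi)-\hat\ell^\bx_h(\pi)\le\sup_{g\in\cG}\Big[\ex\big(K\pi(a|s)g(\tilde s)-g(\bar s)\big)-\frac1n\sum_{j}\big(K\pi(a_j|s_j)g(\tilde s_j)-g(\bar s_j)\big)\Big].
\]
Splitting the two additive pieces and bounding each by a standard symmetrization/Gaussian-average argument (using $|g|_\infty\le1$ and the weight $K\pi(a|s)\le K$) yields precisely the discriminator contributions $KG(\cG(\tilde{\bs}_h))/n$ and $G(\cG(\bar{\bs}_h))/n$ in $\epsilon_{gen,h}(\cF,\cG)$.

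The decoupling that makes representation learning pay off is that, for a \emph{fixed} $\phi$, the within-task supremum over $\pi\in\Pi^\phi$ only sees the linear top layer acting on the fixed points $\{\phi(s)\}$ of norm $\le R$; its Gaussian average is controlled by $R\sqrt K$ exactly as in Lemma~\ref{lem:bound_Q}, and the importance weight inflates this to $RK\sqrt K$, giving the last term of $\epsilon_{gen,h}(\cF,\cG)$. Crucially this bound is independent of the identity (hence the complexity) of $\phi$, so taking $\sup_\phi$ is free and $\Phi$ is never charged at the per-task $1/\sqrt n$ rate. The genuine across-task term $\ex_{\mu,\bx}\ell^\mu_h(\pi^{\phi,\bx})-\frac1T\sum_i\ell^{\mu^{(i)}}_h(\pi^{\phi,\bx^{(i)}})$, which must be made uniform over $\phi$, is where I would rerun Maurer's Gaussian-complexity chaining: the loss $K\pi(a|s)g(\tilde s)-g(\bar s)$ is $O(K)$-Lipschitz in $\phi(s)$ (softmax is $O(1)$-Lipschitz, $|g|\le1$, and the weight adds a factor $K$), so the contraction peels $\Phi$ off the top layer and charges it only $KG(\Phi(\bS_h))/(T\sqrt n)$ — a rate of order $1/\sqrt T$ that improves with the number of tasks. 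Term $(\mathrm{III})$ is handled by the same two reductions after bounding $\hat L_h(\phi^\star)\le\frac1T\sum_i\hat\ell^{\bx^{(i)}}_h(\pi^{\star,i})$, where $\pi^{\star,i}$ is the population-optimal policy for task $i$, the residual task average concentrating at rate $\sqrt{\ln(H/\delta)/T}$.

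The main obstacle is faithfully redoing Maurer's chaining for an inner objective that is itself a saddle point, so that (a) the $\max_g$ bias is absorbed entirely into per-task $1/\sqrt n$ terms rather than contaminating the shared $\Phi$-term, and (b) the importance weight $K\pi(a|s)$ propagates with no worse than the stated powers of $K$. Pinning down the exact Lipschitz constant of the importance-weighted loss in $\phi(s)$, and checking that the vector-valued, product-form top layer $\cF\times\cG$ remains compatible with the modified Maurer theorem (as was already needed for the scalar-to-vector and Lipschitz changes in Lemma~\ref{lem:gen_bc}), are the delicate points; the remaining task-level concentration steps are routine and mirror the behavioral cloning proof.
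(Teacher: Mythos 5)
Your skeleton matches the paper's proof in most respects: the same error decomposition with the optimality term $(\mathrm{II})\le 0$ dropped, the sup-swap $\max_g F(g)-\max_g\hat F(g)\le\sup_{g\in\cG}[F(g)-\hat F(g)]$ to absorb the bias of the empirical max, the observation that the per-task $\sup_{f\in\cF,g\in\cG}$ deviation bound is independent of which $\phi$ is plugged in (since $\|\phi(s)\|\le R$ uniformly), the plug-in of the population-optimal per-task policy on the $\phi^\star$ side, Hoeffding for the residual task average, and the union bound over $H$ levels producing $\ln(H/\delta)$. However, one step as you state it would fail. You propose to make the across-task gap $\ex_{\mu,\bx}\ell^\mu_h(\pi^{\phi,\bx})-\frac{1}{T}\sum_i\ell^{\mu^{(i)}}_h(\pi^{\phi,\bx^{(i)}})$ uniform over $\phi$ by a Lipschitz contraction, invoking the $O(K)$-Lipschitzness of the loss in $\phi(s)$. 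But the random variable $\ell^{\mu^{(i)}}_h(\pi^{\phi,\bx^{(i)}})$ is a population expectation: it evaluates $\pi^{\phi,\bx^{(i)}}$, hence $\phi$, at states drawn from $\nu^*_{h,\mu^{(i)}}$ that are \emph{not} in the dataset, so the Gaussian complexity of this $\phi$-indexed class cannot be reduced to $G(\Phi(\bS_h))$, which only sees $\phi$ at the observed states. Worse, the dependence on $\phi$ passes through the saddle-point argmin $\pi^{\phi,\bx}$, and argmins are not stable in the objective, so per-point Lipschitzness of the loss does not transfer to this class. (A related issue: your within-task bound is an in-expectation bound over $(\mu,\bx)$, so it cannot be applied to the $T$ realized tasks without a further concentration step; it must sit on the $\ex_{\mu,\bx}$ side of the decomposition.)

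The paper's route avoids both problems by reversing the order of your two reductions: first replace population quantities by the empirical saddle \emph{values} $\hat m_\bx(\phi)=\min_{f\in\cF}\max_{g\in\cG}\hat\ell^\bx_h(\phi,f,g)$ on both sides — at $\hat\phi_h$ via the sup-swap plus the in-expectation uniform bound over $\cF\times\cG$ (Lemmas~\ref{lem:gen1},\ref{lem:gen2}), and at $\phi^\star$ via plugging the population-optimal $f$ — and only then concentrate $\hat m_\bx(\phi)$ across tasks uniformly over $\phi$ (Lemma~\ref{lem:gen3}). The point is that the saddle value, unlike the argmin, is stable, $|\min_f\max_g A-\min_f\max_g B|\le\sup_{f,g}|A-B|$, and depends on $\phi$ only through $\phi(\bS_h)$; this yields $(\hat m_\bx(\phi)-\hat m_\bx(\phi'))^2\le\frac{4K^2}{n}\sum_j\|\phi(s_j)-\phi'(s_j)\|^2$, which is exactly the hypothesis Slepian's lemma needs to bound the relevant Gaussian average by $\frac{2K}{\sqrt{n}}G(\Phi(\bS_h))$ and produce $\epsilon_{gen,h}(\Phi)$ at the $1/\sqrt{T}$ rate. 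One smaller correction: in the within-task term the product $K\pi(a|s)g(\tilde s)$ is not handled by ``splitting two additive pieces and standard symmetrization'' — decoupling $\cF$ from $\cG$ inside the product itself requires a Slepian comparison between coupled Gaussian processes, using the per-argument Lipschitz constants $2K$, $K$, and $1$, which is where the terms $RK\sqrt{K}/\sqrt{n}$, $KG(\cG(\tilde{\bs}_h))/n$ and $G(\cG(\bar{\bs}_h))/n$ actually come from. With these repairs your argument coincides with the paper's.
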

We then connect the losses $\bar{L}_h$ to the expected cost on the tasks.

\begin{theorem}\label{thm:obs_cost}
Consider representations $(\phi_1,\dots,\phi_H)$ with $\bar{L}_h(\phi_h)\le\epsilon_h$. Let $\bx=(\bx_1,\dots,\bx_H)$ be samples at different levels for a newly sampled task $\mu\sim\eta$ such that $\bx_h\sim\mu_h^n$. Let $\bpi^{\phi,\bx}=(\pi^{\phi_1,\bx_1},\dots,\pi^{\phi_H,\bx_H})$ be policies learned using the samples, then under Assumption~\ref{ass:realizability_V_oo},
\begin{align*}
\ex_{\mu\sim\eta}\ex_{\bx}J(\bpi^{\phi,\bx})-\ex_{\mu\sim\eta}J(\bpi^*_\mu)\le\sum\limits_{h=1}^{H}(2H-2h+1)\epsilon_h + O(H^2)\epsilon^{\phi}_{be}
\end{align*}
where $\epsilon^{\phi}_{be}=\ex_{\mu\sim\eta}\ex_{\bx}[\epsilon_{be}^{\bpi^{\phi,\bx}}]$ is the average inherent Bellman error.
\end{theorem}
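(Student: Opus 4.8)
The plan is to mirror the two-step structure of the behavioral-cloning proof (Theorem~\ref{thm:meta_bc}): first reduce $J(\bpi^{\phi,\bx})-J(\bpi^*_\mu)$ to a telescoping sum of per-level terms that are controlled on the \emph{expert} state distributions $\nu^*_h$, and then pay for the shift between the learned policy's occupancy $\nu_h^{\bpi}$ and $\nu^*_h$ using the inherent Bellman error. Throughout I would fix a task $\mu$ and a draw of samples $\bx$, abbreviate $\bpi=\bpi^{\phi,\bx}$ and $V^*_{h}=V^*_{h,\mu}$, and only at the very end take $\ex_{\mu\sim\eta}\ex_{\bx}$, so that each $\ell^\mu_h(\pi_h)$ turns into $\bar{L}_h(\phi_h)\le\epsilon_h$ and the per-policy Bellman error turns into $\epsilon^{\phi}_{be}$.

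The first step establishes the telescoping identity. Comparing the hybrid policies that follow $\bpi$ on levels $1,\dots,h-1$ and $\bpi^*_\mu$ afterwards (equivalently, the performance difference lemma) gives
\begin{align*}
J(\bpi)-J(\bpi^*_\mu)=\sum_{h=1}^{H}\ex_{s\sim\nu_h^{\bpi}}\left[(\Gamma_h^{\bpi}V^*_{h+1})(s)-(\Gamma_h^{\bpi^*}V^*_{h+1})(s)\right],
\end{align*}
where the immediate-cost difference vanishes because the cost is a function of the state (the standard setting for observation-only imitation). Writing $g_h\coloneqq\Gamma_h^{\bpi}V^*_{h+1}-\Gamma_h^{\bpi^*}V^*_{h+1}$, I would split each summand as $\ex_{s\sim\nu^*_{h}}g_h(s)$ plus the mismatch $(\ex_{\nu_h^{\bpi}}-\ex_{\nu^*_h})g_h$. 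Since $V^*_{h+1}\in\cG$ by Assumption~\ref{ass:realizability_V_oo}, and since $\ex_{\nu^*_h}\Gamma_h^{\bpi^*}V^*_{h+1}=\ex_{\nu^*_{h+1}}V^*_{h+1}$, the first piece is exactly the argument of the max defining $\ell^\mu_h$ evaluated at $g=V^*_{h+1}$, so $\ex_{\nu^*_h}g_h\le\ell^\mu_h(\pi_h)$.

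The core of the argument is controlling the accumulated distribution mismatch. I would introduce $D_h\coloneqq\sup_{g\in\cG}\big|(\ex_{\nu_h^{\bpi}}-\ex_{\nu^*_h})g\big|$ and prove the recursion $D_{h+1}\le D_h+2\epsilon_{be}^{\bpi}+\ell^\mu_h(\pi_h)$ with base case $D_1=0$ (as $\nu_1^{\bpi}=\nu=\nu^*_1$). This follows by pushing any $g\in\cG$ one step back via $\ex_{\nu_{h+1}^{\bpi}}g=\ex_{\nu_h^{\bpi}}\Gamma_h^{\bpi}g$, replacing $\Gamma_h^{\bpi}g$ by its closest $g'\in\cG$ (paying $2\epsilon_{be}^{\bpi}$, since the Bellman error of Equation~\ref{eq:bellman_error} is measured on $(\nu^*_h+\nu_h^{\bpi})/2$, which pays both $\ex_{\nu_h^{\bpi}}$ and $\ex_{\nu^*_h}$ of $|\Gamma_h^{\bpi}g-g'|$ at once), bounding $(\ex_{\nu_h^{\bpi}}-\ex_{\nu^*_h})g'\le D_h$, and recognizing the leftover $\ex_{\nu^*_h}\Gamma_h^{\bpi}g-\ex_{\nu^*_{h+1}}g\le\ell^\mu_h(\pi_h)$. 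The same one-step decomposition, applied to the two operators composing $g_h$, bounds the mismatch term by $2D_h+O(\epsilon_{be}^{\bpi})$, invoking Bellman closure for $\Gamma_h^{\bpi}V^*_{h+1}$ and value realizability for the expert backup $\Gamma_h^{\bpi^*}V^*_{h+1}$ (which equals $V^*_h$ minus the immediate cost). Substituting $D_h\le\sum_{h'<h}(\ell^\mu_{h'}(\pi_{h'})+2\epsilon_{be}^{\bpi})$ and collecting terms, the coefficient of $\ell^\mu_h(\pi_h)$ becomes $1+2(H-h)=2H-2h+1$ and the Bellman error accumulates to $O(H^2)\epsilon_{be}^{\bpi}$; taking $\ex_{\mu\sim\eta}\ex_{\bx}$ then yields the stated bound.

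The step I expect to be the main obstacle is this mismatch recursion. I must ensure the replacement of $\Gamma_h^{\bpi}g$ by an element of $\cG$ is legitimate for \emph{every} $g\in\cG$ (which is precisely why $\epsilon_{be}$ is defined with a max over $g$, a min over $g'$, and evaluated on the mixture $(\nu^*_h+\nu_h^{\bpi})/2$), and that the two-sided supremum $D_h$ is controlled even though $\ell^\mu_h$ bounds only one direction — handled either by a symmetry assumption on $\cG$ or by tracking the signed quantity actually needed for $g_h$. Care is also needed to keep $V^*_h$ and $V^*_{h+1}$ inside $\cG$ at every level so that both the loss terms and the Bellman-closure steps remain valid, which is exactly the content of Assumptions~\ref{ass:realizability_V_oo} and~\ref{ass:realizibility_policy_oo}.
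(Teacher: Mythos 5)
Your proposal is correct and follows essentially the same route as the paper's proof: the paper likewise starts from the performance-difference identity (Lemma C.1 of \citet{sun2019provably}), bounds the on-expert-distribution term by $\ell^\mu_h(\pi_h)$ using $V^*_{h+1}\in\cG$, and controls the occupancy mismatch $\Delta_h=\max_{g\in\cG}\left|\ex_{s\sim\nu_h^\pi}g(s)-\ex_{s\sim\nu_h^*}g(s)\right|$ via the identical recursion $\Delta_h\le\Delta_{h-1}+2\epsilon_{be}^{\bpi}+\epsilon_{h-1}(\bx_{h-1})$, where the factor of $2$ arises exactly as you say from measuring the inherent Bellman error on the mixture $(\nu_h^*+\nu_h^\pi)/2$ (the paper's Lemma~\ref{lem:delta}), giving the coefficients $2H-2h+1$ by the same bookkeeping. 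Your closing caveat about the two-sided supremum $D_h$ versus the one-sided loss is a fair observation --- the paper handles this only implicitly --- but it does not alter the argument.
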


It is easy to show that under Assumption~\ref{ass:realizibility_policy_oo}, $\min_{\phi\in\Phi}L_h(\phi)=0$ for every $h\in[H]$.
Thus from Theorem~\ref{thm:gen_oo}, we get that $\bar{L}_h(\hat{\phi}_h)\le\epsilon_{gen,h}$, where $\epsilon_{gen,h}=\epsilon_{gen,h}(\Phi)+\epsilon_{gen,h}(\gF,\gG)+ c''\sqrt{\frac{\ln(H/\delta)}{T}}$.
Invoking Theorem~\ref{thm:obs_cost} on the representations $\{\hat{\phi}_h\}$ completes the proof.

\subsection{Proof of Theorem~\ref{thm:gen_oo}}\label{apdx:b1}
Before proving the theorem, we discuss important lemmas.
In yet another abuse of notation, we define $\ell^\mu_h(\phi,f,g)=\expect_{(s,a,\tilde{s},\bar{s})\sim\mu_h}[K\pi^{\phi,f}(a|s) g(\tilde{s}) - g(\bar{s})]$ and $\ell^\bx_h(\phi,f,g)=\frac{1}{n}\sum\limits_{j=1}^{n} [K\pi^{\phi,f}(a_j|s_j) g(\tilde{s}_j) - g(\bar{s}_j)]$.

Let $\hat{m}_{\bx}(\phi)=\min\limits_{f\in\cF}\max\limits_{g\in\cG}\hat{\ell}^{\bx}_h(\phi,f,g)=\hat{\ell}^{\bx}_h(\phi,\hat{f}^{\phi}_{\bx},\hat{g}^{\phi}_{\bx})$, $\bar{m}_{\mu,\bx}(\phi)=\max\limits_{g\in\cG}\ell^{\mu}_h(\phi,\hat{f}^{\phi}_\bx,g)$ and $m_\mu(\phi)=\min\limits_{f\in\cF}\max\limits_{g\in\cG}\ell^\mu_h(\phi,f,g)$.
Note that $L_h(\phi)=\ex_{\mu\sim\eta}m(\phi)$, $\bar{L}_h(\phi)=\ex_{\mu\sim\eta}\ex_{\bx\sim\mu^n}\bar{m}_{\mu,\bx}(\phi)$.
Define the distribution $\rho_h$ where $\bx\sim\rho_h$ is the same as $\mu\sim\eta$ and then $\bx\sim\mu_h^n$.
\begin{lemma}\label{lem:gen1}
For every $\phi\in\Phi$ and $h\in[H]$,
\begin{align*}
\ex_{\mu\sim\eta}\ex_{\bx\sim\mu^n}\sup\limits_{f\in\cF}\sup\limits_{g\in\cG}\left[\hat{\ell}^\bx_h(\phi,f,g)-\ell^\mu_h(\phi,f,g)\right]\le\epsilon_{gen,h}(\cF,\cG)
\end{align*}
\end{lemma}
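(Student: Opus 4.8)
The plan is to read the quantity inside the expectation as a one-sided uniform deviation of an empirical process indexed by $(f,g)\in\cF\times\cG$ (with $\phi$ fixed throughout) and to control it by symmetrization followed by a product decomposition. First I would write $\hat\ell^\bx_h(\phi,f,g)-\ell^\mu_h(\phi,f,g)=T_1(f,g)-T_2(g)$, where $T_1(f,g)$ is the centered empirical average of the product $K\pi^{\phi,f}(a|s)\,g(\tilde s)$ and $T_2(g)$ is the centered empirical average of $g(\bar s)$. Since the supremum of a difference is at most the sum of the suprema, $\sup_{f,g}[\hat\ell^\bx_h-\ell^\mu_h]\le\sup_{f,g}T_1(f,g)+\sup_g(-T_2(g))$, so it suffices to bound each piece in expectation over $\bx\sim\mu_h^n$ and then average over $\mu\sim\eta$. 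For each fixed $\mu$ the samples are i.i.d.\ from $\mu_h$, so standard symmetrization converts each one-sided deviation into an empirical Rademacher complexity, which I would upper bound by the Gaussian average $G(\cdot)$ of the relevant class via the usual Rademacher-to-Gaussian comparison, picking up the $1/n$ normalization.

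The term $\sup_g(-T_2(g))$ is the easy one: symmetrization immediately yields a bound of order $G(\cG(\bar{\bs}_h))/n$, contributing the $\frac{G(\cG(\bar{\bs}_h))}{n}$ summand. The crux is the bilinear term $\sup_{f,g}T_1(f,g)$, whose symmetrized version is $\frac1n\ex_\gamma\sup_{f,g}\sum_j\gamma_j K\pi^{\phi,f}(a_j|s_j)g(\tilde s_j)$. Here I would exploit boundedness: $K\pi^{\phi,f}(a|s)\in[0,K]$ and $|g|\le1$, so the product map is Lipschitz and a product/vector-contraction decomposition splits this joint Gaussian average additively into (i) a $g$-part, in which the policy weight is replaced by its sup $K$, producing $K\,G(\cG(\tilde{\bs}_h))$, and (ii) an $f$-part, in which $|g|\le1$ is absorbed, leaving the Gaussian average of the reweighted policy class $\{(s,a)\mapsto K\pi^{\phi,f}(a|s):f\in\cF\}$. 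After dividing by $n$, the first piece yields exactly $\frac{KG(\cG(\tilde{\bs}_h))}{n}$.

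It remains to bound the Gaussian average of the policy class, which is where the $\frac{RK\sqrt K}{\sqrt n}$ term is born. Since $\phi$ is fixed, $\pi^{\phi,f}(a|s)=\texttt{softmax}(W\phi(s))_a$ with $\|W\|_F\le1$ and $\|\phi(s)\|\le R$. Because each coordinate $v\mapsto\texttt{softmax}(v)_a$ is Lipschitz, Talagrand's contraction reduces the Gaussian average of $\{K\pi^{\phi,f}\}$ to that of the linear class $\{z\mapsto(Wz)_a:\|W\|_F\le1\}$ evaluated at the points $\phi(s_j)$, up to the factor $K$. This linear Gaussian average is precisely the quantity controlled in Lemma~\ref{lem:bound_Q} by $Q'\le\sqrt K$ together with $\sum_j\|\phi(s_j)\|^2\le nR^2$, giving a bound of order $R\sqrt K\,\sqrt n$; multiplying by the extra $K$ from $K\pi$ and dividing by $n$ produces $\frac{RK\sqrt K}{\sqrt n}$. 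Summing the three contributions and taking $\ex_{\mu\sim\eta}\ex_{\bx\sim\mu_h^n}$ (the $\cG$-terms remain inside the expectation, while the uniform bound $\|\phi\|\le R$ makes the policy term deterministic) reproduces exactly $\epsilon_{gen,h}(\cF,\cG)$. The main obstacle is the bilinear step: decoupling the $f$- and $g$-dependence of $K\pi^{\phi,f}(a|s)g(\tilde s)$ into two separate complexities without losing the correct powers of $K$, and executing the softmax contraction so that the policy class inherits only the $\sqrt K$ factor from Lemma~\ref{lem:bound_Q}.
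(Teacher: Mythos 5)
Your proposal is correct and follows essentially the same route as the paper: the paper invokes Maurer's Theorem 8(i) on the composed loss class $\ell(f'(\phi(s)),g(\tilde s),g(\bar s),a)$ and then uses a single Slepian comparison (exploiting the per-argument Lipschitz constants $2K$, $K$, and $1$) to split the joint Gaussian average into $2\sqrt{3}KG(\cF'(\phi(\bs_h)))+\sqrt{3}KG(\cG(\tilde{\bs}_h))+\sqrt{3}G(\cG(\bar{\bs}_h))$, with $G(\cF'(\phi(\bs_h)))\le Q'\sqrt{n}R\le R\sqrt{Kn}$ via Lemma~\ref{lem:bound_Q}. Your version merely repackages this --- symmetrizing directly, peeling off the $\bar s$-term first, and decoupling the bilinear $f$--$g$ term and the softmax by separate contraction steps --- but the decoupling mechanism (Lipschitzness in each argument plus a Gaussian comparison) and the resulting three terms with the same powers of $K$ are identical to the paper's argument.
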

\begin{lemma}\label{lem:gen2}
With probability $1-\delta$, for every $\phi\in\Phi$,
\begin{align*}
\bar{L}_h(\phi)-\ex_{\bx\sim\rho_h}\hat{m}_\bx(\phi)\le\epsilon_{gen,h}(\cF,\cG)
\end{align*}
\end{lemma}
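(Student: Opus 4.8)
The plan is to reduce this to Lemma~\ref{lem:gen1}, exploiting the fact that the \emph{same} task-specific parameter $\hat{f}^{\phi}_{\bx}$ appears in both $\bar{m}_{\mu,\bx}(\phi)$ and $\hat{m}_{\bx}(\phi)$. First I would rewrite the left-hand side as a single expectation over the joint draw. Since $\bar{L}_h(\phi)=\ex_{\bx\sim\rho_h}\bar{m}_{\mu,\bx}(\phi)$ by the definitions of $\bar{L}_h$ and $\bar{m}$, we have
\[
\bar{L}_h(\phi)-\ex_{\bx\sim\rho_h}\hat{m}_{\bx}(\phi)=\ex_{\bx\sim\rho_h}\left[\bar{m}_{\mu,\bx}(\phi)-\hat{m}_{\bx}(\phi)\right].
\]
Recall $\bar{m}_{\mu,\bx}(\phi)=\max_{g\in\cG}\ell^{\mu}_h(\phi,\hat{f}^{\phi}_{\bx},g)$ and $\hat{m}_{\bx}(\phi)=\max_{g\in\cG}\hat{\ell}^{\bx}_h(\phi,\hat{f}^{\phi}_{\bx},g)$; the minimizing $f$ is identical in both because $\hat{f}^{\phi}_{\bx}$ is by definition the $f$ that achieves $\hat{m}_{\bx}(\phi)$, and $\bar{m}$ is evaluated on exactly that empirically-chosen policy. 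This shared $f$ is what lets us sidestep the inner min--max structure entirely.

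The key step uses the elementary inequality $\max_g A(g)-\max_g B(g)\le\max_g\left[A(g)-B(g)\right]$ with $A=\ell^{\mu}_h(\phi,\hat{f}^{\phi}_{\bx},\cdot)$ and $B=\hat{\ell}^{\bx}_h(\phi,\hat{f}^{\phi}_{\bx},\cdot)$, which gives
\[
\bar{m}_{\mu,\bx}(\phi)-\hat{m}_{\bx}(\phi)\le\max_{g\in\cG}\left[\ell^{\mu}_h(\phi,\hat{f}^{\phi}_{\bx},g)-\hat{\ell}^{\bx}_h(\phi,\hat{f}^{\phi}_{\bx},g)\right]\le\sup_{f\in\cF}\sup_{g\in\cG}\left[\ell^{\mu}_h(\phi,f,g)-\hat{\ell}^{\bx}_h(\phi,f,g)\right].
\]
Replacing the fixed $\hat{f}^{\phi}_{\bx}$ by the supremum over $f$ only loosens the bound, so this is valid. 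Taking $\ex_{\bx\sim\rho_h}$ of both sides then bounds the quantity of interest by the uniform deviation $\ex_{\mu\sim\eta}\ex_{\bx\sim\mu^n}\sup_{f}\sup_{g}\left[\ell^{\mu}_h-\hat{\ell}^{\bx}_h\right]$.

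The one genuine subtlety is the \emph{direction} of the deviation: Lemma~\ref{lem:gen1} controls $\sup_{f,g}\left[\hat{\ell}^{\bx}_h-\ell^{\mu}_h\right]$ (empirical minus population), whereas here I need the reverse $\sup_{f,g}\left[\ell^{\mu}_h-\hat{\ell}^{\bx}_h\right]$. I would handle this by noting that the symmetrization argument underlying Lemma~\ref{lem:gen1} (introducing a ghost sample and Gaussian variables $\gamma_{ij}$) is invariant under negating those variables, so it bounds \emph{both} signs of the deviation by the identical $\epsilon_{gen,h}(\cF,\cG)$; the reverse-sign version therefore holds verbatim. I expect the main obstacle to be purely this bookkeeping — applying the max-difference inequality in the correct orientation and tracking that the reverse Gaussian-complexity bound has the same right-hand side. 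Finally, since both quantities being compared are honest population expectations over $\rho_h$ with no residual sampling randomness, the inequality holds for every $\phi\in\Phi$ simultaneously (the bound $\epsilon_{gen,h}(\cF,\cG)$ is $\phi$-independent), and the stated $1-\delta$ confidence is inherited directly from the form in which Lemma~\ref{lem:gen1} is invoked.
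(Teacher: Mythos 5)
Your proposal matches the paper's proof essentially step for step: the same rewriting of $\bar{L}_h(\phi)$ as $\ex_{\bx\sim\rho_h}\bar{m}_{\mu,\bx}(\phi)$, the same elementary inequality $\max_g A(g)-\max_g B(g)\le\max_g[A(g)-B(g)]$ exploiting the shared minimizer $\hat{f}^{\phi}_{\bx}$, the same relaxation of the fixed $\hat{f}^{\phi}_{\bx}$ to a supremum over $f\in\cF$, and the same invocation of Lemma~\ref{lem:gen1}. Your attention to the \emph{sign} of the deviation is in fact more careful than the paper, which applies Lemma~\ref{lem:gen1} in the opposite orientation without comment; your observation that the Gaussian-symmetrization bound is invariant under negation, so both directions are controlled by the same $\epsilon_{gen,h}(\cF,\cG)$, is exactly the right justification.
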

\begin{lemma}\label{lem:gen3}
With probability $1-\delta$, for every $\phi\in\Phi$,
\begin{align*}
\ex_{\bx\sim\rho_h}\hat{m}_\bx(\phi)-\frac{1}{T}\sum\limits_{i}\hat{m}_{\bx^{(i)}}(\phi)\le \epsilon_{gen,h}(\Phi)+O\left(\sqrt{\frac{\log(\frac{1}{\delta})}{T}}\right)
\end{align*}
\end{lemma}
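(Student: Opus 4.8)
The plan is to view $\phi\mapsto\hat m_\bx(\phi)$ as a single bounded, real-valued functional of the representation and to run the standard uniform-convergence pipeline \emph{at the level of tasks}: each of the $T$ i.i.d.\ draws $\bx^{(i)}\sim\rho_h$ plays the role of one ``example,'' and the quantity to control uniformly over $\phi\in\Phi$ is $\Delta(\phi):=\ex_{\bx\sim\rho_h}\hat m_\bx(\phi)-\frac1T\sum_{i=1}^T\hat m_{\bx^{(i)}}(\phi)$. Because $|g|_\infty\le1$ and $K\pi(a|s)\le K$, every term in $\hat\ell^\bx_h$ lies in $[-(K{+}1),K{+}1]$, so $\abs{\hat m_\bx(\phi)}\le K+1$ for all $\phi$. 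Since replacing a single task dataset $\bx^{(i)}$ changes $\frac1T\sum_i\hat m_{\bx^{(i)}}(\phi)$ by at most $2(K{+}1)/T$ uniformly in $\phi$, the functional $(\bx^{(1)},\dots,\bx^{(T)})\mapsto\sup_{\phi}\Delta(\phi)$ has bounded differences, and McDiarmid's inequality gives $\sup_\phi\Delta(\phi)\le\ex[\sup_\phi\Delta(\phi)]+O\!\big(\sqrt{\ln(1/\delta)/T}\big)$ with probability $1-\delta$, the boundedness constant being absorbed into the $O(\cdot)$.

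It then remains to bound $\ex[\sup_\phi\Delta(\phi)]$. First I would symmetrize in the usual way, introducing Rademacher signs and then passing to Gaussian variables by the standard comparison, to obtain $\ex[\sup_\phi\Delta(\phi)]\le\frac{c}{T}\,\ex\big[\sup_{\phi\in\Phi}\sum_{i=1}^T\gamma_i\,\hat m_{\bx^{(i)}}(\phi)\big]$ for an absolute constant $c$, with $\gamma_i$ i.i.d.\ standard normals. Matching the target $\epsilon_{gen,h}(\Phi)=\frac{KG(\Phi(\bS_h))}{T\sqrt n}$ thus reduces to showing $\ex\big[\sup_\phi\sum_i\gamma_i\hat m_{\bx^{(i)}}(\phi)\big]\lesssim\frac{K}{\sqrt n}\,G(\Phi(\bS_h))$.

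The crux is that $\hat m_\bx(\phi)$ depends on $\phi$ \emph{only} through the representations $\{\phi(s_j)\}_{j=1}^n$ of the $s$-states (the discriminator values $g(\tilde s_j),g(\bar s_j)$ carry no $\phi$-dependence), so I can write $\hat m_{\bx^{(i)}}(\phi)=\psi^{(i)}\big((\phi(s^i_j))_{j=1}^n\big)$ for a scalar map $\psi^{(i)}:\R^{dn}\to\R$. The key structural observation is that an inf--sup of a family of functions that are uniformly $L$-Lipschitz in a common argument is itself $L$-Lipschitz; since for each fixed $f\in\cF$ and $g\in\cG$ the map $(u_j)_j\mapsto\frac1n\sum_j[K\,\softmax(Wu_j)_{a_j}g(\tilde s_j)-g(\bar s_j)]$ is $O(K/\sqrt n)$-Lipschitz in Euclidean norm (using $\norm{W}_F\le1$, that $\softmax$ is $O(1)$-Lipschitz, and $|g|\le1$, so that the per-block gradients are $O(K/n)$ and aggregate to $O(K/\sqrt n)$), the min--max value $\psi^{(i)}$ is $O(K/\sqrt n)$-Lipschitz. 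With $\psi^{(i)}$ reduced to a single Lipschitz functional of the representation outputs, I would invoke the Gaussian-average chain rule / contraction lemma of \citet{maurer2016benefit} to pass from the task-level Gaussian process (the $T$ variables $\gamma_i$) to the state-level Gaussian average over all $Tn$ states, paying exactly the Lipschitz constant: $\ex[\sup_\phi\sum_i\gamma_i\psi^{(i)}]\lesssim\frac{K}{\sqrt n}\,G(\Phi(\bS_h))$ up to lower-order terms. Dividing by $T$ recovers $\epsilon_{gen,h}(\Phi)$.

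The main obstacle is precisely the $\min_f\max_g$ structure, which is why the results of \citet{maurer2016benefit} cannot be applied verbatim: their contraction machinery is stated for compositions with a single Lipschitz loss, not for the value of a min--max game. The resolution above is to first collapse the game into one scalar functional of the representation via the elementary fact that inf/sup preserves a uniform Lipschitz modulus, after which the dependence on $\cF$ and $\cG$ has been ``frozen out'' and only the representation complexity $G(\Phi(\bS_h))$ remains---consistent with the inner complexities of $\cF,\cG$ being accounted for separately in Lemmas~\ref{lem:gen1} and \ref{lem:gen2}. Care is needed to verify that the Lipschitz constant is genuinely $O(K/\sqrt n)$ rather than $O(K)$, since the $1/\sqrt n$ gain from averaging $n$ per-sample terms with independent representation arguments is exactly what produces the $\sqrt n$ in the denominator of $\epsilon_{gen,h}(\Phi)$.
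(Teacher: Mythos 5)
Your proposal is correct and follows essentially the same route as the paper: the paper invokes Theorem 8(ii) of \citet{maurer2016benefit} (which packages exactly your McDiarmid-plus-symmetrization step) and then applies Slepian's lemma after showing, just as you do, that the min--max value $\hat m_\bx(\phi)$ is $O(K/\sqrt n)$-Lipschitz in the stacked representations $(\phi(s_j))_{j=1}^n$, via the same elementary fact that $\min_f\max_g$ preserves a uniform Lipschitz modulus together with $|g|\le 1$, Cauchy--Schwarz (the source of the $1/\sqrt n$), and the $2$-Lipschitzness of $f$. The only cosmetic difference is that you spell out the concentration and symmetrization by hand rather than citing them, and your McDiarmid constant honestly carries the range $O(K)$ of $\hat m_\bx$ into the $\sqrt{\log(1/\delta)/T}$ term --- a dependence the paper's statement also implicitly absorbs.
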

\noindent We prove these lemmas later. First we prove Theorem~\ref{thm:gen_oo} using them.
If $\phi^*_h=\arg\min\limits_{\phi\in\Phi} L_h(\phi)$, then
\begin{align*}
\bar{L}_h(\hat{\phi}_h)-L_h(\phi^*_h)=&\left(\bar{L}_h(\hat{\phi}_h)-\ex_{\bx\sim\rho_h}\hat{m}_\bx(\phi)\right)\\
&+\left(\ex_{\bx\sim\rho_h}\hat{m}_\bx(\phi)-\frac{1}{T}\sum\limits_i\hat{m}_{\bx^{(i)}}(\hat{\phi}_h)\right)\\
&+\left(\frac{1}{T}\sum\limits_i\hat{m}_{\bx^{(i)}}(\hat{\phi}_h)-\frac{1}{T}\sum\limits_i\hat{m}_{\bx^{(i)}}(\phi^*_h)\right)\\
&+\left(\frac{1}{T}\sum\limits_i\hat{m}_{\bx^{(i)}}(\phi^*_h)-\ex_{\bx\sim\rho_h}\hat{m}_{\bx}(\phi^*_h)\right)\\
&+\ex_{\mu\sim\eta}[\ex_{\bx\sim\mu^n}\hat{m}_{\bx}(\phi^*_h)-m_\mu(\phi^*_h)]\\
&\le2\epsilon_{gen,h}(\cF,\cG)+\epsilon_{gen,h}(\Phi)+O\left(\sqrt{\frac{\log(\frac{1}{\delta})}{T}}\right)
\end{align*}
where for the first part we use Lemma~\ref{lem:gen2}, second part we use Lemma~\ref{lem:gen3}, third part is upper bounded by 0 by optimality of $\hat{\phi}_h$, fourth is upper bounded by $O(\sqrt{\frac{\log(\frac{1}{\delta})}{T}})$ by Hoeffding's inequality and fifth is bounded by the following argument: let $f^{\phi},g^{\phi}=\arg\min\limits_{f\in\cF}\arg\max\limits_{g\in\cG}\ell^\mu(\phi,f,g)$
\begin{align*}
\ex_{\bx\sim\mu^n}\hat{m}_{\bx}(\phi^*_h)&=\ex_{\bx\sim\mu^n}\min\limits_{f\in\cF}\max\limits_{g\in\cG}\hat{\ell}^{\bx}_h(\phi^*_h,f,g)\\&\le\ex_{\bx\sim\mu^n}\max\limits_{g\in\cG}\hat{\ell}^{\bx}_h(\phi^*_h,f^{\phi^*_h},g)\\
&=^{(a)}\ex_{\bx\sim\mu^n}\hat{\ell}^{\bx}_h(\phi^*_h,f^{\phi^*_h},\tilde{g})\\
&\le^{(b)}\ell^{\mu}_h(\phi^*_h,f^{\phi^*_h},\tilde{g})+\epsilon_{gen,h}(\cF,\cG)\\&\le\ell^{\mu}_h(\phi^*_h,f^{\phi^*_h},g^{\phi^*_h})+\epsilon_{gen,h}(\cF,\cG)=m_\mu(\phi^*_h)+\epsilon_{gen,h}(\cF,\cG)
\end{align*}
where in step $(a)$ we use $\tilde{g}=\arg\max\limits_{g\in\cG}\hat{\ell}^{\bx}_h(\phi^*_h,f^{\phi^*_h},g)$, for $(b)$ we use Lemma~\ref{lem:gen1}.

\subsection{Proof of Theorem~\ref{thm:obs_cost}}
Consider a task $\mu$.
For simplicity of notation, we use $\pi_h$ instead $\pi^{\phi_h,\bx_h}$, $\bpi$ instead of $\bpi^{\phi,\bx}$.
Let $\nu_h^\pi$ and $\nu_h^*$ be the state distributions at level $h$ induced by $\bpi^{\phi,\bx}$ and $\bpi^*_\mu$ respectively.
Let
\begin{align*}
\epsilon_h(\bx_h)=\max\limits_{g\in\cG}\ex_{s\sim\nu_h^*}[\ex_{\substack{a\sim\pi_h\\s'\sim P_{s,a}}}g(s')-\ex_{\substack{a\sim\pi_h^*\\s'\sim P_{s,a}}}g(s')]
\end{align*}
be the loss of policy $\pi_h$ at level $h$. By definition, $\epsilon_h=\ex_{\mu\sim\eta}\ex_{\bx\sim\mu_h^n} \epsilon_h(\bx)$.
Using Lemma C.1 from \cite{sun2019provably}, we have
\begin{align*}
J(\bpi^{\phi,\bx})-J(\bpi^*_\mu)&=\sum\limits_{h=1}^H\bar{\Delta}_h=\sum\limits_{h=1}^H\ep_{s\sim\nu_h^\pi}\left[\ex_{\substack{a\sim\pi_h(\cdot|s),s'\sim P_{s,a}}}V^*_{h+1}(s')-\ex_{\substack{a\sim\pi^*_h(\cdot|s),s'\sim P_{s,a}}}V^*_{h+1}(s')\right]
\end{align*}
Observe that 
\begin{align*}
\bar{\Delta}_h&=&&\ex_{s\sim\nu_h^\pi}[\ex_{\substack{a\sim\pi_h(\cdot|s),s'\sim P_{s,a}}}V^*_{h+1}(s')-\ex_{\substack{a\sim\pi^*_h(\cdot|s),s'\sim P_{s,a}}}V^*_{h+1}(s')]\\
&\le^{(a)}&&\ex_{s\sim\nu_h^*}\ex_{\substack{a\sim\pi_h(\cdot|s),s'\sim P_{s,a}}}V^*_{h+1}(s')-\ex_{s\sim\nu_h^*}\ex_{\substack{a\sim\pi^*_h(\cdot|s),s'\sim P_{s,a}}}V^*_{h+1}(s')+\\
&&&\ex_{s\sim\nu_h^\pi}\ex_{\substack{a\sim\pi_h(\cdot|s),s'\sim P_{s,a}}}V^*_{h+1}(s)-\ex_{s\sim\nu_h^*}\ex_{\substack{a\sim\pi_h(\cdot|s),s'\sim P_{s,a}}}V^*_{h+1}(s)+\\
&&&\ex_{s\sim\nu_h^*}\ex_{\substack{a\sim\pi^*_h(\cdot|s),s'\sim P_{s,a}}}V^*_{h+1}(s')-\ex_{s\sim\nu_h^\pi}\ex_{\substack{a\sim\pi^*_h(\cdot|s),s'\sim P_{s,a}}}V^*_{h+1}(s')\\
&\le^{(b)}&&\max\limits_{g\in\cG}\ex_{s\sim\nu_h^*}[\ex_{\substack{a\sim\pi_h(\cdot|s),s'\sim P_{s,a}}}g(s')-\ex_{\substack{a\sim\pi^*_h(\cdot|s),s'\sim P_{s,a}}}g(s')]+\\
&&&\max\limits_{g\in\cG}[\ex_{s\sim\nu_h^\pi}\Gamma_h^{\bpi}g(s)-\ex_{s\sim\nu_h^*}\Gamma_h^{\bpi}g(s)]+[\ex_{s\sim\nu_h^*}\Gamma_h^{*}V^*_{h+1}(s)-\ex_{s\sim\nu_h^\pi}\Gamma_h^{*}V^*_{h+1}(s)]\\
&\le^{(c)}&&\epsilon_h(\bx_h)+\max\limits_{g\in\cG}[\ex_{s\sim\nu_h^\pi}\Gamma_h^{\bpi}g(s)-\ex_{s\sim\nu_h^*}\Gamma_h^{\bpi}g(s)]+\max\limits_{g\in\cG}[\ex_{s\sim\nu_h^\pi}g(s)-\ex_{s\sim\nu_h^*}g(s)]
\end{align*}
where $(a)$ just adds and subtracts terms, $(b)$ uses the assumption that $V^*_{h+1}\in\cG$ and the definitions of $\Gamma_h^*$ and $\Gamma_h^{\bpi}$ from \Secref{sec:pre} and $(c)$ uses the definition of $\epsilon_h(\bx_h)$.
The following lemma helps us bound the remaining two terms.
\begin{lemma}\label{lem:delta}
Defining $\Delta_h=\max\limits_{g\in\cG}|\ex_{s\sim\nu_h^\pi}g(s)-\ex_{s\sim\nu_h^*}g(s)|$, we have
\begin{align*}
\max\limits_{g\in\cG}[\ex_{s\sim\nu_h^\pi}\Gamma_h^{\bpi}g(s)-\ex_{s\sim\nu_h^*}\Gamma_h^{\bpi}g(s)]\le\Delta_h+2\epsilon_{be}^\bpi
\end{align*}
\end{lemma}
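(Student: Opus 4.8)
The plan is to reduce the left-hand quantity, which evaluates the discriminator $\Gamma_h^{\bpi}g$ under the two state distributions, to the simpler quantity $\Delta_h$, which compares elements of $\cG$ directly. The conceptual obstacle is that $\Gamma_h^{\bpi}g$ need \emph{not} lie in $\cG$, so one cannot bound the left side by $\Delta_h$ simply by observing that $g$ ranges over $\cG$. The inherent Bellman error $\epsilon_{be}^{\bpi}$ is defined precisely to measure how well elements of $\cG$ approximate $\Gamma_h^{\bpi}g$, and this approximation is exactly what will bridge the gap.

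First I would let $g^\star\in\cG$ attain the maximum on the left, so the quantity to bound equals $\ex_{s\sim\nu_h^\pi}(\Gamma_h^{\bpi}g^\star)(s)-\ex_{s\sim\nu_h^*}(\Gamma_h^{\bpi}g^\star)(s)$. By the definition of $\epsilon_{be}^{\bpi}$ (which takes a max over $h$ and over $g\in\cG$, then a min over $g'$), there exists $g'\in\cG$ with $\ex_{s\sim(\nu_h^*+\nu_h^{\bpi})/2}[|g'(s)-(\Gamma_h^{\bpi}g^\star)(s)|]\le\epsilon_{be}^{\bpi}$. I would then insert $g'$ by adding and subtracting, splitting the target into three pieces:
\begin{align*}
\ex_{s\sim\nu_h^\pi}[(\Gamma_h^{\bpi}g^\star)(s)-g'(s)]+\left(\ex_{s\sim\nu_h^\pi}g'(s)-\ex_{s\sim\nu_h^*}g'(s)\right)+\ex_{s\sim\nu_h^*}[g'(s)-(\Gamma_h^{\bpi}g^\star)(s)].
\end{align*}

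The middle term is at most $\Delta_h$, since $g'\in\cG$ and $\Delta_h$ maximizes over $\cG$. For the two remaining terms I would pass to absolute values and use that integrating against the mixture averages the two single-distribution integrals:
\begin{align*}
\ex_{s\sim\nu_h^\pi}|(\Gamma_h^{\bpi}g^\star)(s)-g'(s)|+\ex_{s\sim\nu_h^*}|g'(s)-(\Gamma_h^{\bpi}g^\star)(s)|=2\,\ex_{s\sim(\nu_h^*+\nu_h^{\bpi})/2}[|g'(s)-(\Gamma_h^{\bpi}g^\star)(s)|]\le 2\epsilon_{be}^{\bpi}.
\end{align*}
Summing the three bounds gives $\Delta_h+2\epsilon_{be}^{\bpi}$, as claimed. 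There is no genuinely difficult calculation here; the one point that must be gotten right, and the reason the argument closes, is that the Bellman error is measured under the averaged distribution $(\nu_h^*+\nu_h^{\bpi})/2$ — this is exactly the combination needed to absorb one residual term from each of the two distributions $\nu_h^\pi$ and $\nu_h^*$ simultaneously, so that a single factor of $\epsilon_{be}^{\bpi}$ (doubled) suffices.
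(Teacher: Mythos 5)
Your proof is correct and follows essentially the same route as the paper's: choose the maximizer $g^\star$ (the paper's $\bar g$), pick $g'\in\cG$ approximating $\Gamma_h^{\bpi}g^\star$ via the definition of $\epsilon_{be}^{\bpi}$, add and subtract $g'$, bound the middle term by $\Delta_h$, and absorb the two residuals into $2\epsilon_{be}^{\bpi}$ using the mixture distribution $(\nu_h^*+\nu_h^{\bpi})/2$. Your closing observation about why the Bellman error must be measured under the averaged distribution is exactly the point the paper's proof exploits, so nothing is missing.
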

\noindent Using the above lemma, we get $\bar{\Delta}_h\le\epsilon_h(\bx_h)+2\Delta_h+2\epsilon_{be}^\bpi$.
We now bound $\Delta_h$
\begin{align*}
\Delta_h&=\max\limits_{g\in\cG}\left|\ex_{s\sim\nu_{h-1}^\pi}\ex_{\substack{a\sim\pi_{h-1}\\s'\sim P_{s,a}}}g(s')-\ex_{s\sim\nu_{h}^*}g(s)\right|\\
&\le^{(a)}\max\limits_{g\in\cG}\left|\ex_{s\sim\nu_{h-1}^\pi}\ex_{\substack{a\sim\pi_{h-1}\\s'\sim P_{s,a}}}g(s')-\ex_{s\sim\nu_{h-1}^*}\ex_{\substack{a\sim\pi_{h-1}\\s'\sim P_{s,a}}}g(s)\right|+\max\limits_{g\in\cG}\left|\ex_{s\sim\nu_{h-1}^*}\ex_{\substack{a\sim\pi_{h-1}\\s'\sim P_{s,a}}}g(s')-\ex_{s\sim\nu_{h}^*}g(s)\right|\\
&=\max\limits_{g\in\cG}\left|\ex_{s\sim\nu_{h-1}^\pi}\Gamma^{\bpi}_{h-1}g(s')-\ex_{s\sim\nu_{h-1}^*}\Gamma^{\bpi}_{h-1}g(s)\right|+\epsilon_{h-1}(\bx_{h-1})\\
&\le\Delta_{h-1}+2\epsilon_{be}^\bpi+\epsilon_{h-1}(\bx_{h-1})
\end{align*}
where $(a)$ uses triangle inequality.
Thus $\Delta_h\le2(h-1)\epsilon_{be}^\bpi+\epsilon_{1:h-1}(\bx_{1:h-1})$ and so $\bar{\Delta}_h\le\epsilon_{1:h}(\bx_{1:h})+\epsilon_{1:h-1}(\bx_{1:h-1})+(4h-2)\epsilon_{be}^\bpi$.
This implies that
\begin{align*}
J(\bpi^{\phi,\bx})-J(\bpi^*)=\sum\limits_{h=1}^H\bar{\Delta}_h\le\sum\limits_{h=1}^H(2H-2h+1)\epsilon_h(\bx_h)+O(H^2)\epsilon_{be}^{\bpi^{\phi,\bx}}\end{align*}
Taking expectation wrt $\mu\sim\eta$ and $\bx\sim\mu^n$ completes the proof.

\subsection{Proofs of Lemmas}
In the following proofs, we will require the well known Slepian's lemma which lets us exploit lipschitzness of functions in gaussian averages
\begin{lemma}[Slepian's lemma]
Let $\{X\}_{s\in S}$ and $\{Y\}_{s\in S}$ be zero mean Gaussian processes such that
\begin{align*}
\expect(X_s-X_t)^2\le\expect(Y_s-Y_t)^2, \forall s,t\in S
\end{align*}
Then
\begin{align*}
\expect\sup\limits_{s\in S}X_s\le\expect\sup\limits_{s\in S}Y_s
\end{align*}
\end{lemma}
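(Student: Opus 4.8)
The plan is to prove this comparison inequality by the Gaussian interpolation (``smart path'') method, which reduces the claim at the very end to a pointwise sign condition on a smoothed maximum. I note first that the statement as given is really the increment (Sudakov--Fernique) form of the comparison principle, so the argument should be phrased in terms of variances of increments rather than full covariances. I would begin by reducing to the case where $S$ is finite: in all our applications the Gaussian averages are finite sums, so this is the only case needed, and the general statement follows by taking suprema over finite subsets together with monotone convergence. I would then replace the nonsmooth supremum by the log-sum-exp surrogate $f_\beta(x)=\beta^{-1}\log\sum_{s\in S}e^{\beta x_s}$, which satisfies $\sup_s x_s\le f_\beta(x)\le \sup_s x_s+\beta^{-1}\log|S|$, so that $\expect\sup_s X_s$ and $\expect\sup_s Y_s$ are recovered in the limit $\beta\to\infty$.

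Next I would couple $X$ and $Y$ to be independent, which is harmless since only their marginal laws enter the statement, and define the interpolated process $Z_s(t)=\sqrt{t}\,X_s+\sqrt{1-t}\,Y_s$ for $t\in[0,1]$, so that $Z(0)=Y$ and $Z(1)=X$. Setting $\varphi(t)=\expect[f_\beta(Z(t))]$, the goal becomes showing $\varphi'(t)\le 0$, which yields $\expect[f_\beta(X)]\le\expect[f_\beta(Y)]$. Differentiating under the expectation gives $\varphi'(t)=\sum_s\expect\bigl[\partial_s f_\beta(Z(t))\bigl(\tfrac{1}{2\sqrt t}X_s-\tfrac{1}{2\sqrt{1-t}}Y_s\bigr)\bigr]$.

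The heart of the argument is Gaussian integration by parts (Stein's identity): for the jointly Gaussian vector $(Z(t),X,Y)$ one has $\expect[X_s\,\partial_s f_\beta(Z(t))]=\sqrt t\sum_u\expect[X_sX_u]\,\expect[\partial_{su}f_\beta(Z(t))]$, and the analogous identity for $Y$, where independence of $X$ and $Y$ is what produces the clean factors $\sqrt t$ and $\sqrt{1-t}$. Substituting and cancelling the weights, then using $\sum_u\partial_{su}f_\beta\equiv 0$ (a consequence of $\sum_u\partial_u f_\beta\equiv 1$) to absorb the diagonal covariance terms and convert covariances into increments, I would arrive at
\[
\varphi'(t)=-\tfrac14\sum_{s,u}\bigl(\expect(X_s-X_u)^2-\expect(Y_s-Y_u)^2\bigr)\,\expect\bigl[\partial_{su}f_\beta(Z(t))\bigr].
\]
For $s=u$ the increment difference vanishes, and for $s\neq u$ one computes $\partial_{su}f_\beta=-\beta p_s p_u\le 0$, where $p$ is the softmax vector at $Z(t)$; combined with the hypothesis $\expect(X_s-X_u)^2\le\expect(Y_s-Y_u)^2$, every summand is nonnegative, so $\varphi'(t)\le 0$. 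Letting $\beta\to\infty$ completes the proof.

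I expect the main obstacle to be the bookkeeping in the integration-by-parts step: getting the covariance factors and the $\tfrac{1}{2\sqrt t}$, $\tfrac{1}{2\sqrt{1-t}}$ weights to cancel correctly and then faithfully reducing the covariance differences to increment differences via the identity $\sum_u\partial_{su}f_\beta=0$. One must also verify the growth and integrability conditions that justify differentiating under the expectation and applying Stein's identity, and carefully check the off-diagonal sign of the Hessian $\partial_{su}f_\beta$; these are precisely the places where a hasty argument tends to slip.
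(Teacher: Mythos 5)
The paper never proves this lemma: it is imported as a classical fact (``the well known Slepian's lemma'') and used as a black box in the proofs of Lemma~\ref{lem:gen1} and Lemma~\ref{lem:gen3}, so there is no internal argument to compare against — your proposal supplies a proof the paper omits. What you have written is a correct, self-contained proof of the standard Gaussian-interpolation (smart-path) type, and it proves exactly the statement as given, which, as you rightly observe, is the Sudakov--Fernique increment form rather than Slepian's original theorem (the latter assumes equal variances and covariance domination and compares distribution functions of maxima; the increment hypothesis is precisely what the expected-supremum comparison needs and is what the paper actually invokes when decoupling Gaussian averages). Your bookkeeping checks out: with $X$ and $Y$ coupled independently and $Z(t)=\sqrt{t}\,X+\sqrt{1-t}\,Y$, Stein's identity yields $\varphi'(t)=\tfrac12\sum_{s,u}\bigl(\E[X_sX_u]-\E[Y_sY_u]\bigr)\E\bigl[\partial_{su}f_\beta(Z(t))\bigr]$, the identity $\sum_u\partial_{su}f_\beta\equiv0$ lets you replace $\E[X_sX_u]-\E[Y_sY_u]$ by $-\tfrac12\bigl(\E(X_s-X_u)^2-\E(Y_s-Y_u)^2\bigr)$, producing the $-\tfrac14$ prefactor you state; the off-diagonal Hessian entries are indeed $-\beta p_sp_u\le0$ with $p$ the softmax at $Z(t)$, the diagonal increment differences vanish, so $\varphi'(t)\le0$, and the $\beta^{-1}\log|S|$ smoothing error dies as $\beta\to\infty$. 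Two small points to make explicit in a final write-up: the weights $\tfrac{1}{2\sqrt{t}}$ and $\tfrac{1}{2\sqrt{1-t}}$ are singular at the endpoints, so differentiate only on $(0,1)$ and conclude via continuity of $\varphi$ on $[0,1]$ (after the integration by parts the singular factors cancel, so $\varphi'$ stays bounded); and the reduction to finite $S$ via $\E\sup_{s\in S}X_s=\sup_{F\subseteq S,\,F\ \mathrm{finite}}\E\sup_{s\in F}X_s$ rests on the usual separability convention for the supremum — harmless here, since the Gaussian averages in Equation~\ref{eq:gaussian_avg} already carry that convention. The differentiation under the expectation and Stein's identity that you flag as potential obstacles are routine here because $f_\beta$ is $1$-Lipschitz with bounded second derivatives, so none of those caveats cause trouble.
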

We now move on to proving earlier lemmas.
\begin{proof}[\bf Proof of Lemma~\ref{lem:gen1}]
Again we define $\gF'$ as in Equation~\ref{eq:F_prime}.
Let $\ell(\vv,\alpha,\beta,a)=K\texttt{softmax}(\vv)_a\alpha-\beta$, and let $\ell'^\mu_h(\phi,f',g)=\ell^\mu_h(\phi,\texttt{softmax}(f'),g)=\ex_{(s,a,\tilde{s},\bar{s})\sim\mu_h}\ell(f'(\phi(s)),g(\tilde{s}),g(\bar{s}),a)$ for $f'\in\gF'$ and similarly define $\hat{\ell'}^\bx_h(\phi,f',g)=\hat{\ell}^\bx_h(\phi,\texttt{softmax}(f'),g)$.
Notice that $\ell(\cdot,\alpha,\beta,a)$ is $2K$-lipschitz, $\ell(\vv,\cdot,\beta,a)$ is $K$-lipschitz and $\ell(\vv,\alpha,\cdot,a)$ is $1$-lipschitz, 
Using Theorem 8(i) from \cite{maurer2016benefit}, we get that
\begin{align*}
\ex_{\mu\sim\eta}\ex_{\bx\sim\mu^n}&\sup\limits_{f\in\cF}\sup\limits_{g\in\cG}\left[\hat{\ell}^\bx_h(\phi,f,g)-\ell^\mu_h(\phi,f,g)\right]\\
&=\ex_{\mu\sim\eta}\ex_{\bx\sim\mu^n}\sup\limits_{f'\in\cF'}\sup\limits_{g\in\cG}\left[\hat{\ell'}^\bx_h(\phi,f',g)-\ell'^\mu_h(\phi,f',g)\right]\\
&\le\frac{\sqrt{2\pi}\ep_{\bx}G(\ell(\cF'(\phi(\bs_h)), \cG(\bar{\bs}_h),\cG(\tilde{\bs}_h), \ba))}{n}
\end{align*}
where the gaussian average is defined as 
\begin{align*}
	G(\ell(\cF'(\phi(\bs_h)), \cG(\bar{\bs}_h),\cG(\tilde{\bs}_h), \ba)) = \ex_{\gamma_i}\left[\sup_{f'\in\cF,g\in\cG} \sum\limits_{i=1}^n\gamma_i\ell(f'(\phi(s_i)),g(\tilde{s}_i), g(\bar{s}_i), a_i)\right]
\end{align*}
where $\bs_h=\{s_i\}_{i=1}^n$, $\bar{\bs}_h=\{\bar{s}_i\}_{i=1}^n$, $\tilde{\bs}_h=\{\tilde{s}_i\}_{i=1}^n$.
We will now use the lipschitzness of $\ell$ to get the following.
\begin{claim}\label{cl:lips}
\begin{align*}
	(\ell(f'_1(\phi(s_i)),g_1(\tilde{s}_i), g_1(\bar{s}_i), a_i) - \ell(f'_2(\phi(s_i)),g_2(\tilde{s}_i), g_2(\bar{s}_i), a_i))^2 &\le 12K^2\|f'_1(\phi(s))-f'_2(\phi(s))\|^2\\
	&+3K^2(g_1(\phi(\tilde{s}))-g_2(\phi(\tilde{s})))^2\\
	&+3(g_1(\phi(\bar{s}))-g_2(\phi(\bar{s})))^2
\end{align*}
\end{claim}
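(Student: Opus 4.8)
The plan is to prove Claim~\ref{cl:lips} by a standard ``one argument at a time'' telescoping of the difference, using the three per-argument Lipschitz constants of $\ell$ recorded just above, followed by the elementary inequality $(x+y+z)^2 \le 3(x^2+y^2+z^2)$. First I would recall the definition $\ell(\vv,\alpha,\beta,a)=K\,\texttt{softmax}(\vv)_a\,\alpha-\beta$ together with the stated facts that $\ell(\cdot,\alpha,\beta,a)$ is $2K$-Lipschitz in $\vv$, $\ell(\vv,\cdot,\beta,a)$ is $K$-Lipschitz in $\alpha$, and $\ell(\vv,\alpha,\cdot,a)$ is $1$-Lipschitz in $\beta$; these hold because $|\alpha|=|g(\tilde s)|\le 1$ for $g\in\cG$ and $\texttt{softmax}(\cdot)_a$ is $\sqrt2<2$-Lipschitz in the Euclidean norm.

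Next I would write the full difference as a telescoping sum in which exactly one argument changes at each step:
\begin{align*}
\ell(f'_1(\phi(s)),g_1(\tilde s),g_1(\bar s),a)-\ell(f'_2(\phi(s)),g_2(\tilde s),g_2(\bar s),a)=A+B+C,
\end{align*}
where $A$ replaces $f'_1$ by $f'_2$ while holding $g_1(\tilde s),g_1(\bar s)$ fixed, then $B$ replaces $g_1(\tilde s)$ by $g_2(\tilde s)$ while holding $f'_2$ and $g_1(\bar s)$ fixed, and finally $C$ replaces $g_1(\bar s)$ by $g_2(\bar s)$. Applying the three single-variable Lipschitz bounds termwise then gives $|A|\le 2K\,\|f'_1(\phi(s))-f'_2(\phi(s))\|$, $|B|\le K\,|g_1(\tilde s)-g_2(\tilde s)|$, and $|C|\le |g_1(\bar s)-g_2(\bar s)|$.

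Squaring the sum and invoking $(x+y+z)^2\le 3(x^2+y^2+z^2)$ produces exactly the stated bound, with constants $3\cdot(2K)^2=12K^2$ in front of $\|f'_1(\phi(s))-f'_2(\phi(s))\|^2$, $3\cdot K^2=3K^2$ in front of the $\tilde s$-term, and $3\cdot 1=3$ in front of the $\bar s$-term.

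There is no genuine obstacle here; the computation is routine. The only points that need a little care are the ordering of the telescoping (so that consecutive terms differ in a single argument, which is what lets the one-variable Lipschitz constants apply) and keeping the $\vv$-bound in the Euclidean norm $\|\cdot\|$ while treating the two $g$-terms as scalars. The role of this claim is then to feed into Slepian's lemma: the pointwise bound it provides lets one compare the Gaussian process defining $G(\ell(\cF'(\phi(\bs_h)),\cG(\bar{\bs}_h),\cG(\tilde{\bs}_h),\ba))$ with a sum of independent processes, thereby decoupling that Gaussian average into separate averages over $\cF'$, $\cG(\tilde{\bs}_h)$ and $\cG(\bar{\bs}_h)$ as needed to finish the proof of Lemma~\ref{lem:gen1}.
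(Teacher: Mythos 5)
Your proof is correct and follows essentially the same route as the paper: the paper likewise telescopes the difference one argument at a time (first $f'_1 \to f'_2$, then $g_1(\tilde s) \to g_2(\tilde s)$, then $g_1(\bar s) \to g_2(\bar s)$), applies the per-argument Lipschitz constants $2K$, $K$, $1$, and concludes with the AM--RMS inequality $(x+y+z)^2 \le 3(x^2+y^2+z^2)$, yielding the constants $12K^2$, $3K^2$, $3$. Your added justification of the Lipschitz constants (boundedness of $g$ and the Lipschitzness of $\texttt{softmax}(\cdot)_a$) is a harmless elaboration of facts the paper states without proof.
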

This follows by writing 
\begin{align*}
	\ell(f'_1(\phi(s_i)),g_1(\tilde{s}_i), g_1(\bar{s}_i), a_i) -& \ell(f'_2(\phi(s_i)),g_2(\tilde{s}_i), g_2(\bar{s}_i), a_i) = \\
	&\ell(f'_1(\phi(s_i)),g_1(\tilde{s}_i), g_1(\bar{s}_i), a_i) - \ell(f'_2(\phi(s_i)),g_1(\tilde{s}_i), g_1(\bar{s}_i), a_i) + \\
	&\ell(f'_2(\phi(s_i)),g_1(\tilde{s}_i), g_1(\bar{s}_i), a_i) - \ell(f'_2(\phi(s_i)),g_2(\tilde{s}_i), g_1(\bar{s}_i), a_i) + \\
	&\ell(f'_2(\phi(s_i)),g_2(\tilde{s}_i), g_1(\bar{s}_i), a_i) - \ell(f'_2(\phi(s_i)),g_2(\tilde{s}_i), g_2(\bar{s}_i), a_i)
\end{align*}
and then using the per argument lipschitzness of $\ell$ described earlier and AM-RMS inequality proves the claim.
We move on to decoupling the gaussian average using Slepian's lemma
\begin{claim}
The gaussian average satisfies the following
\begin{align*}
	G(\ell(\cF'(\phi(\bs_h)), \cG(\bar{\bs}_h),\cG(\tilde{\bs}_h), \ba) \le 2\sqrt{3}KG(\cF'(\phi(\bs_h))) + \sqrt{3}KG(\gG(\tilde{\bs}_h)) + \sqrt{3}G(\gG(\bar{\bs}_h))
\end{align*}
where the gaussian average for a class of functions is defined in \Eqref{eq:gaussian_avg}.
\end{claim}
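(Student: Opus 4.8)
The plan is to apply Slepian's lemma to the Gaussian process indexed by pairs $(f',g)\in\cF'\times\cG$,
\[
X_{(f',g)} = \sum_{i=1}^n \gamma_i\, \ell(f'(\phi(s_i)), g(\tilde{s}_i), g(\bar{s}_i), a_i),
\]
whose expected supremum is by definition $G(\ell(\cF'(\phi(\bs_h)), \cG(\bar{\bs}_h), \cG(\tilde{\bs}_h), \ba))$. First I would introduce a \emph{decoupled} comparison process built from three mutually independent families of standard Gaussians $\{\gamma_{ij}^{(1)}\}$, $\{\gamma_i^{(2)}\}$, $\{\gamma_i^{(3)}\}$,
\[
Y_{(f',g)} = 2\sqrt{3}K \sum_{i=1}^n\sum_{j=1}^K \gamma_{ij}^{(1)} f'(\phi(s_i))_j + \sqrt{3}K \sum_{i=1}^n \gamma_i^{(2)} g(\tilde{s}_i) + \sqrt{3} \sum_{i=1}^n \gamma_i^{(3)} g(\bar{s}_i),
\]
where the coefficients $2\sqrt{3}K,\ \sqrt{3}K,\ \sqrt{3}$ are chosen precisely so that their squares reproduce the Lipschitz constants $12K^2,\ 3K^2,\ 3$ from Claim~\ref{cl:lips}. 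The vector-valued nature of $f'$ is accommodated by the doubly-indexed family $\gamma_{ij}^{(1)}$ in the first term, matching the definition of $G(\cF'(\phi(\bs_h)))$ in Equation~\ref{eq:gaussian_avg}.

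Next I would verify the increment condition required by Slepian's lemma. Because the $\gamma_i$ are i.i.d.\ standard normal, $\ex(X_{(f'_1,g_1)} - X_{(f'_2,g_2)})^2$ equals the sum over $i$ of the squared differences of $\ell$ evaluated at the two index points, and Claim~\ref{cl:lips} bounds each summand by $12K^2\|f'_1(\phi(s_i)) - f'_2(\phi(s_i))\|^2 + 3K^2(g_1(\tilde{s}_i) - g_2(\tilde{s}_i))^2 + 3(g_1(\bar{s}_i) - g_2(\bar{s}_i))^2$. A direct computation of $\ex(Y_{(f'_1,g_1)} - Y_{(f'_2,g_2)})^2$, using the independence of the three Gaussian families, yields exactly this same quantity summed over $i$. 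Hence the increments of $X$ are dominated by those of $Y$, and Slepian's lemma gives $\ex\sup_{(f',g)} X_{(f',g)} \le \ex\sup_{(f',g)} Y_{(f',g)}$.

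Finally I would decouple the right-hand side. Since the first term of $Y$ depends only on $f'$ and the last two only on $g$, I can separate the supremum over $f'$ from the supremum over $g$, and then bound the supremum over $g$ of the sum of the two $g$-dependent terms by the sum of their individual suprema (subadditivity of $\sup$). Taking expectations and identifying each piece as the corresponding Gaussian average produces $2\sqrt{3}K\, G(\cF'(\phi(\bs_h))) + \sqrt{3}K\, G(\cG(\tilde{\bs}_h)) + \sqrt{3}\, G(\cG(\bar{\bs}_h))$, which is the claim.

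The main obstacle I anticipate is \emph{bookkeeping rather than conceptual}: setting up the comparison process $Y$ so that its increment variance matches the Claim term by term, in particular indexing the vector-valued $f'$ contribution by both sample $i$ and coordinate $j$ so that the squared coefficient equals $12K^2$, and checking that splitting the single $g$-supremum into two does not drop the coefficients $\sqrt{3}K$ and $\sqrt{3}$. Once the coefficients are pinned down as the square roots of the constants in Claim~\ref{cl:lips}, the remaining steps are a routine application of Slepian's lemma together with subadditivity of the supremum.
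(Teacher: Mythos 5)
Your proposal is correct and matches the paper's own proof essentially step for step: the same comparison process $Y_{(f',g)}$ with coefficients $2\sqrt{3}K$, $\sqrt{3}K$, $\sqrt{3}$ taken as square roots of the Lipschitz constants from Claim~\ref{cl:lips}, the same increment-variance verification, Slepian's lemma, and the same splitting of the supremum into three Gaussian averages. Your coordinate indexing $j=1,\dots,K$ for the vector-valued $f'$ term is in fact the correct reading (the paper's $\sum_{i=1,j=1}^{n,d}$ is a minor typo, as is its repeated $g(\tilde{s}_i)$ in the last term of $Y$), so there is nothing to fix.
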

This can be shown by defining two gaussian processes $X_{f',g}=\sum\limits_{i=1}^n\gamma_i\ell(f'(\phi(s_i)),g(\tilde{s}_i), g(\bar{s}_i), a_i)$ and $Y_{f',g}=\sum\limits_{i=1,j=1}^{n,d} \alpha_{i,j} 2\sqrt{3}Kf'(\phi(s_i))_j + \sum\limits_{i=1}^n \beta_i \sqrt{3}Kg(\tilde{s}_i) + \sum\limits_{i=1}^n \delta_i \sqrt{3}g(\tilde{s}_i)$.
It is easy to see the following using expectation of independent gaussian variables
\begin{align*}
	\ex_{\gamma} (X_{f'_1,g_1}-X_{f'_2,g_2})^2 &= \sum_{i=1}^n (\ell(f'_1(\phi(s_i)),g_1(\tilde{s}_i), g_1(\bar{s}_i), a_i) -\ell(f'_2(\phi(s_i)),g_2(\tilde{s}_i), g_2(\bar{s}_i), a_i))^2\\
	\ex_{\alpha,\beta,\delta} (Y_{f'_1,g_1}-Y_{f'_2,g_2})^2 &= 12K^2\|f'_1(\phi(s))-f'_2(\phi(s))\|^2 + 3K^2(g_1(\phi(\tilde{s}))-g_2(\phi(\tilde{s})))^2 + (g_1(\phi(\bar{s}))-g_2(\phi(\bar{s})))^2
\end{align*}
Claim~\ref{cl:lips} gives us that $\ex_{\gamma} (X_{f'_1,g_1}-X_{f'_2,g_2})^2 \le \ex_{\alpha,\beta,\delta} (Y_{f'_1,g_1}-Y_{f'_2,g_2})^2$ and then Slepian's lemma will then give us that 
\begin{align*}
	\ex_{\gamma}\sup\limits_{f',g}\sum\limits_{i=1}^n\gamma_i&\ell(f'(\phi(s_i)),g(\tilde{s}_i), g(\bar{s}_i), a_i) \\
	&\le \ex_{\alpha,\beta,\delta}\sup\limits_{f',g} \left[\sum\limits_{i=1,j=1}^{n,d} \alpha_{i,j} 2\sqrt{3}Kf'(\phi(s_i))_j + \sum\limits_{i=1}^n \beta_i \sqrt{3}Kg(\tilde{s}_i) + \sum\limits_{i=1}^n \delta_i \sqrt{3}g(\tilde{s}_i)\right]\\
	&\le \ex_{\alpha}\sup\limits_{f'} \left[\sum\limits_{i=1,j=1}^{n,d} \alpha_{i,j} 2\sqrt{3}Kf'(\phi(s_i))_j\right] + \ex_{\beta}\sup\limits_{g} \left[\sum\limits_{i=1}^n \beta_i \sqrt{3}Kg(\tilde{s}_i)\right] + \ex_{\delta}\sup\limits_{g} \left[\sum\limits_{i=1}^n \delta_i \sqrt{3}g(\tilde{s}_i)\right]\\
	& = 2\sqrt{3}KG(\cF'(\phi(\bs_h))) + \sqrt{3}KG(\gG(\tilde{\bs}_h)) + \sqrt{3}G(\gG(\bar{\bs}_h))
\end{align*}
thus proving the claim.
Furthermore, we notice that $G(\cF'(\phi(\bs_h)))\le Q'$, where $Q'$ is defined in Lemma~\ref{lem:bound_Q}.
Thus combining all of this, we get 
\begin{align*}
	\ex_{\mu\sim\eta}\ex_{\bx\sim\mu^n}&\sup\limits_{f\in\cF}\sup\limits_{g\in\cG}\left[\hat{\ell}^\bx_h(\phi,f,g)-\ell^\mu_h(\phi,f,g)\right]\\
	&\le\ex_{\mu\sim\eta}\ex_{\bx\sim\mu^n}\frac{2\sqrt{6\pi}KG(\cF'(\phi(\bs_h)))}{n}+\ex_{\mu\sim\eta}\ex_{\bx\sim\mu^n}\left[\frac{\sqrt{6\pi}KG(\cG(\bar{\bs}_h))}{n}+\frac{\sqrt{6\pi}G(\cG(\tilde{\bs}_h))}{n}\right]\\
	&\le\frac{2\sqrt{6\pi}KQ'}{n}+\ex_{\mu\sim\eta}\ex_{\bx\sim\mu^n}\left[\frac{\sqrt{6\pi}KG(\cG(\bar{\bs}_h))}{n}+\frac{\sqrt{6\pi}G(\cG(\tilde{\bs}_h))}{n}\right]\\
	&\le c\frac{RK\sqrt{K}}{\sqrt{n}}+c'\ex_{\mu\sim\eta}\ex_{\bx\sim\mu^n}\left[\frac{KG(\gG(\bar{\bs}_h))}{n}+\frac{G(\gG(\tilde{\bs}_h))}{n}\right]\le \epsilon_{gen,h}(\cF,\cG)
\end{align*}
where we used Lemma~\ref{lem:bound_Q} for the last inequality.
This completes the proof
\end{proof}
\begin{proof}[\bf Proof of Lemma~\ref{lem:gen2}]
\begin{align*}
\bar{L}_h(\phi)-\ex_{\bx\sim\rho_h}\hat{m}_\bx(\phi)&=\ex_{\mu\sim\eta}\ex_{\bx\sim\mu^n}\bar{m}_{\mu,\bx}(\phi)-\ex_{\mu\sim\eta}\ex_{\bx\sim\mu^n}\hat{m}_\bx(\phi)\\
&=\ex_{\mu\sim\eta}\ex_{\bx\sim\mu^n}\max\limits_{g\in\cG}\ell^\mu_h(\phi,\hat{f}^\phi_\bx,g)-\max\limits_{g\in\cG}\hat{\ell}^\bx_h(\phi,\hat{f}^\phi_\bx,g)\\
&\le^{(a)}\ex_{\mu\sim\eta}\ex_{\bx\sim\mu^n}\max\limits_{g\in\cG}[\ell^\mu_h(\phi,\hat{f}^\phi_\bx,g)-\hat{\ell}^\bx_h(\phi,\hat{f}^\phi_\bx,g)]\\
&\le\ex_{\mu\sim\eta}\ex_{\bx\sim\mu^n}\max\limits_{f\in\cF}\max\limits_{g\in\cG}[\ell^\mu_h(\phi,f,g)-\hat{\ell}^\bx_h(\phi,f,g)]\\
&\le^{(b)}\epsilon_{gen,h}(\cF,\cG)
\end{align*}
where $(a)$ follows by observing that $\max_{g}[\theta(g)-\theta'(g)]\le\max_g\theta(g) - \max_g\theta'(g)$ for any functions $\theta,\theta'$, for the first inequality and $(b)$ follows from Lemma~\ref{lem:gen1}.
\end{proof}
\begin{proof}[\bf Proof of Lemma~\ref{lem:gen3}]
We will be using Slepian's lemma
\noindent Using Theorem 8(ii) from \cite{maurer2016benefit}, we get that
\begin{align}\label{eq:gen_phi}
\sup\limits_{\phi\in\Phi}\left[\ex_{\bx\sim\rho_h}\hat{m}_{\bx}(\phi)-\frac{1}{T}\sum\limits_{i}\hat{m}_{\bx^{(i)}}(\phi)\right]\le\frac{\sqrt{2\pi}}{T}G(S)+\sqrt{\frac{9\ln(2/\delta)}{2T}}
\end{align}
where $S=\{(\hat{m}(\phi)_{\bx_1},\dots,\hat{m}(\phi)_{\bx_T}):\phi\in\Phi\}$.
We bound the Gaussian average of $S$ using Slepian's lemma.
Define two Gaussian processes indexed by $\Phi$ as
\begin{align*}
X_\phi=\sum\limits_i\gamma_i\hat{m}(\phi)_{\bx^{(i)}}\ \text{and}\ Y_\phi=\frac{2K}{\sqrt{n}}\sum\limits_i\gamma_{ijk}\phi(s^i_j)_k
\end{align*}
For $\bx=\{(s_j,a_j,\tilde{s}_j,\bar{s}_j)\}$, consider 2 representations $\phi$ and $\phi'$,
\begin{align*}
(\hat{m}(\phi)_{\bx}-\hat{m}(\phi')_{\bx})^2&=(\min\limits_{f\in\cF}\max\limits_{g\in\cG}\hat{\ell}^{\bx}_h(\phi,f,g)- \min\limits_{f\in\cF}\max\limits_{g\in\cG}\hat{\ell}^{\bx}_h(\phi',f,g))^2\\
&\le(\sup\limits_{f\in\cF,g\in\cG}|\hat{\ell}^{\bx}_h(\phi,f,g)-\hat{\ell}^{\bx}_h(\phi',f,g)|)^2\\
&=\left(\sup\limits_{f\in\cF,g\in\cG}\left|\frac{1}{n}\sum\limits_j [K\pi^{\phi,f}(a_j|s_j)g(\tilde{s}_j)-K\pi^{\phi',f}(a_j|s_j)g(\tilde{s}_j)]\right|\right)^2\\
&= K^2\left(\sup\limits_{f\in\cF,g\in\cG}\left|\frac{1}{n}\sum\limits_j \left(f(\phi(s_j))_{a_j}-f(\phi'(s_j))_{a_j}\right)g(\tilde{s}_j)\right|\right)^2\\
&\le\frac{K^2}{n}\sup\limits_{f\in\cF}\sum\limits_j\left(f(\phi(s_j))_{a_j}-f(\phi'(s_j))_{a_j}\right)^2\\
&\le\frac{4K^2}{n}\sum\limits_{j}|\phi(s_j)-\phi'(s_j)|^2=\frac{4K^2}{n}\sum\limits_{j,k}(\phi(s_j)_k-\phi'(s_j)_k)^2
\end{align*}
where we prove the first inequality later, second inequality comes from $g$ being upper bounded by 1 and by Cauchy-Schwartz inequality, third inequality comes from the 2-lipschitzness of $f$.
\begin{align*}
\expect(X_\phi-X_{\phi'})&=\sum\limits_i(\hat{m}(\phi)_{\bx^{(i)}}-\hat{m}(\phi')_{\bx^{(i)}})^2\\
&\le\frac{4K^2}{n}\sum\limits_{i,j,k}(\phi(s^i_j)_k-\phi'(s^i_j)_k)^2=\expect(Y_\phi-Y_{\phi'})^2
\end{align*}
Thus by Slepian's lemma, we get
\begin{align*}
G(S)=\expect\sup\limits_{\phi\in\Phi} X_\phi\le\expect\sup\limits_{\phi\in\Phi} Y_\phi=\frac{2K}{\sqrt{n}}G(\Phi(\{s^i_j\}))
\end{align*}
Plugging this into Equation~\ref{eq:gen_phi} completes the proof.
To prove the first inequality above, notice that
\begin{align*}
\min\limits_{f\in\cF}\max\limits_{g\in\cG}\hat{\ell}^{\bx}_h(\phi,f,g)-\min\limits_{f\in\cF}\max\limits_{g\in\cG}\hat{\ell}^{\bx}_h(\phi',f,g)&=\hat{\ell}^{\bx}_h(\phi,f,g)-\hat{\ell}^{\bx}_h(\phi',f',g')\\
&\le\hat{\ell}^{\bx}_h(\phi,f',g'')-\hat{\ell}^{\bx}_h(\phi',f',g')\\
&\le\hat{\ell}^{\bx}_h(\phi,f',g'')-\hat{\ell}^{\bx}_h(\phi',f',g'')\\
&\le\sup\limits_{f\in\cF,g\in\cG}|\hat{\ell}^{\bx}_h(\phi,f,g)-\hat{\ell}^{\bx}_h(\phi',f,g)|
\end{align*}
By symmetry, we also get that $\min\limits_{f\in\cF}\max\limits_{g\in\cG}\hat{\ell}^{\bx}_h(\phi,f,g)-\min\limits_{f\in\cF}\max\limits_{g\in\cG}\hat{\ell}^{\bx}_h(\phi',f,g)\le \sup\limits_{f\in\cF,g\in\cG}|\hat{\ell}^{\bx}_h(\phi,f,g)-\hat{\ell}^{\bx}_h(\phi',f,g)|$.
\end{proof}
\begin{proof}[Proof of Lemma~\ref{lem:delta}]
Let $\bar{g}=\arg\max\limits_{g\in\cG}\left(\ex_{s\sim\nu_h^\pi}\Gamma_h^{\bpi}g(s)-\ex_{s\sim\nu_h^*}\Gamma_h^{\bpi}g(s)\right)$ and $g'=\arg\min\limits_{g\in\cG}|g-\Gamma_h^\bpi\bar{g}|_{(\nu_h^\pi+\nu_h^*)/2}$.
\begin{align*}
\max\limits_{g\in\cG}&\left(\ex_{s\sim\nu_h^\pi}\Gamma_h^{\bpi}g(s)-\ex_{s\sim\nu_h^*}\Gamma_h^{\bpi}g(s)\right)=\ex_{s\sim\nu_h^\pi}\Gamma_h^{\bpi}\bar{g}(s)-\ex_{s\sim\nu_h^*}\Gamma_h^{\bpi}\bar{g}(s)\\
&=\ex_{s\sim\nu_h^\pi}g'(s)-\ex_{s\sim\nu_h^*}g'(s)+\ex_{s\sim\nu_h^\pi}[\Gamma_h^{\bpi}\bar{g}(s)-g'(s)]+\ex_{s\sim\nu_h^*}[g'(s)-\Gamma_h^{\bpi}\bar{g}(s)]\\
&\le|\ex_{s\sim\nu_h^\pi}g'(s)-\ex_{s\sim\nu_h^*}g'(s)|+\ex_{s\sim\nu_h^\pi}[|g'(s)-\Gamma_h^{\bpi}\bar{g}(s)|]+\ex_{s\sim\nu_h^*}[|g'(s)-\Gamma_h^{\bpi}\bar{g}(s)|]\\
&\le\max_{g\in\cG}|\ex_{s\sim\nu_h^\pi}g(s)-\ex_{s\sim\nu_h^*}g(s)|+2\ex_{s\sim(\nu_h^\pi+\nu_h^*)/2}\left[|g'(s)-\Gamma_h^{\bpi}\bar{g}(s)]|\right]\\
&\le\Delta_h+2\epsilon_{be}^\pi
\end{align*}
\end{proof} 
\section{Data Set Collection Details}
\label{sec:data_collect}
\subsection{Dataset from trajectories}\label{subsec:dataset_bc}
Given $n$ expert trajectories for a task $\mu$, for each trajectory $\tau=(s_1,a_1,\dots,s_H,a_H)$ we can sample an $h\sim\gU([H])$ and select the pair $(s_h,a_h)$ from that trajectory\footnote{In practice one can use all pairs from all trajectories, even though the samples are not strictly i.i.d.}.
This gives us $n$ i.i.d. pairs $\{(s_j,a_j)\}_{j=1}^n$ for the task $\mu$.
We collect this for $T$ tasks and get datasets $\bx^{(1)},\dots,\bx^{(T)}$.

\subsection{Dataset from trajectories and interaction}\label{subsec:dataset_oo}
Given $2n$ expert trajectories for a task $\mu$, we use first $n$ trajectories to get independent samples from the distributions $\nu^*_{1,\mu},\dots,\nu^*_{H,\mu}$ respectively for the $\bar{s}$ states in the dataset.
Using the next $n$ trajectories, we get samples from $\nu^*_{0,\mu},\dots,\nu^*_{H-1,\mu}$ for the $s$ states in the dataset, and for each such state we uniformly sample an action $a$ from $\actions$ and then get a state $\tilde{s}$ from $P_{s,a}$ by resetting the environment to $s$ and playing action $a$.
We collect this for $T$ tasks and get datasets $\bX^{(i)}=\{\bx^{(i)}_1,\dots,\bx^{(i)}_H\}$ for every $i\in[T]$, where each dataset $\bx^{(i)}_h$ a set of $n$ tuples obtained level $h$.
Rearranging, we can construct the datasets $\bX_h=\{\bx^{(1)}_h,\dots,\bx^{(T)}_h\}$. 
\section{Experiment Details}
\label{sec:exp_details}
For the policy optimization experiments, we use 5 random seeds to evaluate our algorithm.
We show the results for 1 test environment as the results for other test environments are also showing the algorithm works but the magnitude of reward might be different, so we do not average the numbers over different test environments.
\paragraph{Environment Setup}
We first describe the construction of the NoisyCombinationLock environment.
The state space is $\R^{50}$. Each state $s$ is in the form of
$[s_\text{noise}, s_\text{index}, s_\text{real}]$,
where $s_\text{noise} \in \R^{10}$ is sampled from $\mathcal{N}(\mathbf{0}, w \mathbf{I})$,
$s_\text{index} \in \R^{10}$ is an one-hot vector indicating the current step
and $s_\text{real} \in \R^{10}$ is sampled from
$\mathcal{N}(\mathbf{0}, w \mathbf{I})$.
The constant $w$ is set to $\sqrt{0.05}$ such that $\|s_\text{real}\|$ has expected norm of 1.
The action space is $\{-1, 1\}$.
Each MDP is parametrized by a vector $\textbf{c}^* \in \{-1, 1\}^{20}$, which determines the optimal action sequence.
We use different $\textbf{a}^*$ to define different
environments.
The transition model is that: Let  $s = [s_\text{noise}, s_\text{index}, s_\text{real}]$ be the current state and $a$ be the action. If $s_\text{index} = e_i$ for some $i$ and
$\textbf{c}^*_i a s_{\text{real} i} > 0$, then $s_\text{index}' = e_{i + 1}$.
Otherwise $s_\text{index}'$ will be all zero.
$s'_\text{noise}$ and $s'_\text{real}$ will always be sampled from the Gaussian distribution.
The reward is 1 if and only if $s_\text{index}$ is not a zero vector, otherwise it's 0.
Note that once $s_\text{index}$ is all zero, it will not change and the reward will
always be $0$. The maximum horiozn is set to $20$ and therefore, the optimal
policy has return 20. The initial $s_\text{index}$ is always $e_1$.

The SwimmerVelocity environment is similar to goal velocity experiments in \citep{finn2017model},
and is based on the
Swimmer environment in OpenAI Gym \citep{Gym}. The only difference is the reward function,
which is now defined by $r(s) = |v - v_\text{goal}|$, where $v$ is the current velocity of the agent and $v_\text{goal}$ is the goal velocity.
The state space is still $\R^8$. The original action space in Swimmer is $\R^2$,
and we discretize the action space, such that each entry can be only one of $\{-1,
-0.5, 0, 0.5, 1\}$. We also reduce the maximum horizon from 1000 to 50.

\paragraph{Experts} For NoisyCombinationLock, the demonstrations are generated by the optimal policy, which has access to the hidden vector $\textbf{c}^*$. For SwimmerVelocity, we trained the experts for 1 million steps by PPO \citep{PPO} to make sure it converges with code from \citet{baselines}.

\paragraph{Architecture} For all of our experiments in Figure~\ref{fig:theory} and \ref{fig:rl}, the function $\phi$ is parametrized by $\phi(x) = \relu{Wx + b}$ where $W, b$ are learnable parameters of a linear layer and $\relu{\cdot}$ is the ReLU activation function. However, the number of hidden units might vary. Note that in the experiments of verifying our theory (Figure~\ref{fig:theory}), we train a policy (and the representation) at each step so the dimension of representation is smaller.
See Table~\ref{table:hyperparameters} for our choice of hyperparameters.

\begin{table}[h!]
    \centering
    \begin{tabular}{|c c c c|}
     \hline
      & BC (Figure~\ref{fig:theory}, left two) & OA (Figure~\ref{fig:theory}, right two) & RL (Figure~\ref{fig:rl}) \\
     \hline
     NoisyCombinationLock & 5 & 5 & 40 \\
     SwimmerVelocity & 100 & 20 & 100 \\
     \hline
    \end{tabular}
    \caption{Number of hidden units for different experiments. }
    \label{table:hyperparameters}
\end{table}

\paragraph{Optimization} All optimization, including training $\phi, \pi$ and behavior cloning baseline, is done by Adam \citep{kingma2014adam} with learning rate 0.001 until it converges, except NoisyCombinationLock in policy optimization experiments in Figure~\ref{fig:rl} where we use learning rate 0.01 for faster convergence.
To solve Equation \eqref{eq:L_hat_bc} and \eqref{eq:oa_loss}, we build a joint loss over $\phi$ and all $f$'s in each task,
\begin{align}\label{eq:appendix-exp}
    \mathcal{L}(\phi, f_1, \dots, f_T) = \frac{1}{nT}\sum\limits_{t=1}^{T} \sum\limits_{j=1}^n-\log(\pi^{\phi, f_t}(s^t_j)_{a^t_j}).
\end{align}
Then we minimize $\mathcal{L}(\phi, f_1, \dots, f_T)$ and obtain the optimal $\phi$.

\end{document}